\newcommand{\A}{\mathbb{A}}
\newcommand{\B}{\mathcal{B}}
\newcommand{\E}{\mathbb{E}}
\newcommand{\bw}{\mathbf{w}}
\newcommand{\bzero}{\mathbf{0}}
\newcommand{\bone}{\mathbf{1}}
\newcommand{\bz}{\mathbf{z}}
\newcommand{\bq}{\mathbf{q}}
\newcommand{\bhw}{\hat{\mathbf{w}}}
\newcommand{\bv}{\mathbf{v}}
\newcommand{\bu}{\mathbf{u}}
\newcommand{\bg}{\mathbf{g}}
\newcommand{\bx}{\mathbf{x}}
\newcommand{\bX}{\mathbf{X}}
\newcommand{\be}{\mathbf{e}}
\newcommand{\bs}{\mathbf{s}}
\newcommand{\cD}{\mathcal{D}}
\newcommand{\cQ}{\mathcal{Q}}
\newcommand{\tcQ}{\tilde{\cQ}}
\newcommand{\cN}{\mathcal{N}}
\newcommand{\cB}{\mathcal{B}}
\newcommand{\cO}{\mathcal{O}}
\newcommand{\cM}{\mathcal{M}}
\newcommand{\bB}{\mathbb{B}}
\newcommand{\bP}{\mathbb{P}}
\newcommand{\N}{\mathbb{N}}
\newcommand{\R}{\mathbb{R}}
\newcommand{\cP}{\mathcal{P}}
\newcommand{\tcP}{\tilde{\cP}}
\newcommand{\tcB}{\tilde{\cB}}
\newcommand{\cX}{\mathcal{X}}
\newcommand{\cU}{\mathcal{U}}
\newcommand{\cC}{\mathcal{C}}
\newcommand{\cR}{\mathcal{R}}
\newcommand{\eps}{\epsilon}
\newcommand{\gammagap}{\gamma_{\text{gap}}}
\DeclareMathOperator{\sgn}{sgn}
\DeclareMathOperator{\err}{err}
\DeclareMathOperator{\poly}{poly}
\DeclareMathOperator{\supp}{supp}
\DeclareMathOperator{\Beta}{Beta}
\DeclareMathOperator{\Lap}{Lap}
\DeclareMathOperator{\argmax}{argmax}
\newtheorem{theorem}{Theorem}
\newtheorem{definition}[theorem]{Definition}
\newtheorem{lemma}[theorem]{Lemma}
\newtheorem{claim}[theorem]{Claim}
\newtheorem{observation}[theorem]{Observation}
\newcommand{\1}{\mathds{1}}
\begin{document}

%

%

\twocolumn[

\aistatstitle{Robust and Private Learning of Halfspaces}

\aistatsauthor{Badih Ghazi \And
  Ravi Kumar \And
  Pasin Manurangsi \And
  Thao Nguyen}

\aistatsaddress{Google Research \\ Mountain View, CA \\
\texttt{ \{badihghazi, ravi.k53\}@gmail.com, \{pasin, thaotn\}@google.com }} 
]

\begin{abstract}
In this work, we study the trade-off between differential privacy and adversarial robustness under $L_2$-perturbations in the context of learning halfspaces. We prove nearly tight bounds on the sample complexity of robust private learning of halfspaces for a large regime of parameters.  A highlight of our results is that  robust \emph{and} private learning is harder than robust \emph{or} private learning alone.  We complement our theoretical analysis with experimental results on the MNIST and USPS datasets, for a learning algorithm that is both differentially private and adversarially robust.
\end{abstract}

\section{Introduction}

In this work, we study the interplay between two topics at the core of AI ethics and safety: privacy and robustness.

As modern machine learning models are trained on potentially sensitive data, there has been a tremendous interest in privacy-preserving training methods. \emph{Differential privacy (DP)}~\citep{DworkMNS06,DworkKMMN06} has emerged as the gold standard for rigorously tracking the privacy leakage of algorithms in general~\citep[see, e.g.,][and the references therein]{dwork2014algorithmic,vadhan2017complexity}, and machine learning models in particular~\citep[e.g.,][]{abadi2016deep}, resulting in several practical deployments in recent years~\citep[e.g.,][]{erlingsson2014rappor,CNET2014Google, greenberg2016apple, dp2017learning, ding2017collecting, abowd2018us}.

Another vulnerability of machine learning models that has also been widely studied recently is with respect to adversarial manipulations of their inputs at test time, with the intention of causing classification errors~\citep[e.g.,][]{dalvi2004adversarial,biggio2013evasion,szegedy2013intriguing,goodfellow2014explaining,papernot2016transferability}. Numerous methods have been proposed with the goal of training models that are robust to such adversarial attacks~\citep[e.g.,][]{madry2017towards,gowal2018effectiveness, gowal2019alternative, schott2018towards}, which in turn has led to new attacks being devised in order to fool these models~\citep{athalye2018obfuscated, carlini2018audio,sharma2017attacking}. See~\citep{kolter2018adversarial} for a recent tutorial on this topic.

Some recent work has suggested incorporating mechanisms from DP into neural network training to enhance 
adversarial robustness~\citep{lecuyer2019certified, phan2019scalable, phan2019heterogeneous}. Given this existing interplay between DP and robustness, we seek to answer the following natural question: \\

\begin{minipage}{0.5\textwidth}
\vspace{-0.5cm}
\begin{center}
\textsl{Is achieving privacy and adversarial robustness \\ harder than achieving either criterion alone?}
\end{center}
\end{minipage}

Recent empirical work has provided mixed response to the question \citep{song2019privacy, song2019membership, hayes2020provable}, reporting the success rate of membership inference attacks as a heuristic measure of privacy. Instead, using theoretical analysis and the strict guarantees offered by DP, 
we formally investigate this question in the classic setting of halfspace learning, and arrive at a near-complete picture.

\paragraph{Background.}

In order to present our results, we start by recalling some notions from robust learning in the PAC model.
Let $\mathcal{C} \subseteq \{0, 1\}^{\cX}$ be a (Boolean) hypothesis class on an instance space $\mathcal{X} \subseteq \R^d$. 
A \emph{perturbation} is defined by a function $\bP: \mathcal{X} \to 2^{\mathcal{X}}$, where $\bP(x) \subseteq \cX$ denotes the set of allowable perturbed instances starting from an instance $x$. 
The \emph{robust risk} of a hypothesis $h$ with respect to a distribution 
$\mathcal{D}$ on $\cX \times \{ \pm 1\}$ and perturbation $\bP$ is defined as 
$\mathcal{R}_{\bP}(h, \mathcal{D}) = 
\Pr_{(\bx, y) \sim \mathcal{D}}[\exists z \in \bP(\bx), h(\mathbf{z}) \neq y].$
A distribution $\cD$ is said to be \emph{realizable} (with respect to $\cC$ and $\bP$) iff there exists $h^* \in \cC$ such that $\cR_{\bP}(h^*, \cD) = 0$.
In the adversarially robust PAC learning problem, the learner is given i.i.d. samples from a realizable distribution $\mathcal{D}$ on $\mathcal{X} \times \{ \pm 1\}$, and the goal is to output a hypothesis $h: \cX \to \{\pm 1\}$ such that with probability $1 - \xi$ it holds that $\mathcal{R}_{\bP}(h, \mathcal{D}) \leq \alpha$. We refer to $\xi$ as the \emph{failure probability} and $\alpha$ as the \emph{accuracy parameter}. A learner is said to be \emph{proper} if the output hypothesis $h$ belongs to $\cC$; otherwise, it is said to be \emph{improper}.

We focus our study on the concept class of {\em halfspaces}, i.e., $\cC_{\text{halfspaces}} := \{h_{\bw} \mid \bw \in \R^d\}$ where $h_{\bw}(\bx) = \sgn(\left<\bw, \bx\right>)$, in this model with respect to $L_2$ perturbations, i.e., $\bP_\gamma(\bx) := \{ \mathbf{z} \in \mathcal{X}: \|\bz-\bx\|_2 \leq \gamma \}$ for \emph{margin} parameter $\gamma > 0$. We assume throughout that the domain of our functions is bounded in the $d$-dimensional Euclidean unit ball $\bB^d := \{\bx \in \R^d \mid \|\bx\|_2 \leq 1\}$. We also write $\cR_{\gamma}$ as a shorthand for $\cR_{\bP_{\gamma}}$.
%
%
An algorithm is said to be a \emph{$(\gamma, \gamma')$-robust learner} if, for any realizable distribution $\cD$ with respect to $\cC_{\text{halfspaces}}$ and $\bP_{\gamma}$, using a certain number of samples, it outputs a hypothesis $h$ such that w.p. $1 - \xi$, we have $\cR_{\gamma'}(h, \mathcal{D}) \leq \alpha$, where $\alpha, \xi > 0$ are sufficiently smaller than some positive constant.  We are especially interested in the case\footnote{It is necessary to have $\gamma' < \gamma$; when $\gamma = \gamma'$, proper $(\gamma, \gamma')$-robust learning is as hard as general proper learning of halfspace~\citep[see, e.g.,][]{DKM20}, which is impossible with any finite number of samples under DP~\citep{BunNSV15}.} where $\gamma'$ is close to $\gamma$; for simplicity, we use $\gamma' = 0.9\gamma$ as a representative setting throughout.  In robust learning, the main quantities of interest are the \emph{sample complexity}, i.e., the minimum number of samples needed to learn, and the \emph{running time} of the learning algorithm.

We use the standard terminology of DP. Recall that two datasets $\bX$ and $\bX'$ are \emph{neighbors} if $\bX'$ results from adding or removing a single data point from $\bX$.%

\begin{definition}[Differential Privacy (DP)~\citep{DworkMNS06,DworkKMMN06}]
Let $\epsilon, \delta \in \R_{\geq 0}$. A randomized algorithm $\A$ taking as input a dataset is said to be \emph{$(\epsilon, \delta)$-differentially private} (denoted by $(\epsilon, \delta)$-DP) if for any two neighboring datasets $\bX$ and $\bX'$, and for any subset $S$ of outputs of $\A$, it is the case that $\Pr[\A(\bX) \in S] \le e^{\epsilon} \cdot \Pr[\A(\bX') \in S] + \delta$. If $\delta = 0$, $\A$ is said to be \emph{$\epsilon$-differentially private} (denoted by $\epsilon$-DP).
\end{definition}

As usual, $\eps$ should be thought of as a small constant, whereas $\delta$ should be negligible in the dataset size. We refer to the case where $\delta = 0$ as \emph{pure-DP}, and the case where $\delta > 0$ as \emph{approximate-DP}.


\paragraph{Our Results.}

\definecolor{ashgrey}{rgb}{0.7, 0.75, 0.71}

\begin{table*}[t]
    \centering
    \footnotesize
\begin{tabular}{|c|c|c|c|c|c|}
    \cline{3-6}
      \multicolumn{2}{c|}{} & \multicolumn{2}{c|}{Robust} & \multicolumn{2}{c|}{Non-robust}\\
    \cline{3-6}
          \multicolumn{1}{c}{} &
          \multicolumn{1}{c|}{Bounds} & Proper & Improper & Proper & Improper\\ 
         \hline
         Non-private & Tight & \multicolumn{4}{c|}{$\Theta(1/\gamma^2)$} \\
         
         \cline{2-6}
         \hline
         \multirow{2}*{Pure-DP} & Upper & \multicolumn{2}{c|}{\cellcolor{white} $O(d)$~(Theorem~\ref{thm:exp-mech})} & \multicolumn{2}{c|}{$\tilde{O}(1/\gamma^2)$}\\ 
         \cdashline{2-6}
         & Lower & \multicolumn{2}{c|}{
         $\Omega(d)$~(Theorem~\ref{thm:lower-bound})} & $\Omega(1/\gamma^2)$ & $\Omega(1/\gamma^2)$~(Theorem~\ref{thm:lower-bound-non-robust})\\
      \hline
      \multirow{2}*{Approximate-DP} & Upper & \multicolumn{2}{c|}{$\tilde{O}(\sqrt{d}/\gamma)$~$^\dag$} & \multicolumn{2}{c|}{$\tilde{O}(1/\gamma^2)$} \\ 
      \cdashline{2-6}
      & Lower &  $\Omega(\sqrt{d}/\gamma)$~(Theorem~\ref{thm:lower-bound-apx-proper}) &  $\Omega(1/\gamma^2)$~(Theorem~\ref{thm:lower-bound-non-robust}) & \multicolumn{2}{c|}{
      $\Omega(1/\gamma^2)$~(Theorem~\ref{thm:lower-bound-non-robust})}\\
    \hline
    \end{tabular}
     \caption{Trade-offs between privacy and  robustness.  The \emph{robust} column corresponds to $(\gamma, 0.9\gamma)$-robust learners, whereas the \emph{non-robust} column corresponds to $(\gamma, 0)$-robust learners. For simplicity of presentation, we assume $\alpha, \eps, \xi \in (0, 1)$ are sufficiently small constants, $d \geq 1/\gamma^2$, and for approximate-DP lower bounds, that $\delta = o(1/n)$. While  approximate-DP upper bounds (marked with$^\dag$) can already be derived from previous work, we give a faster algorithm (Theorem~\ref{thm:perceptron-apx-DP-simplified}). For DP, known results are from~\citep{NguyenUZ20}; for the non-private case, the results follow, e.g., from~\citep{bartlett2002rademacher,koltchinskii2002empirical}.}
    \label{table:summary}
  \end{table*}

We assume that $\eps \le O(1)$ unless otherwise stated, and that $\gamma, \alpha, \xi > 0$ are sufficiently smaller than some positive constant. We will not state these assumptions explicitly here for simplicity; interested readers may refer to (the first lines of) the proofs for the exact upper bounds that are imposed. 

We first prove that robust learning with pure-DP requires $\Omega(d)$ samples.  
\begin{restatable}{theorem}{lowerbound} \label{thm:lower-bound}
Any $\eps$-DP $(\gamma, 0.9\gamma)$-robust (possibly improper) learner has sample complexity $\Omega(d / \eps)$.  
\end{restatable}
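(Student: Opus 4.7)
The plan is a standard pure-DP packing argument, with the key twist that the disjoint ``success regions'' are defined by the values of the output hypothesis at a fixed set of $d$ anchor points rather than by the identity of the output halfspace. This anchor-based formulation is what makes the argument go through for improper learners, whose outputs can be arbitrary functions $\cX \to \{\pm 1\}$.

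For each sign vector $\bs \in \{\pm 1\}^d$, I would let $\cD_\bs$ be uniform over the $d$ labeled examples $\{(\tfrac{1}{2}\be_i, s_i) : i \in [d]\}$, where $\be_i$ is the $i$-th standard basis vector. In the regime $d \geq 1/\gamma^2$ with $\gamma$ sufficiently small, the unit halfspace $\bw^\star = \bs/\sqrt{d}$ satisfies $s_i\langle \bw^\star, \tfrac{1}{2}\be_i\rangle = 1/(2\sqrt d) \geq \gamma$ on every sample, and $B(\tfrac{1}{2}\be_i, \gamma) \subseteq \bB^d$, so each $\cD_\bs$ is realizable with margin $\gamma$. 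Next, fix a binary code $C \subseteq \{\pm 1\}^d$ of size $2^{\Omega(d)}$ with minimum Hamming distance at least $d/3$, whose existence follows from Gilbert--Varshamov (or a random-code argument).

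The crux is the choice of the success sets. Define $S_\bs := \{h : |\{i \in [d] : h(\tfrac{1}{2}\be_i) \neq s_i\}| \leq \alpha d\}$. Since $\tfrac{1}{2}\be_i \in \bP_{0.9\gamma}(\tfrac{1}{2}\be_i)$ trivially, any $h$ with $\cR_{0.9\gamma}(h, \cD_\bs) \leq \alpha$ must disagree with $s_i$ at the anchor on at most $\alpha d$ coordinates, hence $h \in S_\bs$. Moreover, if $h \in S_\bs \cap S_{\bs'}$ for distinct $\bs, \bs' \in C$, then $\bs$ and $\bs'$ would have to agree on at least $(1-2\alpha)d$ coordinates, contradicting the minimum distance $d/3$ of $C$ for any sufficiently small constant $\alpha$ (e.g.\ $\alpha < 1/6$). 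Thus the $S_\bs$ are pairwise disjoint over $\bs \in C$.

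The final step is a textbook application of group privacy. Using the failure bound $\xi \leq 1/3$, an averaging argument produces for each $\bs \in C$ a particular $n$-sample dataset $X_\bs$ on which $\A$ outputs an element of $S_\bs$ with probability at least $2/3$. Any two size-$n$ datasets differ in at most $2n$ add/remove steps, so $\eps$-DP and group privacy give $\Pr[\A(X_\bs) \in S_{\bs'}] \geq e^{-2n\eps}\cdot (2/3)$; summing over $\bs' \in C$ and using pairwise disjointness of the $S_{\bs'}$ yields $1 \geq \tfrac{2}{3}|C|\,e^{-2n\eps}$, so $n = \Omega(\log|C|/\eps) = \Omega(d/\eps)$. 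The only real obstacle is the anchor-based definition of $S_\bs$: switching from ``output equals $h_{\bs/\sqrt d}$'' to ``output agrees with $\bs$ at the anchor points'' is precisely what extends the standard proper-learning packing lower bound for halfspaces to cover improper robust learners as well; the rest is a margin-realizability check plus a textbook group-privacy calculation.
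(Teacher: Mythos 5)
There is a genuine gap, and it sits exactly in the regime the theorem is meant to cover. Your hard distributions place the $d$ anchors at $\tfrac{1}{2}\be_i$, so any unit vector $\bw^\star$ realizing $\cD_\bs$ must have margin $|\langle \bw^\star, \tfrac{1}{2}\be_i\rangle| = \tfrac{1}{2\sqrt d}$ at best on each anchor; in fact, since $\sum_{i\in[d]} \langle \bw^\star, \be_i\rangle^2 \le \|\bw^\star\|^2 \le 1$, \emph{no} halfspace can classify all $d$ orthogonal anchors with margin exceeding $\tfrac{1}{2\sqrt d}$. Hence $\cD_\bs$ is realizable with margin $\gamma$ only when $\gamma \le \tfrac{1}{2\sqrt d}$, i.e.\ $d \le \tfrac{1}{4\gamma^2}$ --- the opposite of what you wrote ("in the regime $d \ge 1/\gamma^2$ ... $1/(2\sqrt d) \ge \gamma$" has the inequality backwards). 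This is not a cosmetic slip: a $(\gamma,0.9\gamma)$-robust learner is only required to succeed on distributions realizable with margin $\gamma$, so when $d \gg 1/\gamma^2$ your instances impose no constraint on the learner at all, and the packing argument collapses. But $d \gg 1/\gamma^2$ is precisely the regime in which Theorem~\ref{thm:lower-bound} is interesting (Table~\ref{table:summary} assumes $d \ge 1/\gamma^2$, and the separation from non-private robust and private non-robust learning is claimed "whenever $d \gg 1/\gamma^2$"). Your construction proves $\Omega(d/\eps)$ only when $d = O(1/\gamma^2)$, where it is subsumed by the $\Omega(1/(\eps\gamma^2))$ bound of Theorem~\ref{thm:lower-bound-non-robust}, and cannot be repaired within the axis-aligned-anchor framework because of the Parseval obstruction above.

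The paper's construction is designed to sidestep exactly this: instead of $d$ orthogonal anchors per distribution, it takes $K = 2^{\Omega(d)}$ nearly orthogonal unit directions $\bw^{(1)},\dots,\bw^{(K)}$ (with $|\langle \bw^{(i)},\bw^{(j)}\rangle| < 0.01$) and lets $\cD^{(i)}$ be uniform on just the two points $(1.01\gamma\,\bw^{(i)},+1)$ and $(-1.01\gamma\,\bw^{(i)},-1)$, which is realizable with margin $\gamma$ for every $\gamma \le 0.99$ regardless of how $d$ compares to $1/\gamma^2$. Disjointness of the success sets is then proved not through anchor agreement but through the robustness requirement itself: for $i \ne j$ the balls $\bP_{0.9\gamma}(1.01\gamma\,\bw^{(i)})$ and $\bP_{0.9\gamma}(-1.01\gamma\,\bw^{(j)})$ intersect, so no hypothesis (proper or improper) can have small $\cR_{0.9\gamma}$ on both $\cD^{(i)}$ and $\cD^{(j)}$. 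Your final packing/group-privacy step and your instinct to define success sets by the hypothesis's behavior at specified points (to handle improper learners) are both fine; what is missing is a family of margin-$\gamma$-realizable distributions, $2^{\Omega(d)}$ many, whose success sets remain disjoint when $\gamma \gg 1/\sqrt d$ --- and that is the actual content of the paper's proof.
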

In the private but non-robust setting (i.e., for an $\eps$-DP $(\gamma, 0)$-robust learner\footnote{This means that the output hypothesis only needs to have a small classification error, but may have a large robust risk.}), \cite{NguyenUZ20} showed that $O(1/\gamma^2)$ samples suffice. Together with earlier known results showing that $(\gamma, 0.9\gamma)$-robust learning (without privacy) only requires $O(1/\gamma^2)$ samples~\citep[e.g.,][]{bartlett2002rademacher,koltchinskii2002empirical}, our result gives a separation between robust private learning and private learning alone or robust learning alone, whenever $d \gg 1/\gamma^2$.

For the case of approximate-DP, we establish a lower bound of $\Omega(\min\{\sqrt{d}/\gamma, d\})$, which holds only against \emph{proper} learners. As for Theorem~\ref{thm:lower-bound}  in the context of pure-DP, this result implies a similar separation in the approximate-DP proper learning setting.

\begin{restatable}{theorem}{lowerboundapx}
\label{thm:lower-bound-apx-proper}
Let $\eps < 1$. Any $(\eps, o(1/n))$-DP $(\gamma, 0.9\gamma)$-robust proper learner has sample complexity $n = \Omega(\min\{\sqrt{d}/\gamma, d\})$.
\end{restatable}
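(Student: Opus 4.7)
The plan is to reduce the lower bound to a standard fingerprinting-style sample-complexity bound for privately estimating a high-dimensional mean: any $(\eps, o(1/n))$-DP algorithm that takes $n$ i.i.d.\ samples from a (near-)product distribution on $\R^d$ and outputs an $L_2$-approximation of the mean to accuracy $\alpha$ requires $n = \Omega(\sqrt{d}/(\eps \alpha))$, via the fingerprinting-codes machinery of Bun--Ullman--Vadhan, Steinke--Ullman, and Kamath--Li--Singhal--Ullman. Setting $\alpha = \Theta(\gamma)$, and restricting the effective dimension to $\min\{d, \Theta(1/\gamma^2)\}$ where appropriate, should yield the target $\Omega(\min\{\sqrt{d}/\gamma, d\})$.

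To execute the reduction, I will construct a family of hard realizable halfspace instances indexed by a random unit-norm target $w^* \in S^{d-1}$ drawn from a high-entropy prior---for concreteness, $w^* \sim \mathrm{Unif}(\{\pm 1/\sqrt{d}\}^d)$, or a rotated embedding of a $\min\{d, 1/\gamma^2\}$-dimensional hypercube into $\R^d$. The data distribution $\cD_{w^*}$ will be supported near the margin hyperplane $\{x \in \bB^d : \langle w^*, x\rangle = \gamma\}$, with label $+1$; a natural choice is $x = \gamma w^* + \sqrt{1 - \gamma^2}\, v$ for $v$ uniform on the unit sphere in $(w^*)^\perp$. This makes $\cD_{w^*}$ realizable with margin exactly $\gamma$, while the samples carry enough information about $w^*$ for the fingerprinting attack to extract $\Omega(\sqrt{d})$-strong per-sample evidence.

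The crucial geometric step is to show that a proper $(\gamma, 0.9\gamma)$-robust learner's output $\hat{w}$ must be $L_2$-close to $w^*$ at scale $O(\gamma)$. Because $\hat{w}$ classifies $\ge 1 - \alpha$ fraction of $\cD_{w^*}$ with margin $\ge 0.9\gamma$, and the samples densely cover the $\gamma$-margin slab in directions orthogonal to $w^*$, the halfspace $\hat{w}$ cannot deviate from $w^*$ by more than an angular amount of $O(\gamma)$. This converts the robust learner into an approximate mean estimator of $w^*$, to which the fingerprinting lower bound then applies, yielding $n = \Omega(\sqrt{d}/\gamma)$ in the large-$d$ regime and $n = \Omega(d)$ in the small-$d$ regime (where the ambient dimension, rather than the accuracy, is the binding constraint that forces the $\min$ in the final bound).

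The main obstacle I anticipate is this geometric step: translating ``margin $0.9\gamma$ on a $(1-\alpha)$ fraction of $\cD_{w^*}$'' into a quantitative $L_2$-closeness guarantee $\|\hat{w} - w^*\|_2 = O(\gamma)$ with a constant strong enough for the fingerprinting bound to be tight. This will require a volume/pigeonhole argument in $(w^*)^\perp$ to show that any $\hat{w}$ at larger angular distance must misclassify a non-negligible mass of $\cD_{w^*}$. A second delicate point is choosing the prior on $w^*$ so that (a) the hard-instance family is compatible with the fingerprinting attack (typically requiring a product-like structure), and (b) the instances remain realizable with $\|w^*\|_2 = 1$ and the margin constraint across the full $\min\{\sqrt{d}/\gamma, d\}$ range, possibly handling the small-$d$ and large-$d$ regimes by slightly different embeddings.
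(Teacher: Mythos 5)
Your reduction target (private mean estimation via fingerprinting) is the right family of tools, and your instinct that properness is what lets you extract a vector estimate from the learner matches the paper. But the crucial geometric step, as stated, is false in exactly the regime that matters. Take your construction $x = \gamma w^* + \sqrt{1-\gamma^2}\, v$ with $v$ uniform on the unit sphere of $(w^*)^{\perp}$, and suppose $d \gg 1/\gamma^2$. Write $\hat{w} = c\, w^* + s\, u$ with $u \perp w^*$. By measure concentration, $\langle u, v\rangle = O(\sqrt{\log(1/\alpha)/d})$ except with probability $\alpha$, and this is $\ll \gamma$ when $d \gg 1/\gamma^2$. So any $\hat{w}$ with, say, $c \geq 0.95$ already attains margin $0.9\gamma$ on all but an $\alpha$-fraction of the distribution: the learner is only forced to \emph{constant} accuracy in estimating $w^*$, not accuracy $O(\gamma)$, and your reduction then yields only $\Omega(\sqrt{d})$ rather than $\Omega(\sqrt{d}/\gamma)$. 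This is not fixable by choosing a cleverer single-scale distribution: to force $s = O(\gamma)$ you would need, for each of $2^{\Omega(d)}$ nearly orthogonal directions $u$, an $\alpha$-mass of points whose orthogonal component has constant negative correlation with $u$, but each unit-norm point can have constant correlation with only $O(1)$ such directions, so the required total mass exceeds $1$. Separately, your slab distribution does not have the product structure that the Steinke--Ullman/BUV fingerprinting theorems need; you flag this, but it is a real obstruction for the construction as proposed (and your small-$\gamma$/"$\min$ with $d$" branch is left essentially unargued).

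The paper's proof gets around both issues differently. First, it proves only a constant-margin statement: starting from the product distribution $\cB_{\bq}$ on $\{\pm 1/\sqrt{d}\}^d$ of the Steinke--Ullman attribute-mean bound, it conditions on the event $\langle \bq', \bx\rangle \geq 0.01$ to make the instance realizable with margin $\Omega(1)$, and shows this conditioning changes the distribution by only $o(1/d)$ in total variation (this is how the product-structure problem is handled), concluding that a proper learner with constant margin needs $\Omega(\sqrt{d})$ samples because its output vector would otherwise correlate too well with $\bq - 0.5\cdot\bone$. Second --- and this is the idea missing from your proposal --- it amplifies to small $\gamma$ by a direct-sum/embedding construction: it packs $T^2 = \Theta(1/\gamma^2)$ independent constant-margin instances into orthogonal blocks of dimension $d' = \Theta(\gamma^2 d)$, takes the uniform mixture (realizable with margin $\gamma$ by averaging the block halfspaces), and notes that each sample is informative about only one block, so a learner for the mixture with $n$ samples yields a constant-margin learner per block with $n/T^2$ expected samples; the per-block $\Omega(\sqrt{d'})$ bound then forces $n = \Omega(T^2\sqrt{d'}) = \Omega(\sqrt{d}/\gamma)$. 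The $\Omega(d)$ branch for $\gamma \leq 1/\sqrt{d}$ is obtained by observing that a $(\gamma, 0.9\gamma)$-robust learner is in particular a $(1/\sqrt{d}, 0)$-robust learner and invoking the separate non-robust lower bound (Theorem~\ref{thm:lower-bound-non-robust}). Without some analogue of this block-embedding (or another mechanism that makes the margin scale, rather than the ambient dimension, the binding constraint), your plan cannot produce the $1/\gamma$ factor.
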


Our proof technique can also be used to improve the lower bound for DP $(\gamma, 0)$-robust learning. Specifically, \cite{NguyenUZ20} show an $\Omega(1/\gamma^2)$ lower bound for \emph{proper} $(\gamma, 0)$-robust learners with \emph{pure}-DP.  We extend it to hold even for \emph{improper} $(\gamma, 0)$-robust learners with \emph{approximate}-DP:

\begin{restatable}{theorem}{lbnr}
\label{thm:lower-bound-non-robust}
For any $\eps > 0$, there exists $\delta > 0$ such that any $(\eps, \delta)$-DP $(\gamma, 0)$-robust (possibly improper) learner has sample complexity $\Omega\left(\frac{1}{\eps\gamma^2}\right)$. Moreover, this holds even when $d = O(1 / \gamma^2)$.
\end{restatable}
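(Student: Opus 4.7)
My approach is a packing lower bound that works against \emph{improper} learners under approximate-DP. The construction: set $d_0 := \lfloor 1/(4\gamma^2) \rfloor$ (working inside the first $d_0$ coordinates of $\R^d$ covers the ``moreover'' clause), and for each sign pattern $\sigma \in \{\pm 1\}^{d_0}$ define $\cD_\sigma$ to be uniform over the labeled basis points $\{(\be_j, \sigma_j)\}_{j=1}^{d_0} \subseteq \bB^{d_0} \times \{\pm 1\}$. The unit halfspace $\bw_\sigma := \sigma/\sqrt{d_0}$ satisfies $\sigma_j \langle \bw_\sigma, \be_j\rangle = 1/\sqrt{d_0} > \gamma$, so each $\cD_\sigma$ is realizable with margin $\gamma$. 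By Gilbert--Varshamov (or a random-coding union bound) I pick a sub-code $\Sigma \subseteq \{\pm 1\}^{d_0}$ of size $N = 2^{\Omega(d_0)}$ whose pairwise Hamming distances are all at least $d_0/3$.

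\textbf{Disjointness and group privacy.} Let $G_\sigma := \{h : \cR_0(h,\cD_\sigma) \leq \alpha\}$. Since $\cD_\sigma$ is supported on $d_0$ points, $h \in G_\sigma$ means $h$ agrees with $\sigma$ on at least $(1-\alpha) d_0$ coordinates; if the same $h$ also lay in $G_{\sigma'}$ then $\sigma$ and $\sigma'$ would agree on at least $(1-2\alpha) d_0$ coordinates, contradicting the Hamming-distance bound for any $\alpha < 1/3$. Hence $\{G_\sigma\}_{\sigma \in \Sigma}$ is pairwise disjoint. Correctness of $\A$ gives $\Pr[\A(\cD_\sigma^n) \in G_\sigma] \geq 1-\xi$. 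Iterating $(\eps,\delta)$-DP along a length-$2n$ add/remove chain from a sample of $\cD_\sigma^n$ to an arbitrary fixed dataset $D'$, and averaging, yields
\[
\Pr[\A(D') \in G_\sigma] \;\geq\; e^{-2n\eps}(1-\xi) \;-\; \frac{\delta}{e^\eps - 1}.
\]
Summing over $\sigma \in \Sigma$ and using the disjointness of the $G_\sigma$'s (so that $\sum_\sigma \Pr[\A(D') \in G_\sigma] \leq 1$) gives $N\bigl(e^{-2n\eps}(1-\xi) - \delta/(e^\eps-1)\bigr) \leq 1$.

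\textbf{Finishing the bound and main obstacle.} Choosing $\delta := (e^\eps - 1)/(4N)$, which is permissible since the theorem requires only the existence of a single $\delta > 0$, makes the subtractive term at most $(1-\xi)/2$, leaving $e^{2n\eps} \geq \Omega(N)$, and so $n = \Omega(\log N / \eps) = \Omega(d_0/\eps) = \Omega(1/(\eps\gamma^2))$. The one delicate point is the approximate-DP accounting: the additive group-privacy slack $\delta/(e^\eps - 1)$ must be smaller than the packing gap $1/N = 2^{-\Omega(1/\gamma^2)}$, which forces $\delta$ to be exponentially small in $1/\gamma^2$; this is harmless here since only existence is required, but it is exactly the reason the same strategy cannot be pushed to yield an analogous $\Omega(1/\gamma^2)$ approximate-DP bound in the \emph{robust} improper regime (cf.\ Theorem~\ref{thm:lower-bound-apx-proper}). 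As a byproduct, the same calculation with $\delta = 0$ already reproves the $\Omega(1/(\eps\gamma^2))$ pure-DP bound for \emph{improper} learners, strengthening the proper pure-DP bound of~\citep{NguyenUZ20}.
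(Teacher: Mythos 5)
Your packing computation is internally sound (one small slip: agreement on $(1-2\alpha)d_0$ coordinates contradicts a Hamming distance of $d_0/3$ only when $\alpha < 1/6$, not $1/3$ --- immaterial since $\alpha$ is assumed small), but it proves a strictly weaker statement than the theorem, and the weakening is exactly the point you flag and then dismiss. Because your group-privacy chain has length $\Theta(n)$ and your packing has $N = 2^{\Omega(1/\gamma^2)}$ cells, you are forced to take $\delta \leq 2^{-\Omega(1/\gamma^2)} = 2^{-\Omega(\eps n)}$, i.e., $\delta$ depends on $\gamma$ and is exponentially small in the sample size. In the theorem as stated, $\gamma$ sits inside the scope of ``there exists $\delta$,'' so the claim is that a single $\delta = \delta(\eps)$ works for every $\gamma$ (and every $n$, $d$); the paper's proof produces such a $\delta$ --- a constant depending only on $\eps$, not on $\gamma$, $d$, or $n$. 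This extra strength is not cosmetic: Theorem~\ref{thm:lower-bound-non-robust} is invoked inside the proof of Theorem~\ref{thm:lower-bound-apx-proper} for learners that are merely $(\eps, o(1/n))$-DP with $\gamma \approx 1/\sqrt{d}$, a regime your bound does not reach since $o(1/n) \gg 2^{-\Omega(d)}$. This is the well-known limitation of global packing arguments under approximate DP, and it is precisely why the paper does not use packing here.

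The paper instead uses a direct-sum/embedding argument that keeps the privacy chains short. It first proves a one-dimensional statement (Lemma~\ref{lem:lower-bound-non-robust-one-dim}): recovering a bit $s$ from samples that all equal $s$, with success probability $0.51$, requires $\Omega(1/\eps)$ samples \emph{in expectation}, for a $\delta$ depending only on $\eps$; there the packing has just two cells and the chain length is only $O(1/\eps)$, which is why a non-negligible $\delta$ survives the argument. It then embeds $d = \lfloor 1/\gamma^2 \rfloor$ independent copies of this problem into separate coordinates (distributions uniform on $(\be_i, s_i)$ and $(-\be_i, -s_i)$, jointly realizable with margin $1/\sqrt{d} \geq \gamma$ by $\frac{1}{\sqrt{d}}\sum_i s_i \be_i$): a $(\gamma,0)$-robust learner with $n$ samples is converted, by choosing a random coordinate $i^*$ and simulating all other coordinates internally, into an algorithm for the one-dimensional problem that uses only $n/d$ expected samples and inherits the same $(\eps,\delta)$ guarantee, forcing $n = \Omega(d/\eps) = \Omega(1/(\eps\gamma^2))$. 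To obtain the theorem in its intended strength (a $\delta$ independent of $\gamma$, compatible with its later use), some such hybrid/embedding step is needed; your argument as written only yields the statement with $\delta$ exponentially small in $1/\gamma^2$.
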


Finally, we provide algorithms with nearly matching upper bounds. For pure-DP, we prove the following, which matches the lower bound in Theorem~\ref{thm:lower-bound} to within a constant factor when $d \geq 1/\gamma^2$.


\begin{restatable}{theorem}{expmech}
\label{thm:exp-mech}
There is an $\eps$-DP $(\gamma, 0.9\gamma)$-robust learner with sample complexity%
\footnote{Here $O_\alpha(\cdot)$ hides a factor of $\poly(1/\alpha)$, and $\tilde{O}(\cdot)$ hides a factor of $\poly\log(1/(\alpha\gamma\delta))$.} $O_{\alpha}\left(\frac{1}{\epsilon} \max\{d, \frac{1}{\gamma^2}\}\right)$.
\end{restatable}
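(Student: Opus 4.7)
The plan is to instantiate the exponential mechanism on a discretization of the unit ball, using a utility function that rewards empirical correctness at an intermediate margin $0.95\gamma$ lying strictly between the realizable margin $\gamma$ and the target robust margin $0.9\gamma$. The two slack gaps of $0.05\gamma$ play distinct roles: the first lets a coarse $(\gamma/20)$-net of $\bB^d$ contain a hypothesis with a perfect (zero) score, while the second lets standard margin-based Rademacher generalization convert the empirical margin guarantee into a population $0.9\gamma$-robust risk bound without the Lipschitz constant blowing up.

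Concretely, I would fix a $(\gamma/20)$-cover $\cN \subseteq \bB^d$ of size $|\cN| \leq (O(1/\gamma))^d$, and, for a sample $S = \{(\bx_i, y_i)\}_{i=1}^n$, define
$$u(\bw; S) \;:=\; -\bigl|\{\, i \in [n] : y_i \langle \bw, \bx_i \rangle < 0.95\gamma \,\}\bigr|,$$
whose sensitivity is $1$ under add/remove neighbors. Running the exponential mechanism with privacy budget $\eps$ on the domain $\cN$ with utility $u$ is then $\eps$-DP for free. For utility, realizability supplies $\bw^* \in \bB^d$ with $y_i \langle \bw^*, \bx_i\rangle \geq \gamma$ for every $i$; picking $\tilde{\bw} \in \cN$ with $\|\tilde{\bw} - \bw^*\|_2 \leq \gamma/20$ and using $\|\bx_i\|_2 \leq 1$ gives $y_i \langle \tilde{\bw}, \bx_i\rangle \geq 0.95\gamma$ for all $i$, so $u(\tilde{\bw}; S) = 0$. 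The standard exponential-mechanism utility bound then yields, with probability $1 - \xi/2$, an empirical $0.95\gamma$-margin error rate of
$$\hat{L}_{0.95\gamma}(\hat{\bw}, S) \;:=\; \tfrac{1}{n}\bigl|\{\, i : y_i \langle \hat{\bw}, \bx_i \rangle < 0.95\gamma \,\}\bigr| \;=\; O\!\left(\tfrac{d \log(1/\gamma) + \log(1/\xi)}{\eps n}\right).$$

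To pass from sample to population, I would invoke the standard margin-based Rademacher bound for bounded linear predictors: with probability $1 - \xi/2$, simultaneously for every $\bw$ with $\|\bw\|_2 \leq 1$,
$$\Pr_{(\bx, y) \sim \cD}[\, y \langle \bw, \bx \rangle < 0.9\gamma \,] \;\leq\; \hat{L}_{0.95\gamma}(\bw, S) + O\!\left(\tfrac{1}{\gamma \sqrt{n}} + \sqrt{\tfrac{\log(1/\xi)}{n}}\right),$$
where the $1/\gamma$ factor reflects the $O(1/\gamma)$-Lipschitz ramp between the margins $0.95\gamma$ and $0.9\gamma$. Since $\|\hat{\bw}\|_2 \leq 1$ implies $\cR_{0.9\gamma}(h_{\hat{\bw}}, \cD) \leq \Pr[y \langle \hat{\bw}, \bx\rangle < 0.9\gamma]$, combining the two displays and asking each term to be at most $\alpha/3$ yields $n = O_\alpha\bigl(\tfrac{1}{\eps}\max\{d, 1/\gamma^2\}\bigr)$, with $\log(1/\gamma)$ and $\log(1/\xi)$ absorbed into the $O_\alpha$ notation (recalling that $\xi$ is a sufficiently small constant).

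The main technical hurdle is the careful calibration of the three margins. On one hand, the gap between the score margin and the target margin must be $\Theta(\gamma)$ so that the Rademacher bound's Lipschitz constant stays at $O(1/\gamma)$; on the other, the gap between the realizable margin and the score margin must be $\Theta(\gamma)$ so that a net of granularity $\gamma/20$ witnesses a zero-score hypothesis while keeping $|\cN|$ at $(O(1/\gamma))^d$. It is precisely this second gap that is unavailable in the non-robust setting, which is ultimately what allows the sample complexity here to collapse to $O(d/\eps)$ (matching Theorem~\ref{thm:lower-bound}) rather than suffering the $\Omega(1/\gamma^2)$ cost of Theorem~\ref{thm:lower-bound-non-robust}.
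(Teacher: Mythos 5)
Your construction is internally consistent, but it does not prove the bound as stated: the step where you claim $\log(1/\gamma)$ can be ``absorbed into the $O_\alpha$ notation'' is not valid, since by the paper's convention $O_\alpha(\cdot)$ hides only $\poly(1/\alpha)$ factors and $\log(1/\gamma)$ is not bounded in terms of $\alpha$. Running the exponential mechanism over a $(\gamma/20)$-net $\cN$ forces you to pay $\log|\cN| = \Theta(d\log(1/\gamma))$ in the utility bound, so your argument yields $n = O_\alpha\bigl(\tfrac{d\log(1/\gamma)}{\eps} + \tfrac{1}{\gamma^2}\bigr)$, which exceeds the claimed $O_\alpha\bigl(\tfrac{1}{\eps}\max\{d,\tfrac{1}{\gamma^2}\}\bigr)$ by a $\log(1/\gamma)$ factor whenever $d \gtrsim \tfrac{1}{\gamma^2\log(1/\gamma)}$ --- in particular in the headline regime $d \ge 1/\gamma^2$, where the theorem is meant to match the $\Omega(d/\eps)$ lower bound of Theorem~\ref{thm:lower-bound} up to $\poly(1/\alpha)$. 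What you have re-derived is exactly the ``straightforward'' net-based analysis attributed to~\citep{NguyenUZ20} in Section~\ref{sec:alg}, i.e., the $O_\alpha(d\log(1/\gamma)/\eps)$ bound that this theorem is specifically improving upon.

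The missing idea is to run the exponential mechanism over the \emph{continuous} uniform measure $\mu$ on the unit sphere with score $q(\bX,\bw) = -n\cdot\err^{\bX}_{0.95\gamma}(\bw)$, and to replace the $\log|\cN|$ term by a direct lower bound on $\mu$ of the set of hypotheses with near-zero score. Writing a uniform $\bw$ as a normalized Gaussian whose component $w_d$ along $\bw^*$ is a standard one-dimensional Gaussian and whose orthogonal part $\bw_\perp$ is a standard $(d-1)$-dimensional Gaussian, one conditions on $w_d \ge \Lambda$ with $\Lambda = \Theta\bigl(\sqrt{\log(1/\alpha)}\cdot\max\{\sqrt{d},\,1/\gamma\}\bigr)$; for each fixed sample, with probability $1-O(\alpha)$ the orthogonal part costs at most $0.05\gamma$ of margin after normalization (Gaussian and $\chi^2$ tails give $\left<\bw_\perp,\bx_\perp\right> \ge -0.01\Lambda/\gamma$ and $\|\bw_\perp\| \le 0.01\Lambda$), so by Markov a constant fraction of this conditional measure has empirical $0.95\gamma$-margin error $O(\alpha)$. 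Since $\Pr[w_d\ge\Lambda] \ge 2^{-O(\Lambda^2)}$, the exponential-mechanism accuracy lemma then requires only $n = O(\Lambda^2/(\eps\alpha)) = O_\alpha\bigl(\tfrac{1}{\eps}\max\{d, 1/\gamma^2\}\bigr)$ samples, plus the $O(1/(\alpha^2\gamma^2))$ samples for generalization, with no $\log(1/\gamma)$ anywhere. Your remaining steps --- the sensitivity-$1$ score, realizability supplying a perfect-score hypothesis at margin $0.95\gamma$, and the $0.95\gamma \to 0.9\gamma$ margin-generalization bound --- coincide with the paper's and are fine.
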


For approximate-DP, it is already possible\footnote{This can be achieved by running the DP ERM algorithm of~\citep{BassilyST14} with the hinge loss; see~\citep{NguyenUZ20} for the analysis.} to achieve a sample complexity%
\footnotemark[3] of $n = \tilde{O}_{\alpha}(\sqrt{d}/\gamma)$~\citep{NguyenUZ20,BassilyST14} but the running time is $\Omega(n^2 d)$.\footnote{Specifically,~\cite{NguyenUZ20} uses the DP empirical risk minimization algorithm of~\citep{BassilyST14} with the hinge loss; however, the latter requires $\Omega(n^2)$ iterations and each iteration requires $\Omega(d)$ time.} We give a faster algorithm with running time $O_{\alpha}(nd/\gamma)$.

\begin{restatable}{theorem}{perceptron}
\label{thm:perceptron-apx-DP-simplified}
There is an $(\eps, \delta)$-DP $(\gamma, 0.9\gamma)$-robust learner with sample complexity $n = \tilde{O}_\alpha\left(\frac{1}{\eps} \cdot \max\left\{\frac{\sqrt{d}}{\gamma}, \frac{1}{\gamma^2}\right\}\right)$ and running time $\tilde{O}_{\alpha}\left(n d / \gamma\right)$.
\end{restatable}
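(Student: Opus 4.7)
The plan is to run noisy projected stochastic gradient descent on a margin-adapted hinge surrogate, and then translate the resulting population-loss bound into a robust-accuracy guarantee.  Fix $\gamma' = 0.95\gamma$ and take
\[
\ell(\bw;(\bx,y)) \;=\; \max\!\bigl(0,\;\gamma' - y\langle \bw,\bx\rangle\bigr)
\]
on the convex domain $\mathcal{K} = \{\bw\in\R^d : \|\bw\|_2 \le 1\}$.  This $\ell$ is convex and $1$-Lipschitz in $\bw$, and under margin-$\gamma$ realizability the target halfspace $\bw^\ast\in\mathcal{K}$ witnesses population loss $0$.  The learner initializes $\bw_0 = \bzero$ and, for $t=1,\dots,T$, samples one example from the dataset, forms $\bg_t = \nabla_\bw\ell(\bw_{t-1};(\bx_t,y_t)) + \boeta_t$ with $\boeta_t \sim \cN(0,\sigma^2 I_d)$, updates $\bw_t = \Pi_{\mathcal{K}}(\bw_{t-1} - \eta_t \bg_t)$, and returns the averaged iterate $\bar{\bw}$; the output hypothesis is $h(\bx) = \sgn\langle\bar{\bw},\bx\rangle$.

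The key steps are as follows.  For \emph{privacy}, the per-sample gradient has $L_2$-sensitivity at most $\|\bx\|\le 1$, so combining the Gaussian mechanism with privacy amplification by subsampling and the moments accountant, taking $\sigma = \Theta(\sqrt{T\log(1/\delta)}/(\eps n))$ delivers $(\eps,\delta)$-DP overall.  For \emph{utility}, the standard noisy projected SGD analysis (as in Bassily--Feldman--Guzm\'an--Talwar) bounds the expected population excess $\ell$-risk of $\bar{\bw}$ by $O\!\bigl((1 + \sigma\sqrt{d})/\sqrt{T}\bigr)$, which since the population minimum of $\ell$ is $0$ is itself a direct bound on the population $\ell$-risk, namely $\tilde O\!\bigl(\sqrt{d}/(\eps n) + 1/\sqrt{T}\bigr)$.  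For the \emph{loss-to-risk conversion}, any $(\bx,y)$ classified by $\bar{\bw}$ with margin less than $0.9\gamma$ contributes at least $\gamma' - 0.9\gamma = 0.05\gamma$ to $\ell$, so by Markov a population $\ell$-risk below $0.05\gamma\alpha$ forces $\cR_{0.9\gamma}(\bar{\bw},\cD)\le\alpha$.  Requiring each error term to fall below $\Theta(\gamma\alpha)$ yields $n = \tilde\Omega(\sqrt{d}/(\gamma\eps\alpha))$ together with $T = \tilde\Omega(1/(\gamma^2\alpha^2))$, matching the claimed $n = \tilde O_\alpha(\max\{\sqrt d/\gamma,\,1/\gamma^2\}/\eps)$ in the regime $d\ge 1/\gamma^2$.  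The in-expectation bound is boosted to a high-probability guarantee (failure $\le\xi$) at polylogarithmic cost by running $O(\log(1/\xi))$ independent copies on disjoint shards and privately selecting the best on a held-out validation split via the exponential mechanism.

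Each SGD iteration costs $O(d)$ for the gradient, Gaussian sample, and unit-ball projection, so the total running time is $O(Td)$; since $T$ ends up being a $\tilde O(1/\gamma)$ factor larger than $n$ in the target regime, this comes out to $\tilde O_\alpha(nd/\gamma)$.  I expect the main obstacle to be simultaneously meeting three constraints on a single algorithm: the DP noise scale $\sigma \asymp \sqrt{T\log(1/\delta)}/(\eps n)$ imposed by subsampled-Gaussian privacy accounting, the $\gamma$-scaled hinge-loss target $O(\gamma\alpha)$ needed to certify the $0.9\gamma$ robust margin, and the $O(d)$-per-step computational budget that rules out DP-ERM-style inner loops; calibrating the step size $\eta_t$ and the iteration count $T$ to satisfy all three without slack is precisely what drives the shape of the final sample-complexity and running-time bounds.
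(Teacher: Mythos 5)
Your route (noisy projected SGD on a margin-shifted hinge loss, then Markov plus a loss-to-margin conversion) is genuinely different from the paper's, which runs a \emph{batched, noised margin perceptron} for $T=O_\alpha(1/\gamma^2)$ rounds with expected batch size $\tilde{\Theta}(\sqrt{d}+1/\gamma)$ and per-round Gaussian noise of standard deviation $\tilde{\Theta}(1)$, analyzed via the classical potential argument comparing $\left<\bw_i,\bw^*\right>$ to $\|\bw_i\|$ and finished with the Bartlett--Mendelson margin generalization bound. However, as written your proposal has a genuine gap, and it sits exactly at the point your last paragraph flags: the privacy accounting does not hold for your parameters. With batch size $1$ (sampling rate $q=1/n$), $T=\tilde{\Theta}(1/(\gamma\alpha)^2)$ iterations, and per-sample sensitivity $1$, your prescribed noise $\sigma=\Theta\bigl(\sqrt{T\log(1/\delta)}/(\eps n)\bigr)$ evaluates, at $n\approx\sqrt{d}/(\eps\gamma\alpha)$, to $\sigma\approx\sqrt{\log(1/\delta)}/\sqrt{d}\ll 1$, i.e.\ far below the sensitivity. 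The subsampled-Gaussian/moments-accountant bound $\eps\approx q\sqrt{T\log(1/\delta)}/\sigma$ is only valid when the base step is already reasonably private (noise at least on the order of the sensitivity); here a single step has $\eps_0\approx\sqrt{\log(1/\delta)}/\sigma\approx\sqrt{d}$, and amplification by a $1/n$ sampling rate only yields roughly $\log(1+q(e^{\eps_0}-1))$, which is still enormous. Forcing the accountant into its valid regime with batch size $1$ requires $T=\Omega((\eps n)^2)$ iterations --- precisely the $\Omega(n^2d)$-time DP-ERM regime of Bassily--Smith--Thakurta that this theorem is designed to beat. So with your stated algorithm you can have the claimed utility or the claimed privacy, but not both together with $O(Td)=\tilde{O}(nd/\gamma)$ time.

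The repair is to use large minibatches: expected batch size $B=\tilde{\Theta}(\sqrt{d}+1/\gamma)$ with per-iteration noise $\tilde{\Theta}(1)$ keeps each subsampled step genuinely private, and the paper's perceptron potential analysis is engineered exactly to tolerate per-iteration noise vectors of norm $\tilde{\Theta}(\sqrt{d})$ relative to batches containing $\tilde{\Omega}(\sqrt{d})$ margin mistakes. (A minibatch hinge-loss SGD variant along your lines could plausibly be made to work too, but that is a different algorithm from the one you analyzed, and its utility step would then need an explicit empirical-to-population argument --- e.g.\ uniform convergence or the margin bound the paper uses --- since your single-sample-per-step appeal to the DP-SCO population bounds of Bassily et al.\ does not apply verbatim to a multi-pass scheme over the empirical distribution.) Two smaller inaccuracies: in the regime $d\ge 1/\gamma^2$ your $T$ is at most $O_{\alpha,\eps}(n)$, not a $1/\gamma$ factor larger than $n$; and the final normalization step should note that $\|\bar{\bw}\|\le 1$ is what lets the unnormalized margin $0.9\gamma$ imply robust correctness at radius $0.9\gamma$.
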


Our theoretical results and those from prior works are summarized in Table~\ref{table:summary}. Notice that the non-private robust setting and the non-robust private setting each requires only $O(1/\gamma^2)$ samples, whereas our results show that the private and robust setting requires either $\Omega(d)$ samples (for pure-DP) or $\Omega(\sqrt{d}/\gamma)$ samples (for approximate-DP). This separation positively answers the question central to our study. 


We complement our theoretical results by empirically evaluating our algorithm (from Theorem~\ref{thm:perceptron-apx-DP-simplified}) on the MNIST~\citep{lecun2010mnist} and USPS~\citep{hull1994database} datasets. Our results show that it is possible to achieve both robustness and privacy guarantees while maintaining reasonable performance. We further provide evidence that models trained via our algorithm are more resilient to adversarial noise compared to neural networks trained via DP-SGD~\citep{abadi2016deep}.

\paragraph{Organization.}

In the two following sections, we describe in detail the ideas behind each of our proofs. We then present our experimental results in Section~\ref{sec:exp}. Finally, we discuss additional related work and several open questions in Sections~\ref{sec:add-related-work} and~\ref{sec:disc_open_questions} respectively.
Due to space constraints, all missing proofs and additional experiments are deferred to the Supplementary Material.

\section{Sample Complexity Lower Bounds}
\label{sec:lb}
In this section, we explain the high-level ideas behind each of our sample complexity lower bounds.  Our pure-DP lower bound is based on a packing framework and our approximate-DP lower bounds are based on fingerprinting codes.

\subsection{Pure-DP Lower Bound (Theorem~\ref{thm:lower-bound})}
We use the \emph{packing framework}, a DP lower bound proof technique that originated in~\citep{HardtT10}. Roughly speaking, to apply this framework, we have to construct many input distributions for which the sets of valid outputs for each distribution are disjoint (hence the name ``packing''). In our context, this means that we would like to construct distributions $\cD^{(1)}, \dots, \cD^{(K)}$ such that the sets $G^{(i)}$ of hypotheses with small robust risk on $\cD^{(i)}$ are disjoint. Once we have done this, the packing framework immediately gives us a lower bound of $\Omega(\log K / \eps)$ on the sample complexity; below we describe a construction for $K = 2^{\Omega(d)}$ distributions, which yields the desired $\Omega(d / \eps)$ lower bound in Theorem~\ref{thm:lower-bound}.

Our construction proceeds by picking unit vectors $\bw^{(1)}, \dots, \bw^{(K)}$ that are nearly orthogonal, i.e., $\left|\left<\bw^{(i)}, \bw^{(j)}\right>\right| < 0.01$ for all $i \ne j$. It is not hard to see (and well-known) that such vectors exist for $K = 2^{\Omega(d)}$. We then let $\cD^{(i)}$ be the uniform distribution on $(1.01\gamma \cdot \bw^{(i)}, +1), (-1.01\gamma \cdot \bw^{(i)}, -1)$. 

Now, let $G^{(i)}$ denote the set of hypotheses $h$ for which $\cR_{0.9\gamma}(h, \cD^{(i)}) < 0.5$. Since our distribution $\cD^{(i)}$ is uniform on two elements, we must have that $\cR_{0.9\gamma}(h, \cD^{(i)}) = 0$ for all $h \in G^{(i)}$. To see that $G^{(i)}$ and $G^{(j)}$ are disjoint for any $i \ne j$, notice that, since $\left|\left<\bw^{(i)}, \bw^{(j)}\right>\right| < 0.01$, the point $1.01\gamma \cdot \bw^{(i)}$ is within distance $1.8\gamma$ from the point $-1.01\gamma \cdot \bw^{(j)}$. This means that any hypothesis $h$ cannot correctly classify  both $(1.01\gamma \cdot \bw^{(i)}, 1)$ and $(-1.01\gamma \cdot \bw^{(i)}, -1)$ with margin at least $0.9\gamma$, which implies that $G^{(i)} \cap G^{(j)} = \emptyset$. This completes our proof sketch.

We end by remarking that the previous work of~\cite{NguyenUZ20} also uses a packing argument; the main differences between our construction and theirs are in the choice of the distributions $\cD^{(i)}$ and the proof of disjointness of the $G^{(i)}$'s. Our construction of $\cD^{(i)}$ is in fact simpler, since our proof of disjointness can rely on the robustness guarantee. These differences are inherent, since our lower bound holds even against \emph{improper} learners, i.e., when the output hypothesis may not be a halfspace, whereas the lower bound of~\cite{NguyenUZ20} only holds against the particular case of \emph{proper} learners.


\subsection{Approximate-DP Lower Bound (Theorem~\ref{thm:lower-bound-apx-proper})} We reduce from a lower bound from the line of works~\citep{BunUV18,DworkSSUV15,SteinkeU16,SteinkeU17} inspired by \emph{fingerprinting codes}. Specifically, these works consider the so-called \emph{attribute mean} problem, where we are given a set of vectors drawn from some (hidden) distribution $\cD$ on $\{\pm 1\}^d$ and the goal is to compute the mean. It is known that getting an estimate to within 0.1 of the true mean in each coordinate requires $\Omega(\sqrt{d})$ samples. In fact,~\cite{SteinkeU17} show that even outputting a vector with a ``non-trivial'' dot product\footnote{Specifically, this holds when the output vector has $\ell_2$-norm at most $\sqrt{d}$ and the dot product is at least $\zeta d$ for any constant $\zeta > 0$.} with the mean already requires $\Omega(\sqrt{d})$ samples. This \emph{almost} implies our desired lower bound: the only remaining step is to turn $\cD$ to a distribution that is realizable with margin $\gamma^*$. We do this by conditioning $\cD$ on only points $\bx$ with a sufficiently large dot product with the true mean, and then adding both $(\bx, +1)$ and $(-\bx, -1)$ to our distribution. This reduction gives an $\Omega(\sqrt{d})$ lower bound on the sample complexity of $(\gamma^*, 0.9\gamma^*)$-robust proper learners, for some absolute constant margin parameter $\gamma^* > 0$. 

To get an improved bound for a smaller margin $\gamma$, we ``embed'' $\Omega(1/\gamma^2)$ hard instances above in each of $O(\gamma^2 d)$ dimensions. More specifically, let $T = \gamma^* / \gamma$; for each $i \in [T^2]$ we create a distribution $\cD^{(i)}$ that is the hard distribution from the previous paragraph in $d' := d / T$ dimensions embedded onto coordinates $d'(i - 1) + 1, \dots, d' i$. We then let the distribution $\cD'$ be the (uniform) mixture of $\cD^{(1)}, \dots, \cD^{(T^2)}$. Since each $\cD^{(i)}$ is realizable with margin $\gamma^*$ via some halfspace $\bw^{(i)}$, we may take $\bw^* := \frac{1}{T} \sum_{i \in [T^2]} \frac{\bw^{(i)}}{\|\bw^{(i)}\|}$ to realize the distribution $\cD'$ with margin $\frac{1}{T} \cdot \gamma^* = \gamma$ as desired. 

Now, to find any $\bw$ with small $\cR_{0.9\gamma}(h_{\bw}, \cD')$, we roughly have to solve (most of) the $T^2$ instances $\cD^{(i)}$'s. Recall that solving each of these instances requires $\Omega(\sqrt{d'})$ samples. Thus, the combined instance requires $\Omega(T^2 \cdot \sqrt{d'}) = \Omega(1/\gamma^2 \cdot \sqrt{\gamma^2 d}) = \Omega(\sqrt{d}/\gamma)$ samples.

\subsection{Non-Robust DP Learning Lower Bound (Theorem~\ref{thm:lower-bound-non-robust})} This lower bound once again uses the ``embedding'' technique described above. Here we start with a hard one-dimensional instance, which is simply the uniform distribution on $(x, -1), (x, +1)$ where $x$ is either $+1$ or $-1$. When $\delta > 0$ is sufficiently small (depending on $\eps$), it is simple to show that any $(\eps, \delta)$-DP $(1, 0)$-learner for this instance requires $\Omega(1/\eps)$ samples. Similar to the previous proof overview, we embed $1/\gamma^2$ such instances into $1/\gamma^2$ dimensions. Since the one-dimensional instance requires $\Omega(1/\eps)$ samples, the combined instance requires $\Omega(1/(\eps\gamma^2))$ samples, thereby yielding Theorem~\ref{thm:lower-bound-non-robust}.

\section{Sample-Efficient Algorithms}
\label{sec:alg}

In this section, we present our algorithms for robust and private learning.  Our pure-DP algorithm is based on an improved analysis of the exponential mechanism.  Our approximate-DP algorithm is based on a private, batched version of the perceptron algorithm.  

In the following discussions, we assume that $\bw^*$ is an (unknown) optimal halfspace with respect to the input distribution $\cD$ and $L_2$-perturbations with margin parameter $\gamma$, i.e., that $\bw^*$ satisfies $\cR_{\gamma}(h_{\bw^*}, \cD) = 0$. We may assume without loss of generality that $\|\bw^*\| = 1$.

\subsection{Pure-DP Algorithm (Theorem~\ref{thm:exp-mech})}
Theorem~\ref{thm:exp-mech} is shown via the exponential mechanism (EM)~\citep{McSherryT07}. Our guarantee is an improvement over the ``straightforward'' analysis of EM on a $(0.1\gamma)$-net of the unit sphere in $\R^d$, which gives an upper bound of $O_{\alpha}(d \log(1 / \gamma) / \eps)$~\citep{NguyenUZ20}. On the other hand, when $d \geq 1/\gamma^2$, our sample complexity is $O_{\alpha}(d / \epsilon)$. The intuition behind our improvement is that, if we take a random unit vector $\bw$ such that $\left<\bw, \bw^*\right> \geq 0.99$, then it already gives a small robust risk (in expectation) when $d \gg 1/\gamma^2$ because the component of $\bw$ orthogonal to $\bw^*$ is a random $(d - 1)$-dimensional vector of norm less than one, meaning that in expectation it only affects the margin by $O(1/\sqrt{d}) \ll 0.1\gamma$. Now, a random unit vector satisfies $\left<\bw, \bw^*\right> \geq 0.99$ with probability $2^{-O(d)}$, which (roughly speaking) means that EM should only require $O_{\alpha}(d/\eps)$ samples.

\subsection{Approximate-DP Algorithm (Theorem~\ref{thm:perceptron-apx-DP-simplified})} 

To prove Theorem~\ref{thm:perceptron-apx-DP-simplified}, we use the \emph{DP Batch Perceptron} algorithm presented in Algorithm~\ref{alg:dp-batch-perceptron}. DP Batch Perceptron is the batch and privatized version of the so-called \emph{margin perceptron} algorithm~\citep{DudaH73,CollobertB04}.
%
%
That is, in each iteration, we randomly sample a batch of samples and for each sample $(\bx, y)$ in the batch that is not correctly classified with margin $\gamma'$, we add $y \cdot \bx$ to the current weight of the halfspace. Furthermore, we add some Gaussian noise to the weight vector to make this algorithm private. 
We also have a ``stopping condition'' that terminates whenever the number of samples mislabeled at margin $\gamma'$ is sufficiently small. (We add Laplace noise to the number of such samples to make it private.) To get a $(\gamma, 0.9\gamma)$-robust learner, it suffices for us to set, e.g., $\gamma' = 0.95\gamma$; we use this value of $\gamma'$ in the subsequent discussions.

\begin{figure}[h!]
\centering
\begin{minipage}{0.45\textwidth}
\begin{algorithm}[H]
\begin{algorithmic}[1]
\Procedure{DP-Batch-Perceptron$_{\gamma', p, T, b, \sigma}(\{(\bx_j, y_j)\}_{j \in [n]})$}{}
\State $\bw_0 \leftarrow \bzero$
\For{$i = 1, \dots, T$}
\State $S_i \leftarrow$ a set of samples where each $(\bx_j, y_j)$ is independently included w.p. $p$
\State $M_i \leftarrow \emptyset$
\For{$(\bx, y) \in S_i$}
\If{$\sgn\left(\left<\frac{\bw_{i - 1}}{\|\bw_{i - 1}\|}, \bx\right> - y \cdot \gamma'\right) \ne y$} \label{step:margin-check}
\State $M_i \leftarrow M_i \cup \{(\bx, y)\}$
\EndIf
\EndFor
\State Sample $\nu_i \sim \Lap(b)$
\If{$|M_i| + \nu_i < 0.3\alpha pn$}  \label{step:check-early-stop}
\State \Return $\bw_{i - 1} / \|\bw_{i - 1}\|$ \label{step:return}
\EndIf
\State $\bu_i \leftarrow \sum_{(\bx, y) \in M_i} y \cdot \bx$
\State Sample $\bg_i \sim \cN(0, \sigma^2 \cdot I_{d \times d})$
\State $\bw_i \leftarrow \bw_{i - 1} + \bu_i + \bg_i$
\EndFor
\Return FAIL
\EndProcedure
\end{algorithmic}
\caption{DP Batch Perceptron}
\label{alg:dp-batch-perceptron}
\end{algorithm}
\end{minipage}
\end{figure}

Before we dive into the details of the proof, we remark that our runtime reduction, compared to the generic algorithm from~\citep{BassilyST14}, comes from the fact that, in the accuracy analysis, we only need the number of iterations $T$ to be $O_{\alpha}(1/\gamma^2)$, similar to the perceptron algorithm~\citep{novikoff1963convergence}. On the other hand, the generic theorem of~\cite{BassilyST14} requires $n^2 = \tilde{O}_{\alpha}(d^2/\gamma^2)$ iterations.

The accuracy analysis of DP Batch Perceptron follows the blueprint of that of perceptron~\citep{novikoff1963convergence}. Specifically, we keep track of the following two quantities: $\left<\bw_i, \bw^*\right>$, the dot product between the current halfspace $\bw_i$ and the ``true'' halfspace $\bw^*$, and $\|\bw_i\|$, the (Euclidean) norm of $\bw_i$. We would like to show that, after the first few iterations, $\left<\bw_i, \bw^*\right>$ increases at a faster rate than $\|\bw_i\|$. Since $\left<\bw_i, \bw^*\right>$ is bounded above by $\|\bw_i\|$, we may use this to bound the number $T$ of iterations required for the algorithm to converge.

For simplicity, we assume that in each iteration $|M_i|$ is equal to $m > 0$.
Let us first consider the case where no noise is added (i.e., $\sigma = 0$). From the definition of $\bw^*$, it is simple\footnote{Specifically, since $\cR_{\gamma}(h_{\bw^*}, \cD) = 0$, we have $y \left<\bw^*, \bz\right> \geq 0$ for all $\bz$ such that $\|\bx - \bz\| \leq \gamma$. Plugging in $\bz = \bx - \gamma y \bw^*$ yields the claimed inequality.} to check that $\left<\bw^*, y \cdot \bx\right> \geq \gamma$ for all samples $(\bx, y)$. This means that $\left<\bw^*, \bu_i\right> \geq \gamma m$, resulting in
\begin{align} \label{eq:perceptron-simplified-1}
\left<\bw^*, \bw_i\right> \geq \left<\bw^*, \bw_{i - 1}\right> + \gamma m.
\end{align}
On the other hand, the check condition before we add each example $(\bx, y)$ to $M_i$ (Line~\ref{step:margin-check}) ensures that $\left<\bw_i, y \cdot \bx\right> \leq \gamma' \cdot \|\bw_i\|$ for all $(\bx, y) \in M_i$. From this one can derive the following bound:
\begin{align*}
\|\bw_i\| \leq \|\bw_{i - 1}\| + \gamma' m + \frac{0.5m^2}{\|\bw_{i - 1}\|},
\end{align*}
which, when $\|\bw_{i - 1}\| \geq 50 m/\gamma$, implies that
\begin{align} \label{eq:perceptron-simplified-2}
\|\bw_i\| \leq \|\bw_{i - 1}\| + 0.96 \gamma m.
\end{align}
Combining~\eqref{eq:perceptron-simplified-1} and~\eqref{eq:perceptron-simplified-2}, we arrive at
\begin{align*}
50m/\gamma + 0.96\gamma m i \geq \|\bw_i\| \geq \left<\bw^*, \bw_i\right> \geq \gamma m i.
\end{align*}
This implies that the algorithm must stop after $T = O\left(1/\gamma^2\right)$ iterations. 
(When $m = 1$, this noiseless analysis is essentially the same as the original convergence analysis of perceptron~\citep{novikoff1963convergence}.)

The previous paragraphs outlined the analysis for the noiseless case where $\sigma = 0$. Next, we will describe how the noise $\sigma$ affects the analysis and our choices of parameters. Roughly speaking, we would like the inequalities~\eqref{eq:perceptron-simplified-1} and~\eqref{eq:perceptron-simplified-2} to ``approximately'' hold even after adding noise. In particular, this means that we would like the right-hand side of these inequalities to be affected by at most $o(\gamma m)$ by the noise addition, with high probability. This condition will determine our selection of parameters.

For~\eqref{eq:perceptron-simplified-1}, the inclusion of the noise term $\bg_i$ adds to the right-hand side by $\left<\bw^*, \bg_i\right>$. The expectation of this term is $\|\bw^*\| \cdot \sigma \leq \sigma$, which means that it suffices to ensure that $m \geq \tilde{\omega}\left(\sigma/\gamma\right)$.  For~\eqref{eq:perceptron-simplified-2}, it turns out that the dominant additional term is $\frac{\|\bg_i\|^2}{\|\bw_{i - 1}\|}$ which, under the assumption that $\|\bw_{i - 1}\| \geq 50m/\gamma$, is at most $O(\gamma \|\bg_i\|^2 / m)$; this term is $O(\gamma d \sigma^2 / m)$ in expectation. Since we would like this term to be $o(\gamma m)$, it suffices to have $m = \tilde{\omega}(\sigma \cdot \sqrt{d})$. By combining these two requirements, we may pick $m = \sigma \cdot \tilde{\omega}(\sqrt{d} + 1/\gamma)$. We remark that the number of iterations still remains $T = O(1/\gamma^2)$, as in the noiseless case above.

While we have so far assumed for simplicity that $|M_i| = m$ in all iterations, in the actual analysis we only require that $|M_i| \geq m$. Furthermore, it is simple to show that, as long as the current hypothesis has robust risk significantly more than $\alpha$, we will have $|M_i| \geq \Omega_{\alpha}(pn)$ with high probability. Combining with the previous paragraph, this gives us the following condition (assuming that $\alpha$ is constant):
\begin{align*}
pn \geq \sigma \cdot \tilde{\omega}(\sqrt{d} + 1/\gamma).
\end{align*}
This leads us to pick the following set of parameters: 
\[
n = \tilde{O}\left(\frac{1}{\gamma}\left(\sqrt{d} + \frac{1}{\gamma}\right)\right), 
\quad p = \tilde{O}(\gamma),
\quad \sigma = \tilde{O}(1).
\]

The privacy analysis of our algorithm is similar to that of DP-SGD~\citep{BassilyST14}. Specifically, by the choice of $\sigma = \tilde{O}(1)$ and subsampling rate $p = \tilde{O}(\gamma)$, each iteration of the algorithm is $\left(O(\gamma \epsilon), O(\gamma^2 \delta)\right)$-DP~\citep[e.g.,][]{DworkRV10,BalleBG18}. Since the number of iterations is $T = O(1/\gamma^2)$, advanced composition theorem~\citep{DworkRV10} implies that the entire algorithm is $(O(\sqrt{T} \cdot \gamma \epsilon), O(T \cdot \gamma^2 \delta))$ = $(\eps, \delta)$-DP as desired.

We end by noting that, despite the popularity of perceptron-based algorithms, we are not aware of any work that analyzes the above noised and batched variant. The most closely related analysis we are aware of is that of~\cite{BlumDMN05}, whose algorithm uses the entire dataset in each iteration. While it is possible to adapt their analysis to the batch setting, it unfortunately does not give an optimal sample complexity. Specifically, their analysis requires the  batch size to be $\Omega(\sqrt{d}/\gamma)$, resulting in sample complexity of $\Omega(\sqrt{d}/\gamma^2)$. On the other hand, our more careful analysis works even with batch size $\tilde{O}(\sqrt{d} + 1 / \gamma)$, which results in the desired $\tilde{O_{\alpha, \eps}(\sqrt{d}/\gamma)}$ sample complexity when $d \geq 1 / \gamma^2$.
\section{Experiments}
\label{sec:exp}

\begin{figure*}[ht]
\minipage{0.38\textwidth}
\includegraphics[width=\linewidth]{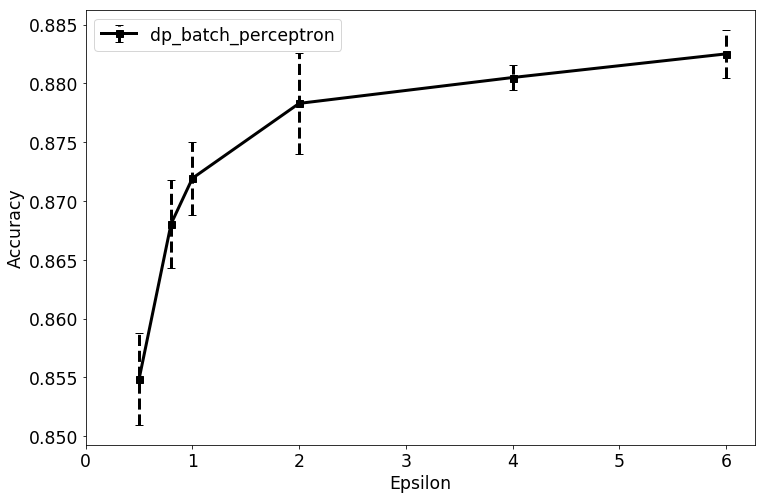}
\endminipage\hspace{-30pt}\hfill
\minipage{0.25\textwidth}
\includegraphics[width=1.1\linewidth]{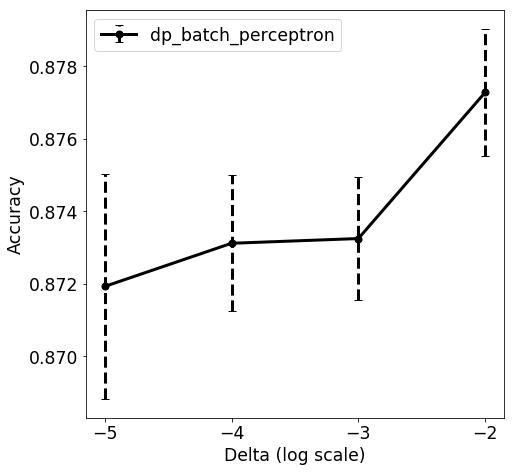}
\endminipage\hspace{-15pt}\hfill
\minipage{0.33\textwidth}
\includegraphics[width=\linewidth]{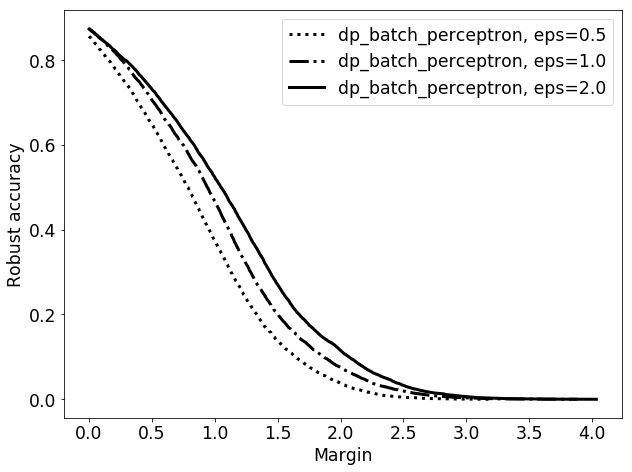}
\endminipage

\minipage{0.38\textwidth}
\includegraphics[width=\linewidth]{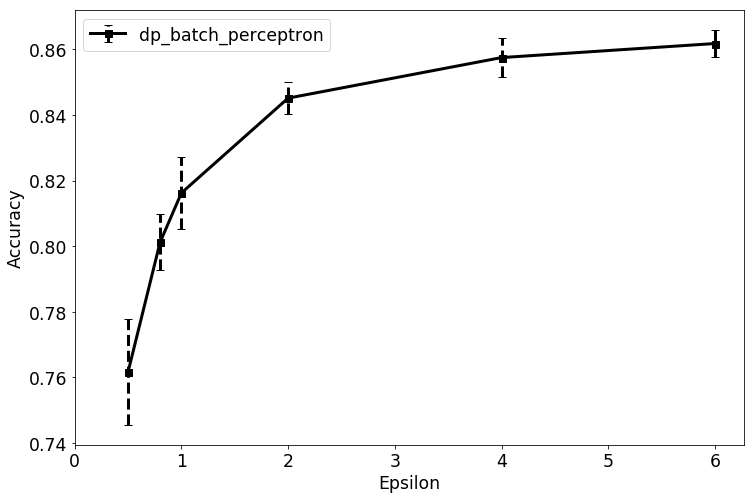}
\subcaption{Accuracy as $\eps$ varies}
\endminipage\hspace{-30pt}\hfill
\minipage{0.25\textwidth}
\includegraphics[width=1.1\linewidth]{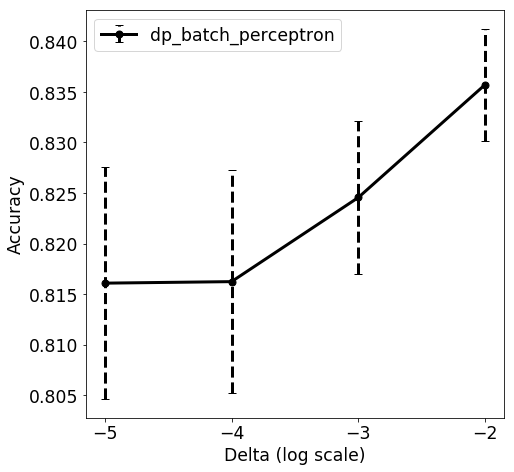}
\subcaption{Accuracy as $\delta$ varies}
\endminipage\hspace{-15pt}\hfill
\minipage{0.33\textwidth}
\includegraphics[width=\linewidth]{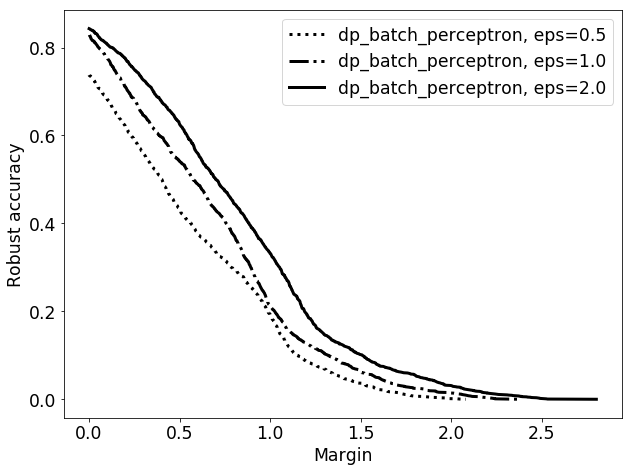}
\subcaption{Robust accuracy as $\eps$ varies}
\endminipage
\caption{Performance of DP Batch Perceptron halfspace classifiers on the MNIST (top row) and USPS (bottom row) datasets.}
\label{fig:mnist_usps}
\end{figure*}

\begin{figure*}[ht]
\minipage{0.33\textwidth}
\includegraphics[width=\linewidth]{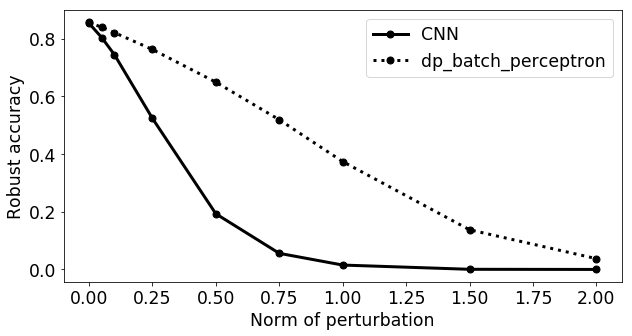}
\subcaption{$\eps = 0.5$}
\endminipage\hspace{-10pt}\hfill
\minipage{0.30\textwidth}
\includegraphics[width=1.1\linewidth]{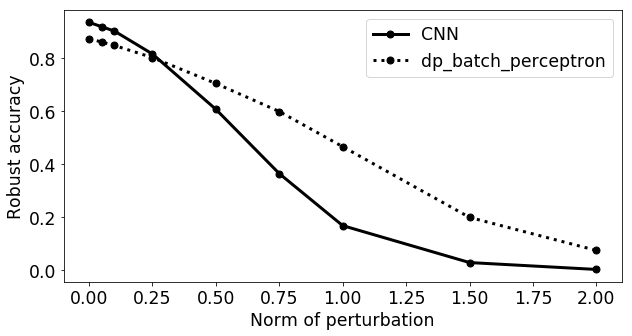}
\subcaption{$\eps = 1$}
\endminipage\hspace{+5pt}\hfill
\minipage{0.33\textwidth}
\includegraphics[width=\linewidth]{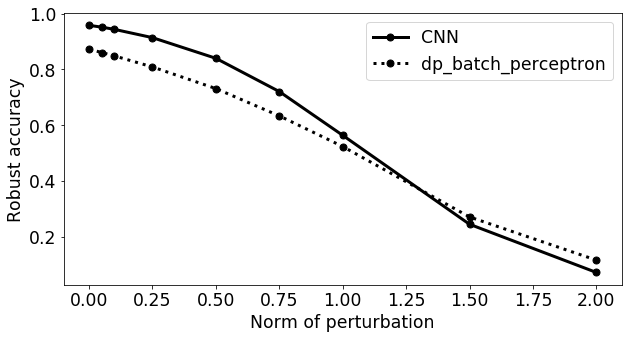}
\subcaption{$\eps = 2$}
\endminipage
\caption{Robustness accuracy comparison between DP-SGD-trained Convolutional neural networks and DP Batch Perceptron halfspace classifiers on MNIST dataset for a fixed privacy budget. In all three plots, $\delta=10^{-5}$ but $\eps$ varies from 0.5, 1, and 2.}
\label{fig:robustness}
\end{figure*}

We run our DP Batch Perceptron algorithm on the MNIST~\citep{lecun2010mnist} and USPS~\citep{hull1994database} datasets, both of which involve 10-class digit classification. We train a separate halfspace classifier $\bw^{(y)}$ for each class $y \in \{1, \dots, 10\}$ for one epoch. To predict on an image $\bx$, we output a class $y^*$ that maximizes $\left<\bw^{(y^*)}, \bx\right>$. 
We tune batch size as a hyperparameter with values 1, 10, 50, 100, 500, 1000, and $\gamma'$ with values 1, 0.1, 0.01, 0.001, 0.0001. 
Each set of experiments is repeated for 20 random trials.
To reduce the number of hyperparameters, we slightly modify our algorithm so that we do not stop early (i.e., removing Lines~\ref{step:check-early-stop} and~\ref{step:return}) but instead return the weight vector $w_T$ at the end of the $T$th iteration (where $T$ is set in the algorithm).

The standard deviation $\sigma$ of the Gaussian noise added is determined by a fixed $(\eps, \delta)$-DP budget, computed using Renyi DP~\citep{abadi2016deep, mironov2017renyi}. The calculations for this follow the implementation in the official TensorFlow Privacy repository (\url{https://github.com/tensorflow/privacy}). For experiments with varying $\eps$ (first column of Figure \ref{fig:mnist_usps}), we fix $\delta$ to $10^{-5}$ for MNIST and $10^{-4}$ for USPS. We observe that despite the robustness and privacy constraints, DP Batch Perceptron still achieves competitive accuracy on both datasets. We also report performance with varying $\delta$ values of $10^{-2}, 10^{-3}, 10^{-4}, 10^{-5}$ (second column of Figure \ref{fig:mnist_usps}), while keeping $\eps$ fixed at 1.0. 

\paragraph{Adversarial Robustness Evaluation.}
We compare the robustness of our models against those of neural networks trained with DP-SGD. For the latter, we follow the architecture found in the official TensorFlow Privacy tutorial, which consists of two convolutional layers, each followed by a MaxPool operation, and a dense layer that outputs predicted logits. The network is then trained with batch size $250$, learning rate $0.15$, $L_2$ clipping-norm $1.0$, and for $60$ epochs. This configuration yields competitive performance on the MNIST dataset.

To evaluate the robustness of the models, we calculate the robust risk on the test dataset for varying values of the perturbation norm (i.e., margin) $\gamma$. Following common practice in the field, we plot the \emph{robust accuracy} on the test data, which is defined as one minus the robust risk (i.e., $1 - \cR_{\gamma}(h, \cD)$ where $\cD$ is the test dataset distribution), instead of the robust risk itself. Similarly, our plots use the \emph{unnormalized} margin, meaning that the images are not re-scaled to have $L_2$ norm equal to $1$ before prediction. Note that each of their pixel values is still scaled (i.e., divided by 255 if necessary) to have values in the range [0, 1].

In the case of DP Batch Perceptron, it is known~\citep[see, e.g.,][]{HeinA17} that an example $(\bx, y)$ cannot be perturbed (using $\bP_{\gamma}$) to an incorrect label if $\gamma < \min_{y' \ne y} \frac{\left<\bw^{(y)}, \bx\right> - \left<\bw^{(y')}, \bx\right>}{\|\bw^{(y)} - \bw^{(y')}\|}$. This formula allows us to exactly calculate the robust risk of our linear classifiers. We stress that this is a \emph{provable} robustness guarantee, i.e., it holds against \emph{all} adversarial attacks with perturbation norm (at most) $\gamma$.

We demonstrate the effect of the change in the required privacy level on the robust risk of our linear classifiers in the right most column of Figure~\ref{fig:mnist_usps}. The x-axis of the plots represents the parameter $\gamma$ and the y-axis represents the $\gamma$-robust accuracy on the test dataset.

In contrast to linear models, there is no efficiently-computable formula to calculate robust risk for general neural networks. In this case, we use a variant of a popular adversarial robustness attack (outlined below) to estimate the robust risk of DP-SGD-trained neural networks. Unlike the linear classifier case, this method only gives a \emph{lower bound} on the robust risk, meaning that more sophisticated attacks might result in even more incorrect classifications. 

We now briefly summarize the attack we use against DP-SGD-trained neural networks; (a version of) this method was already presented in~\citep{szegedy2013intriguing}. Let $M$ denote a trained model; recall that the last layer of our model consists of 10 outputs 
corresponding to each class and to predict an image we take $y^*$ with the maximum output. Given a sample $(\bx, y)$, we would like to determine whether there exists a perturbation $\Delta \in \R^d$ with $\|\Delta\| \leq \gamma$ such that $M$ predict $\bx + \Delta$ to be some other class $y' \ne y$. Instead of solving this (intractable) problem directly, the attack considers a modified objective of
\begin{align*}
\min_{\|\Delta\| \leq \gamma} \ell(M(\bx + \Delta), y')
\end{align*}
where $\ell$ is some loss function. This optimization problem is then solved using Projected Gradient Descent (PGD). 
We use the cross entropy loss in our attack. 

Comparisons of the robust accuracy of models trained via DP Batch Perceptron and those trained via DP-SGD are shown in Figure~\ref{fig:robustness} for $\delta = 10^{-5}$ and $\eps = 0.5, 1, 2$. In the case of $\eps = 0.5$, while both classifiers have similar test accuracies (without any perturbation, $\gamma=0$), as $\gamma$ increases, the robust accuracy rapidly degrades for the DP-SGD-trained neural network compared to that of the DP Batch Perceptron model. This overall trend persists for $\eps = 1, 2$; in both cases, the neural networks start off with noticeably larger test accuracy when $\gamma = 0$ but are eventually surpassed by halfspace classifiers as $\gamma$ increases.
%




\section{Other Related Work}
\label{sec:add-related-work}

\paragraph{Learning Halfspaces with Margin.}

$L_2$ robustness is a classical setting closely related to the notion of margin~\citep[e.g.,][]{rosenblatt1958perceptron, novikoff1963convergence}. (See the supplementary material 
for formal definitions.) Known margin-based learning methods include the classic perceptron algorithm~\citep{rosenblatt1958perceptron, novikoff1963convergence} and its many generalizations and variants ~\citep[e.g.,][]{DudaH73,CollobertB04,FreundS99,GentileL99,LiL02,gentile2001new}, as well as Support Vector Machines (SVMs) \citep{BoserGV92,cortes1995support}. A number of works also explore a closely related \emph{agnostic} setting, where the distribution $\cD$ is not guaranteed to be realizable with a margin~\citep[e.g.,][]{Ben-DavidS00,Shalev-ShwartzSS10,LongS11,BirnbaumS12,DiakonikolasKM19,DKM20}.

Generalization aspects of margin-based learning of halfspaces is also a widely studied topic~\citep[e.g.,][]{Bartlett98,Zhang02,bartlett2002rademacher,koltchinskii2002empirical,McAllester03,KakadeST08}, and it is known that the sample complexity of robust learning of halfspaces is $O(1/(\alpha \gamma)^2)$ \citep{bartlett2002rademacher, koltchinskii2002empirical}.

To the best of our knowledge, the first work that combines the study of learning halfspaces with margin and DP is~\citep{NguyenUZ20}; their results are represented in Table~\ref{table:summary}. Recently,~\cite{GKM20} gave alternative proofs for some results of~\cite{NguyenUZ20} via reductions to clustering problems, but these do not provide any improved sample complexity or running time.

\paragraph{Adversarially Robust Learning.}

There has been a rapidly growing literature on adversarial robustness.
Some of these works have presented evidence that training robust classifiers might be harder than non-robust ones~\citep[e.g.,][]{awasthi2019robustness,bubeck2018adversarial,bubeck2019adversarial,degwekar2019computational}. Other works aim to demonstrate the accuracy cost of robustness~\citep[e.g., ][]{tsipras2018robustness,raghunathan2019adversarial}. Another line of work seeks to determine the right quantity that governs adversarial generalization~\citep[e.g.,][]{schmidt2018adversarially,montasser2019vc,khim2018adversarial,yin2019rademacher,awasthi2020adversarial}.

\paragraph{Differentially Private Learning.}

Private learning has been a popular topic since the early days of differential privacy~\citep[e.g.,][]{KasiviswanathanLNRS08}). Apart from the work of~\cite{NguyenUZ20} on privately learning halfspaces with a margin, a line of work closely related to our setting is the study of the sample complexity of learning threshold functions~\citep{BeimelNS16,FeldmanX14,BunNSV15,AlonLMM19,kaplan_privately_2020} and halfspaces~\citep{beimel_private_2019,kaplan_private_2020,kaplan_how_2020}. These works study the setting where the unit ball $\bB^d$ is discretized so that the domain $\cX$ is $X^d \cap \bB^d$ (i.e., each coordinate is an element of $X$). Interestingly, it has been shown that when $X$ is infinite, halfspaces become unlearnable, i.e., the sample complexity becomes unbounded~\citep{AlonLMM19}. On the other hand, an $(\eps, o(1/n))$-DP learner with sample complexity $\tilde{O}\left(\frac{d^{2.5}}{\alpha \eps}\right) \cdot 2^{O(\log^* |X|)}$ exists~\citep{kaplan_private_2020}.

While the above setting is not directly comparable to ours, it is possible to reduce between the margin setting and the discretized setting, albeit with some loss. For example, we may use the grid discretization with $|X| = 0.01\gamma / \sqrt{d}$, to obtain a $(\gamma, 0)$-learner with sample complexity $\tilde{O}\left(\frac{d^{2.5}}{\alpha \eps}\right) \cdot 2^{O(\log^*(1/\gamma))}$. This is better than the (straightforward) bound of $O(d \cdot \log(1/\gamma))$ obtained by applying the exponential mechanism~\citep{McSherryT07} when $\gamma$ is very small (e.g., $\gamma \leq 2^{-d^{1.6}}$). It remains an interesting open problem to close such a gap for very small values of $\gamma$.

Several works~\citep[e.g.,][]{rubinstein2009learning,chaudhuri2011differentially} have studied differentially private SVMs. However, to the best of our knowledge, there is no straightforward way to translate their theoretical results to those shown in our paper as the objectives in the two settings are different.

\section{Conclusions and Future Directions}\label{sec:disc_open_questions}
In this work, we prove new trade-offs, measured in terms of sample complexity, between privacy and robustness---two crucial properties within the domain of AI ethics and safety---for the classic task of halfspace learning. Our theoretical results demonstrate that DP \emph{and} adversarially robust learning requires a larger number of samples than either DP \emph{or} adversarially robust learning alone. We then propose a learning algorithm that meets both criteria, and test it on two multi-class classification datasets. We also provide empirical evidence that despite having a slight advantage in terms of test accuracy on the main task, standard neural networks trained with DP-SGD are not as robust as those trained with our algorithm.

We conclude with a few future research directions. First, it would be interesting to close the gap for the sample complexity of \emph{improper} approximate-DP $(\gamma, 0.9\gamma)$-robust learners; for $d \gg 1/\gamma^2$, the upper bound is $O(\sqrt{d}/\gamma)$ (Theorem~\ref{thm:perceptron-apx-DP-simplified}) but the lower bound is only $\Omega(1/\gamma^2)$ (Theorem~\ref{thm:lower-bound-non-robust}). This is the only case where there is still a super-polylogarithmic gap for $d \gg 1/\gamma^2$. 

Another technical open question is to improve the lower bounds in Theorem~\ref{thm:lower-bound-apx-proper} to $\Omega(\min\{\sqrt{d}/\gamma, d\}) / \eps$. Currently, we are missing the $\eps$ term because we invoke a lower bound from~\cite{SteinkeU17} (Theorem~\ref{thm:fingerprinting}), which was specifically proved only for $\eps = 1$.

Furthermore, it would be natural to extend our study to $L_p$ perturbations for $p \ne 2$. An especially noteworthy case is when $p = \infty$, which is a well-studied setting in the adversarial robustness literature.

Finally, it would be very interesting to provide a theoretical understanding of private and robust learning beyond halfspaces, to accommodate complex algorithms (e.g., deep neural networks) that are better suited for more challenging tasks.

\balance
\bibliographystyle{plainnat}
\bibliography{refs}

\begin{thebibliography}{97}
\providecommand{\natexlab}[1]{#1}
\providecommand{\url}[1]{\texttt{#1}}
\expandafter\ifx\csname urlstyle\endcsname\relax
  \providecommand{\doi}[1]{doi: #1}\else
  \providecommand{\doi}{doi: \begingroup \urlstyle{rm}\Url}\fi

\bibitem[Abadi et~al.(2016)Abadi, Chu, Goodfellow, McMahan, Mironov, Talwar,
  and Zhang]{abadi2016deep}
Martin Abadi, Andy Chu, Ian Goodfellow, H~Brendan McMahan, Ilya Mironov, Kunal
  Talwar, and Li~Zhang.
\newblock Deep learning with differential privacy.
\newblock In \emph{CCS}, pages 308--318, 2016.

\bibitem[Abowd(2018)]{abowd2018us}
John~M Abowd.
\newblock The {US Census Bureau} adopts differential privacy.
\newblock In \emph{KDD}, pages 2867--2867, 2018.

\bibitem[Alon et~al.(1990)Alon, Goldreich, H{\aa}stad, and Peralta]{AlonGHP90}
Noga Alon, Oded Goldreich, Johan H{\aa}stad, and Ren{\'{e}} Peralta.
\newblock Simple constructions of almost $k$-wise independent random variables.
\newblock In \emph{FOCS}, pages 544--553, 1990.

\bibitem[Alon et~al.(2019)Alon, Livni, Malliaris, and Moran]{AlonLMM19}
Noga Alon, Roi Livni, Maryanthe Malliaris, and Shay Moran.
\newblock Private {PAC} learning implies finite {L}ittlestone dimension.
\newblock In \emph{STOC}, pages 852--860, 2019.

\bibitem[{Apple Differential Privacy Team}(2017)]{dp2017learning}
{Apple Differential Privacy Team}.
\newblock Learning with privacy at scale.
\newblock \emph{Apple Machine Learning Journal}, 2017.

\bibitem[Athalye et~al.(2018)Athalye, Carlini, and
  Wagner]{athalye2018obfuscated}
Anish Athalye, Nicholas Carlini, and David Wagner.
\newblock Obfuscated gradients give a false sense of security: Circumventing
  defenses to adversarial examples.
\newblock In \emph{ICML}, pages 274--283, 2018.

\bibitem[Awasthi et~al.(2019)Awasthi, Dutta, and
  Vijayaraghavan]{awasthi2019robustness}
Pranjal Awasthi, Abhratanu Dutta, and Aravindan Vijayaraghavan.
\newblock On robustness to adversarial examples and polynomial optimization.
\newblock In \emph{NeurIPS}, pages 13760--13770, 2019.

\bibitem[Awasthi et~al.(2020)Awasthi, Frank, and Mohri]{awasthi2020adversarial}
Pranjal Awasthi, Natalie Frank, and Mehryar Mohri.
\newblock Adversarial learning guarantees for linear hypotheses and neural
  networks.
\newblock In \emph{ICML}, pages 431--441, 2020.

\bibitem[Balle et~al.(2018)Balle, Barthe, and Gaboardi]{BalleBG18}
Borja Balle, Gilles Barthe, and Marco Gaboardi.
\newblock Privacy amplification by subsampling: Tight analyses via couplings
  and divergences.
\newblock In \emph{NeurIPS}, pages 6280--6290, 2018.

\bibitem[Bartlett(1998)]{Bartlett98}
Peter~L. Bartlett.
\newblock The sample complexity of pattern classification with neural networks:
  The size of the weights is more important than the size of the network.
\newblock \emph{{IEEE} Trans. Inf. Theory}, 44\penalty0 (2):\penalty0 525--536,
  1998.

\bibitem[Bartlett and Mendelson(2002)]{bartlett2002rademacher}
Peter~L Bartlett and Shahar Mendelson.
\newblock Rademacher and {G}aussian complexities: Risk bounds and structural
  results.
\newblock \emph{JMLR}, 3:\penalty0 463--482, 2002.

\bibitem[Bassily et~al.(2014)Bassily, Smith, and Thakurta]{BassilyST14}
Raef Bassily, Adam~D. Smith, and Abhradeep Thakurta.
\newblock Private empirical risk minimization: Efficient algorithms and tight
  error bounds.
\newblock In \emph{FOCS}, pages 464--473, 2014.

\bibitem[Bassily et~al.(2019)Bassily, Feldman, Talwar, and
  Thakurta]{bassily2019private}
Raef Bassily, Vitaly Feldman, Kunal Talwar, and Abhradeep~Guha Thakurta.
\newblock Private stochastic convex optimization with optimal rates.
\newblock In \emph{NeurIPS}, pages 11282--11291, 2019.

\bibitem[Beimel et~al.(2016)Beimel, Nissim, and Stemmer]{BeimelNS16}
Amos Beimel, Kobbi Nissim, and Uri Stemmer.
\newblock Private learning and sanitization: Pure vs. approximate differential
  privacy.
\newblock \emph{Theory Comput.}, 12\penalty0 (1):\penalty0 1--61, 2016.

\bibitem[Beimel et~al.(2019)Beimel, Moran, Nissim, and
  Stemmer]{beimel_private_2019}
Amos Beimel, Shay Moran, Kobbi Nissim, and Uri Stemmer.
\newblock Private center points and learning of halfspaces.
\newblock In \emph{COLT}, pages 269--282, 2019.

\bibitem[Ben{-}David and Simon(2000)]{Ben-DavidS00}
Shai Ben{-}David and Hans~Ulrich Simon.
\newblock Efficient learning of linear perceptrons.
\newblock In \emph{NIPS}, pages 189--195, 2000.

\bibitem[Biggio et~al.(2013)Biggio, Corona, Maiorca, Nelson, {\v{S}}rndi{\'c},
  Laskov, Giacinto, and Roli]{biggio2013evasion}
Battista Biggio, Igino Corona, Davide Maiorca, Blaine Nelson, Nedim
  {\v{S}}rndi{\'c}, Pavel Laskov, Giorgio Giacinto, and Fabio Roli.
\newblock Evasion attacks against machine learning at test time.
\newblock In \emph{ECML/PKDD}, pages 387--402, 2013.

\bibitem[Birnbaum and Shalev{-}Shwartz(2012)]{BirnbaumS12}
Aharon Birnbaum and Shai Shalev{-}Shwartz.
\newblock Learning halfspaces with the zero-one loss: Time-accuracy tradeoffs.
\newblock In \emph{NIPS}, pages 935--943, 2012.

\bibitem[Blum et~al.(2005)Blum, Dwork, McSherry, and Nissim]{BlumDMN05}
Avrim Blum, Cynthia Dwork, Frank McSherry, and Kobbi Nissim.
\newblock Practical privacy: the {SuLQ} framework.
\newblock In \emph{PODS}, pages 128--138, 2005.

\bibitem[Boser et~al.(1992)Boser, Guyon, and Vapnik]{BoserGV92}
Bernhard~E. Boser, Isabelle Guyon, and Vladimir Vapnik.
\newblock A training algorithm for optimal margin classifiers.
\newblock In \emph{COLT}, pages 144--152, 1992.

\bibitem[Bubeck et~al.(2018)Bubeck, Lee, Price, and
  Razenshteyn]{bubeck2018adversarial}
S{\'e}bastien Bubeck, Yin~Tat Lee, Eric Price, and Ilya Razenshteyn.
\newblock Adversarial examples from computational constraints.
\newblock In \emph{ICML}, pages 831--840, 2018.

\bibitem[Bubeck et~al.(2019)Bubeck, Lee, Price, and
  Razenshteyn]{bubeck2019adversarial}
S{\'e}bastien Bubeck, Yin~Tat Lee, Eric Price, and Ilya Razenshteyn.
\newblock Adversarial examples from computational constraints.
\newblock In \emph{ICML}, pages 831--840, 2019.

\bibitem[Bun et~al.(2015)Bun, Nissim, Stemmer, and Vadhan]{BunNSV15}
Mark Bun, Kobbi Nissim, Uri Stemmer, and Salil Vadhan.
\newblock Differentially private release and learning of threshold functions.
\newblock In \emph{FOCS}, pages 634--649, 2015.

\bibitem[Bun et~al.(2018)Bun, Ullman, and Vadhan]{BunUV18}
Mark Bun, Jonathan Ullman, and Salil~P. Vadhan.
\newblock Fingerprinting codes and the price of approximate differential
  privacy.
\newblock \emph{{SIAM} J. Comput.}, 47\penalty0 (5):\penalty0 1888--1938, 2018.

\bibitem[Carlini and Wagner(2018)]{carlini2018audio}
Nicholas Carlini and David Wagner.
\newblock Audio adversarial examples: Targeted attacks on speech-to-text.
\newblock In \emph{IEEE SPW}, pages 1--7, 2018.

\bibitem[Chaudhuri et~al.(2011)Chaudhuri, Monteleoni, and
  Sarwate]{chaudhuri2011differentially}
Kamalika Chaudhuri, Claire Monteleoni, and Anand~D Sarwate.
\newblock Differentially private empirical risk minimization.
\newblock \emph{JMLR}, 12\penalty0 (3):\penalty0 1069--1109, 2011.

\bibitem[Collobert and Bengio(2004)]{CollobertB04}
Ronan Collobert and Samy Bengio.
\newblock Links between perceptrons, {MLPs} and {SVMs}.
\newblock In \emph{ICML}, 2004.

\bibitem[Cortes and Vapnik(1995)]{cortes1995support}
Corinna Cortes and Vladimir Vapnik.
\newblock Support-vector networks.
\newblock \emph{Mach. Learn.}, 20\penalty0 (3):\penalty0 273--297, 1995.

\bibitem[Dalvi et~al.(2004)Dalvi, Domingos, Sanghai, and
  Verma]{dalvi2004adversarial}
Nilesh Dalvi, Pedro Domingos, Sumit Sanghai, and Deepak Verma.
\newblock Adversarial classification.
\newblock In \emph{KDD}, pages 99--108, 2004.

\bibitem[Degwekar et~al.(2019)Degwekar, Nakkiran, and
  Vaikuntanathan]{degwekar2019computational}
Akshay Degwekar, Preetum Nakkiran, and Vinod Vaikuntanathan.
\newblock Computational limitations in robust classification and win-win
  results.
\newblock In \emph{COLT}, pages 994--1028, 2019.

\bibitem[Diakonikolas et~al.(2019)Diakonikolas, Kane, and
  Manurangsi]{DiakonikolasKM19}
Ilias Diakonikolas, Daniel Kane, and Pasin Manurangsi.
\newblock Nearly tight bounds for robust proper learning of halfspaces with a
  margin.
\newblock In \emph{NeurIPS}, pages 10473--10484, 2019.

\bibitem[Diakonikolas et~al.(2020)Diakonikolas, Kane, and Manurangsi]{DKM20}
Ilias Diakonikolas, Daniel~M. Kane, and Pasin Manurangsi.
\newblock The complexity of adversarially robust proper learning of halfspaces
  with agnostic noise.
\newblock In \emph{NeurIPS}, 2020.

\bibitem[Ding et~al.(2017)Ding, Kulkarni, and Yekhanin]{ding2017collecting}
Bolin Ding, Janardhan Kulkarni, and Sergey Yekhanin.
\newblock Collecting telemetry data privately.
\newblock In \emph{NIPS}, pages 3571--3580, 2017.

\bibitem[Duda and Hart(1973)]{DudaH73}
Richard~O. Duda and Peter~E. Hart.
\newblock \emph{Pattern Classification and Scene Analysis}.
\newblock Wiley, 1973.

\bibitem[Dwork and Roth(2014)]{dwork2014algorithmic}
Cynthia Dwork and Aaron Roth.
\newblock The algorithmic foundations of differential privacy.
\newblock \emph{Foundations and Trends in Theoretical Computer Science},
  9\penalty0 (3-4):\penalty0 211--407, 2014.

\bibitem[Dwork et~al.(2006{\natexlab{a}})Dwork, Kenthapadi, McSherry, Mironov,
  and Naor]{DworkKMMN06}
Cynthia Dwork, Krishnaram Kenthapadi, Frank McSherry, Ilya Mironov, and Moni
  Naor.
\newblock Our data, ourselves: Privacy via distributed noise generation.
\newblock In \emph{EUROCRYPT}, pages 486--503, 2006{\natexlab{a}}.

\bibitem[Dwork et~al.(2006{\natexlab{b}})Dwork, McSherry, Nissim, and
  Smith]{DworkMNS06}
Cynthia Dwork, Frank McSherry, Kobbi Nissim, and Adam~D. Smith.
\newblock Calibrating noise to sensitivity in private data analysis.
\newblock In \emph{TCC}, pages 265--284, 2006{\natexlab{b}}.

\bibitem[Dwork et~al.(2010)Dwork, Rothblum, and Vadhan]{DworkRV10}
Cynthia Dwork, Guy~N. Rothblum, and Salil~P. Vadhan.
\newblock Boosting and differential privacy.
\newblock In \emph{FOCS}, pages 51--60, 2010.

\bibitem[Dwork et~al.(2015)Dwork, Smith, Steinke, Ullman, and
  Vadhan]{DworkSSUV15}
Cynthia Dwork, Adam~D. Smith, Thomas Steinke, Jonathan Ullman, and Salil~P.
  Vadhan.
\newblock Robust traceability from trace amounts.
\newblock In \emph{FOCS}, pages 650--669, 2015.

\bibitem[Erlingsson et~al.(2014)Erlingsson, Pihur, and
  Korolova]{erlingsson2014rappor}
{\'U}lfar Erlingsson, Vasyl Pihur, and Aleksandra Korolova.
\newblock {RAPPOR}: Randomized aggregatable privacy-preserving ordinal
  response.
\newblock In \emph{CCS}, pages 1054--1067, 2014.

\bibitem[Feldman and Xiao(2014)]{FeldmanX14}
Vitaly Feldman and David Xiao.
\newblock Sample complexity bounds on differentially private learning via
  communication complexity.
\newblock In \emph{COLT}, pages 1000--1019, 2014.

\bibitem[Feldman et~al.(2020)Feldman, Koren, and Talwar]{feldman2020private}
Vitaly Feldman, Tomer Koren, and Kunal Talwar.
\newblock Private stochastic convex optimization: optimal rates in linear time.
\newblock In \emph{STOC}, pages 439--449, 2020.

\bibitem[Freund and Schapire(1999)]{FreundS99}
Yoav Freund and Robert~E. Schapire.
\newblock Large margin classification using the perceptron algorithm.
\newblock \emph{Mach. Learn.}, 37\penalty0 (3):\penalty0 277--296, 1999.

\bibitem[Gentile(2001)]{gentile2001new}
Claudio Gentile.
\newblock A new approximate maximal margin classification algorithm.
\newblock \emph{JMLR}, 2:\penalty0 213--242, 2001.

\bibitem[Gentile and Littlestone(1999)]{GentileL99}
Claudio Gentile and Nick Littlestone.
\newblock The robustness of the \emph{p}-norm algorithms.
\newblock In \emph{COLT}, pages 1--11, 1999.

\bibitem[Ghazi et~al.(2020)Ghazi, Kumar, and Manurangsi]{GKM20}
Badih Ghazi, Ravi Kumar, and Pasin Manurangsi.
\newblock Differentially private clustering: Tight approximation ratios.
\newblock In \emph{NeurIPS}, 2020.

\bibitem[Goodfellow et~al.(2015)Goodfellow, Shlens, and
  Szegedy]{goodfellow2014explaining}
Ian~J Goodfellow, Jonathon Shlens, and Christian Szegedy.
\newblock Explaining and harnessing adversarial examples.
\newblock In \emph{ICLR}, 2015.

\bibitem[Gowal et~al.(2018)Gowal, Dvijotham, Stanforth, Bunel, Qin, Uesato,
  Arandjelovic, Mann, and Kohli]{gowal2018effectiveness}
Sven Gowal, Krishnamurthy Dvijotham, Robert Stanforth, Rudy Bunel, Chongli Qin,
  Jonathan Uesato, Relja Arandjelovic, Timothy Mann, and Pushmeet Kohli.
\newblock On the effectiveness of interval bound propagation for training
  verifiably robust models.
\newblock \emph{arXiv Preprint:1810.12715}, 2018.

\bibitem[Gowal et~al.(2019)Gowal, Uesato, Qin, Huang, Mann, and
  Kohli]{gowal2019alternative}
Sven Gowal, Jonathan Uesato, Chongli Qin, Po-Sen Huang, Timothy Mann, and
  Pushmeet Kohli.
\newblock An alternative surrogate loss for {PGD}-based adversarial testing.
\newblock \emph{arXiv Preprint: 1910.09338}, 2019.

\bibitem[Greenberg(2016)]{greenberg2016apple}
Andy Greenberg.
\newblock {Apple's} ``differential privacy'' is about collecting your data --
  but not your data.
\newblock \emph{Wired, June}, 13, 2016.

\bibitem[Hardt and Talwar(2010)]{HardtT10}
Moritz Hardt and Kunal Talwar.
\newblock On the geometry of differential privacy.
\newblock In \emph{STOC}, pages 705--714, 2010.

\bibitem[Hayes(2020)]{hayes2020provable}
Jamie Hayes.
\newblock Provable trade-offs between private \& robust machine learning.
\newblock \emph{arXiv Preprint: 2006.04622}, 2020.

\bibitem[Hein and Andriushchenko(2017)]{HeinA17}
Matthias Hein and Maksym Andriushchenko.
\newblock Formal guarantees on the robustness of a classifier against
  adversarial manipulation.
\newblock In \emph{NeurIPS}, pages 2266--2276, 2017.

\bibitem[Hull(1994)]{hull1994database}
Jonathan~J. Hull.
\newblock A database for handwritten text recognition research.
\newblock \emph{IEEE PAMI}, 16\penalty0 (5):\penalty0 550--554, 1994.

\bibitem[Kakade et~al.(2008)Kakade, Sridharan, and Tewari]{KakadeST08}
Sham~M. Kakade, Karthik Sridharan, and Ambuj Tewari.
\newblock On the complexity of linear prediction: Risk bounds, margin bounds,
  and regularization.
\newblock In \emph{NIPS}, pages 793--800, 2008.

\bibitem[Kaplan et~al.(2020{\natexlab{a}})Kaplan, Ligett, Mansour, Naor, and
  Stemmer]{kaplan_privately_2020}
Haim Kaplan, Katrina Ligett, Yishay Mansour, Moni Naor, and Uri Stemmer.
\newblock Privately learning thresholds: Closing the exponential gap.
\newblock In \emph{COLT}, pages 2263--2285, 2020{\natexlab{a}}.

\bibitem[Kaplan et~al.(2020{\natexlab{b}})Kaplan, Mansour, Stemmer, and
  Tsfadia]{kaplan_private_2020}
Haim Kaplan, Yishay Mansour, Uri Stemmer, and Eliad Tsfadia.
\newblock Private learning of halfspaces: Simplifying the construction and
  reducing the sample complexity.
\newblock In \emph{NeurIPS}, 2020{\natexlab{b}}.

\bibitem[Kaplan et~al.(2020{\natexlab{c}})Kaplan, Sharir, and
  Stemmer]{kaplan_how_2020}
Haim Kaplan, Micha Sharir, and Uri Stemmer.
\newblock {How to Find a Point in the Convex Hull Privately}.
\newblock In \emph{SoCG}, pages 52:1--52:15, 2020{\natexlab{c}}.

\bibitem[Kasiviswanathan et~al.(2008)Kasiviswanathan, Lee, Nissim,
  Raskhodnikova, and Smith]{KasiviswanathanLNRS08}
Shiva~Prasad Kasiviswanathan, Homin~K. Lee, Kobbi Nissim, Sofya Raskhodnikova,
  and Adam~D. Smith.
\newblock What can we learn privately?
\newblock In \emph{FOCS}, pages 531--540, 2008.

\bibitem[Khim and Loh(2018)]{khim2018adversarial}
Justin Khim and Po-Ling Loh.
\newblock Adversarial risk bounds via function transformation.
\newblock \emph{arXiv Preprint: 1810.09519}, 2018.

\bibitem[Koltchinskii and Panchenko(2002)]{koltchinskii2002empirical}
Vladimir Koltchinskii and Dmitry Panchenko.
\newblock Empirical margin distributions and bounding the generalization error
  of combined classifiers.
\newblock \emph{The Annals of Statistics}, 30\penalty0 (1):\penalty0 1--50,
  2002.

\bibitem[Kolter and Madry(2018)]{kolter2018adversarial}
Zico Kolter and Aleksander Madry.
\newblock Adversarial robustness: Theory and practice.
\newblock \emph{Tutorial at NeurIPS}, 2018.

\bibitem[Laurent and Massart(2000)]{laurent2000adaptive}
Beatrice Laurent and Pascal Massart.
\newblock Adaptive estimation of a quadratic functional by model selection.
\newblock \emph{Annals of Statistics}, pages 1302--1338, 2000.

\bibitem[Lecun et~al.(1998)Lecun, Bottou, Bengio, and
  Haffner]{Lecun98gradient-basedlearning}
Yann Lecun, Léon Bottou, Yoshua Bengio, and Patrick Haffner.
\newblock Gradient-based learning applied to document recognition.
\newblock In \emph{Proceedings of the IEEE}, pages 2278--2324, 1998.

\bibitem[LeCun et~al.(2010)LeCun, Cortes, and Burges]{lecun2010mnist}
Yann LeCun, Corinna Cortes, and CJ~Burges.
\newblock {MNIST} handwritten digit database, 2010.
\newblock \url{http://yann.lecun.com/exdb/mnist}.

\bibitem[Lecuyer et~al.(2019)Lecuyer, Atlidakis, Geambasu, Hsu, and
  Jana]{lecuyer2019certified}
Mathias Lecuyer, Vaggelis Atlidakis, Roxana Geambasu, Daniel Hsu, and Suman
  Jana.
\newblock Certified robustness to adversarial examples with differential
  privacy.
\newblock In \emph{IEEE S\&P}, pages 656--672, 2019.

\bibitem[Li and Long(2002)]{LiL02}
Yi~Li and Philip~M. Long.
\newblock The relaxed online maximum margin algorithm.
\newblock \emph{Mach. Learn.}, 46\penalty0 (1-3):\penalty0 361--387, 2002.

\bibitem[Long and Servedio(2011)]{LongS11}
Philip~M. Long and Rocco~A. Servedio.
\newblock Learning large-margin halfspaces with more malicious noise.
\newblock In \emph{NIPS}, pages 91--99, 2011.

\bibitem[Madry et~al.(2018)Madry, Makelov, Schmidt, Tsipras, and
  Vladu]{madry2017towards}
Aleksander Madry, Aleksandar Makelov, Ludwig Schmidt, Dimitris Tsipras, and
  Adrian Vladu.
\newblock Towards deep learning models resistant to adversarial attacks.
\newblock In \emph{ICLR}, 2018.

\bibitem[McAllester(2003)]{McAllester03}
David~A. McAllester.
\newblock Simplified {PAC-Bayesian} margin bounds.
\newblock In \emph{COLT}, pages 203--215, 2003.

\bibitem[McSherry and Talwar(2007)]{McSherryT07}
Frank McSherry and Kunal Talwar.
\newblock Mechanism design via differential privacy.
\newblock In \emph{FOCS}, pages 94--103, 2007.

\bibitem[Mironov(2017)]{mironov2017renyi}
Ilya Mironov.
\newblock R{\'e}nyi differential privacy.
\newblock In \emph{CSF}, pages 263--275, 2017.

\bibitem[Montasser et~al.(2019)Montasser, Hanneke, and Srebro]{montasser2019vc}
Omar Montasser, Steve Hanneke, and Nathan Srebro.
\newblock {VC} classes are adversarially robustly learnable, but only
  improperly.
\newblock In \emph{COLT}, pages 2512--2530, 2019.

\bibitem[Nguyen et~al.(2020)Nguyen, Ullman, and Zakynthinou]{NguyenUZ20}
Huy~Le Nguyen, Jonathan Ullman, and Lydia Zakynthinou.
\newblock Efficient private algorithms for learning large-margin halfspaces.
\newblock In \emph{ALT}, pages 704--724, 2020.

\bibitem[Novikoff(1963)]{novikoff1963convergence}
Albert~B Novikoff.
\newblock On convergence proofs for perceptrons.
\newblock Technical report, Stanford Research Institute, Menlo Park, CA, 1963.

\bibitem[Papernot et~al.(2016)Papernot, McDaniel, and
  Goodfellow]{papernot2016transferability}
Nicolas Papernot, Patrick McDaniel, and Ian Goodfellow.
\newblock Transferability in machine learning: from phenomena to black-box
  attacks using adversarial samples.
\newblock \emph{arXiv Preprint: 1605.07277}, 2016.

\bibitem[Phan et~al.(2019)Phan, Vu, Liu, Jin, Dou, Wu, and
  Thai]{phan2019heterogeneous}
NhatHai Phan, Minh Vu, Yang Liu, Ruoming Jin, Dejing Dou, Xintao Wu, and My~T
  Thai.
\newblock Heterogeneous {G}aussian mechanism: Preserving differential privacy
  in deep learning with provable robustness.
\newblock In \emph{IJCAI}, pages 4753--4759, 2019.

\bibitem[Phan et~al.(2020)Phan, Thai, Hu, Jin, Sun, and Dou]{phan2019scalable}
NhatHai Phan, My~T Thai, Han Hu, Ruoming Jin, Tong Sun, and Dejing Dou.
\newblock Scalable differential privacy with certified robustness in
  adversarial learning.
\newblock In \emph{ICML}, pages 7683--7694, 2020.

\bibitem[Raghunathan et~al.(2019)Raghunathan, Xie, Yang, Duchi, and
  Liang]{raghunathan2019adversarial}
Aditi Raghunathan, Sang~Michael Xie, Fanny Yang, John~C Duchi, and Percy Liang.
\newblock Adversarial training can hurt generalization.
\newblock \emph{arXiv Preprint: 1906.06032}, 2019.

\bibitem[Rahimi and Recht(2007)]{RahimiR07}
Ali Rahimi and Benjamin Recht.
\newblock Random features for large-scale kernel machines.
\newblock In \emph{NIPS}, pages 1177--1184, 2007.

\bibitem[Rosenblatt(1958)]{rosenblatt1958perceptron}
Frank Rosenblatt.
\newblock The perceptron: a probabilistic model for information storage and
  organization in the brain.
\newblock \emph{Psychological Review}, 65\penalty0 (6):\penalty0 386, 1958.

\bibitem[Rubinstein et~al.(2012)Rubinstein, Bartlett, Huang, and
  Taft]{rubinstein2009learning}
Benjamin~IP Rubinstein, Peter~L Bartlett, Ling Huang, and Nina Taft.
\newblock Learning in a large function space: Privacy-preserving mechanisms for
  {SVM} learning.
\newblock \emph{J. Priv. Confidentiality}, 4(1), 2012.

\bibitem[Schmidt et~al.(2018)Schmidt, Santurkar, Tsipras, Talwar, and
  Madry]{schmidt2018adversarially}
Ludwig Schmidt, Shibani Santurkar, Dimitris Tsipras, Kunal Talwar, and
  Aleksander Madry.
\newblock Adversarially robust generalization requires more data.
\newblock In \emph{NeurIPS}, pages 5014--5026, 2018.

\bibitem[Sch{\"{o}}lkopf et~al.(1997)Sch{\"{o}}lkopf, Sung, Burges, Girosi,
  Niyogi, Poggio, and Vapnik]{ScholkopfSBGNPV97}
Bernhard Sch{\"{o}}lkopf, Kah~Kay Sung, Christopher J.~C. Burges, Federico
  Girosi, Partha Niyogi, Tomaso~A. Poggio, and Vladimir Vapnik.
\newblock Comparing support vector machines with {G}aussian kernels to radial
  basis function classifiers.
\newblock \emph{{IEEE} Trans. Signal Process.}, 45\penalty0 (11):\penalty0
  2758--2765, 1997.

\bibitem[Schott et~al.(2019)Schott, Rauber, Bethge, and
  Brendel]{schott2018towards}
Lukas Schott, Jonas Rauber, Matthias Bethge, and Wieland Brendel.
\newblock Towards the first adversarially robust neural network model on
  {MNIST}.
\newblock In \emph{ICLR}, 2019.

\bibitem[Shalev{-}Shwartz et~al.(2010)Shalev{-}Shwartz, Shamir, and
  Sridharan]{Shalev-ShwartzSS10}
Shai Shalev{-}Shwartz, Ohad Shamir, and Karthik Sridharan.
\newblock Learning kernel-based halfspaces with the zero-one loss.
\newblock In \emph{COLT}, pages 441--450, 2010.

\bibitem[Shankland(2014)]{CNET2014Google}
Stephen Shankland.
\newblock How {Google} tricks itself to protect {Chrome} user privacy.
\newblock \emph{CNET, October}, 2014.

\bibitem[Sharma and Chen(2017)]{sharma2017attacking}
Yash Sharma and Pin-Yu Chen.
\newblock Attacking the {M}adry defense model with $l_1$-based adversarial
  examples.
\newblock \emph{arXiv Preprint: 1710.10733}, 2017.

\bibitem[Song et~al.(2019{\natexlab{a}})Song, Shokri, and
  Mittal]{song2019membership}
Liwei Song, Reza Shokri, and Prateek Mittal.
\newblock Membership inference attacks against adversarially robust deep
  learning models.
\newblock In \emph{IEEE SPW}, pages 50--56, 2019{\natexlab{a}}.

\bibitem[Song et~al.(2019{\natexlab{b}})Song, Shokri, and
  Mittal]{song2019privacy}
Liwei Song, Reza Shokri, and Prateek Mittal.
\newblock Privacy risks of securing machine learning models against adversarial
  examples.
\newblock In \emph{CCS}, pages 241--257, 2019{\natexlab{b}}.

\bibitem[Steinke and Ullman(2016)]{SteinkeU16}
Thomas Steinke and Jonathan Ullman.
\newblock Between pure and approximate differential privacy.
\newblock \emph{J. Priv. Confidentiality}, 7\penalty0 (2), 2016.

\bibitem[Steinke and Ullman(2017)]{SteinkeU17}
Thomas Steinke and Jonathan Ullman.
\newblock Tight lower bounds for differentially private selection.
\newblock In \emph{FOCS}, pages 552--563, 2017.

\bibitem[Szegedy et~al.(2014)Szegedy, Zaremba, Sutskever, Bruna, Erhan,
  Goodfellow, and Fergus]{szegedy2013intriguing}
Christian Szegedy, Wojciech Zaremba, Ilya Sutskever, Joan Bruna, Dumitru Erhan,
  Ian Goodfellow, and Rob Fergus.
\newblock Intriguing properties of neural networks.
\newblock In \emph{ICLR}, 2014.

\bibitem[Tsipras et~al.(2019)Tsipras, Santurkar, Engstrom, Turner, and
  Madry]{tsipras2018robustness}
Dimitris Tsipras, Shibani Santurkar, Logan Engstrom, Alexander Turner, and
  Aleksander Madry.
\newblock Robustness may be at odds with accuracy.
\newblock In \emph{ICLR}, 2019.

\bibitem[Vadhan(2017)]{vadhan2017complexity}
Salil Vadhan.
\newblock The complexity of differential privacy.
\newblock In \emph{Tutorials on the Foundations of Cryptography}, pages
  347--450. Springer, 2017.

\bibitem[Yin et~al.(2019)Yin, Kannan, and Bartlett]{yin2019rademacher}
Dong Yin, Ramchandran Kannan, and Peter Bartlett.
\newblock Rademacher complexity for adversarially robust generalization.
\newblock In \emph{ICML}, pages 7085--7094, 2019.

\bibitem[Zhang(2002)]{Zhang02}
Tong Zhang.
\newblock Covering number bounds of certain regularized linear function
  classes.
\newblock \emph{JMLR}, 2:\penalty0 527--550, 2002.

\end{thebibliography}

\newpage
\onecolumn

\begin{center}
{\LARGE \bf Supplementary Material}
\end{center}
\appendix
\section{Preliminaries}

For $m \in \N$, we use $[m]$ to denote $\{1, \dots, m\}$. For a distribution $\cD$, we write $r \sim \cD$ to denote a random variable $r$ distributed as $\cD$. For a randomized algorithm $\A$, we write $\A(\bX)$ to denote the distribution of the output of $\A$ on input $\bX$. For a distribution $\cD$, we write $\A(\cD)$ to denote the distribution of the output of $\A$ when the input is drawn from $\cD$. Sometimes we will allow the number of samples drawn by an algorithm to be a random variable. In this case, the algorithm must specify the number of samples before seeing any samples. Furthermore, when $\cD$ is the distribution of each sample, we may write $\A_{\cD}$ to denote the distribution of the output when each of $\A$'s samples is drawn from $\cD$. We use $\cD_1 \otimes \cdots \otimes \cD_m$ to denote the product distribution of the distributions $\cD_1, \dots, \cD_m$. Furthermore, we use $\cD^{\otimes m}$ to denote the $m$-fold product of the distribution $\cD$ with itself.
 
For convenience, we interchangeably refer to a halfspace by $h_{\bw}$ or just the weight vector $\bw$ itself.

\subsection{Margin of Halfspaces}
\label{subsec:margin-prelim}

Robust learning of halfspaces is intimately related to the notion of margin. For a margin parameter $\gamma > 0$, we say that an example $(\bx, y) \in \R^d \times \{\pm 1\}$ is \emph{correctly classified by $\bw$ with margin $\gamma$} iff $\sgn(\left<\bw, \bx\right> - y \cdot \gamma) = y$. The $\gamma$-margin error is defined as $\err_{\gamma}^{\cD}(\bw) = \Pr_{(\bx, y) \sim \mathcal{D}}[\sgn(\left<\bw, \bx\right> - y \cdot \gamma) \ne y]$.
The connection between robust learning of halfspaces and learning with margin is given through the following (folklore) lemma; its proof can be found, e.g., in~\citep{DKM20}.

\begin{lemma}
For any non-zero $\bw \in \R^d$, $\gamma \geq 0$ and $\cD$, $\cR_{\gamma}(\bw, \cD) = \err^{\cD}_\gamma\left(\frac{\bw}{\|\bw\|_2}\right)$.
\end{lemma}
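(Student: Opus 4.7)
The plan is to prove this pointwise: for each fixed sample $(\bx, y)$ in the support of $\cD$, I will show that the event ``there exists $\bz \in \bP_\gamma(\bx)$ with $h_\bw(\bz) \neq y$'' (whose probability defines $\cR_\gamma(\bw, \cD)$) coincides exactly with the event ``$\sgn(\langle \bw/\|\bw\|_2, \bx\rangle - y\gamma) \neq y$'' (whose probability defines $\err^{\cD}_\gamma(\bw/\|\bw\|_2)$). Once the two indicator events are shown to agree, integrating against $\cD$ gives the claimed equality of probabilities.

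For the pointwise equivalence, I would first rewrite $h_\bw(\bz) \neq y$ for $y \in \{\pm 1\}$ as the inequality $y \cdot \langle \bw, \bz\rangle \leq 0$ (up to the fixed convention used by $\sgn$ on the boundary, which is shared by both sides of the claimed identity, so the boundary case is handled symmetrically). Then the existence of some $\bz$ with $\|\bz - \bx\|_2 \leq \gamma$ and $h_\bw(\bz) \neq y$ is equivalent to
\[
\min_{\|\bz - \bx\|_2 \leq \gamma}\, y \cdot \langle \bw, \bz\rangle \;\leq\; 0.
\]
By a one-line Cauchy--Schwarz calculation (writing $\bz = \bx + \bd$ with $\|\bd\|_2 \leq \gamma$ and noting that the minimizer is $\bd^* = -\gamma y \bw/\|\bw\|_2$), this minimum equals $y \cdot \langle \bw, \bx\rangle - \gamma \|\bw\|_2$. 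Since $\|\bw\|_2 > 0$, dividing by $\|\bw\|_2$ converts the condition $y \langle \bw, \bx\rangle - \gamma \|\bw\|_2 \leq 0$ into $y \langle \bw/\|\bw\|_2, \bx\rangle \leq \gamma$, which is precisely $\sgn(\langle \bw/\|\bw\|_2, \bx\rangle - y\gamma) \neq y$ (again using $y \in \{\pm 1\}$). Chaining these equivalences yields the pointwise identity and hence the lemma.

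I expect no serious obstacle here; the main thing to be careful about is the boundary case when $y \langle \bw, \bx\rangle - \gamma \|\bw\|_2 = 0$, which must be handled consistently with whatever sign convention is in force for $\sgn$. Since both the definition of $h_\bw$ and the definition of $\err_\gamma^\cD$ invoke $\sgn$ in the same way, this is easily arranged (e.g., by writing ``$\neq y$'' as a weak inequality on both sides, using that $\{(\bx, y) : y \langle \bw, \bx\rangle = \gamma \|\bw\|_2\}$ is the shared boundary). A secondary subtlety is that $\bP_\gamma(\bx) = \{\bz \in \cX : \|\bz - \bx\|_2 \leq \gamma\}$ restricts $\bz$ to $\cX$; the argument above implicitly takes $\cX = \R^d$, which is the convention of the cited reference~\citep{DKM20} and is the setting relevant to the remainder of the paper.
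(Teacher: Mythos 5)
Your proof is correct: the pointwise identification of the two indicator events via minimizing $y \cdot \left<\bw, \bz\right>$ over the closed $\gamma$-ball (with minimizer $\bx - \gamma y \bw / \|\bw\|_2$, by Cauchy--Schwarz), followed by integration over $\cD$, is exactly the standard folklore argument, and your handling of the $\sgn$ boundary convention is sound since the attained minimum makes the strict and weak versions match on both sides. The paper itself gives no proof of this lemma (it cites it as folklore from the reference), so there is no alternative route to compare against; your remark that the restriction of $\bP_\gamma(\bx)$ to $\cX$ is resolved by adopting the cited reference's convention is the right way to dispose of the one genuine subtlety.
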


Due to the above lemma, we may refer to the $\gamma$-margin error for halfspaces instead of their robust risk throughout the paper. 

\subsection{Boosting the Success Probability}

Throughout this work, it is often more convenient to prove lower bounds (resp. upper bounds) only for some large (resp. small) failure probability $\xi \in (0, 1)$. We note that this is without loss of generality, since standard techniques can be used to boost the success probability while incurring small loss in the sample complexity. We sketch the argument below.

\begin{observation}
For any $\xi, \xi' \in (0, 1)$, the following statement holds:
If there is a $(\gamma, \gamma')$-robust learner with failure probability $\xi$, accuracy $\alpha$ and sample complexity $m$, then there exists a $(\gamma, \gamma')$-robust learner with failure probability $\xi'$, accuracy $1.1\alpha$ and sample complexity $O_{\xi, \xi'}(m + 1/\alpha^2)$.
\end{observation}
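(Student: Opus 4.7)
The plan is to use the standard ``run many times and pick the best'' boosting template: run the given learner on several independent chunks of data, then use a held-out validation set to pick out a hypothesis with small robust risk. Let $k := \lceil \log(2/\xi')/\log(1/\xi) \rceil = O_{\xi,\xi'}(1)$ and let $n_{\text{val}} := O(\log(k/\xi')/\alpha^2) = O_{\xi'}(1/\alpha^2)$. Draw $k \cdot m + n_{\text{val}}$ i.i.d.\ samples from $\cD$, split them into $k$ training blocks $S_1, \dots, S_k$ of size $m$ each and one validation block $V$ of size $n_{\text{val}}$. Run the given $(\gamma,\gamma')$-robust learner on each $S_j$ to obtain hypotheses $h_1, \dots, h_k$, then for each $j$ compute the empirical robust error
\[
\hat{R}_j \;:=\; \frac{1}{n_{\text{val}}} \sum_{(\bx,y)\in V} \mathbf{1}\!\left[\exists\, \bz \in \bP_{\gamma'}(\bx),\ h_j(\bz) \ne y\right],
\]
and output $h_{j^*}$ with $j^* \in \argmin_j \hat{R}_j$.

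For the analysis, I would control two bad events. First, by independence across the $S_j$'s, the probability that \emph{none} of $h_1,\dots,h_k$ satisfies $\cR_{\gamma'}(h_j,\cD) \leq \alpha$ is at most $\xi^k \leq \xi'/2$ by our choice of $k$. Second, since each summand in $\hat{R}_j$ is a Bernoulli with mean $\cR_{\gamma'}(h_j,\cD)$ and the validation set $V$ is drawn independently of $h_1,\dots,h_k$, a Hoeffding bound followed by a union bound over $j \in [k]$ gives that, with probability at least $1 - \xi'/2$,
\[
\bigl| \hat{R}_j - \cR_{\gamma'}(h_j,\cD)\bigr| \;\leq\; \alpha/40 \quad \text{for every } j \in [k],
\]
provided $n_{\text{val}} = \Theta(\log(k/\xi')/\alpha^2)$. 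On the intersection of the two good events (probability at least $1-\xi'$), the minimizer $j^*$ satisfies $\hat R_{j^*} \leq \hat R_j \leq \alpha + \alpha/40$ for a ``good'' $j$, and hence $\cR_{\gamma'}(h_{j^*},\cD) \leq \hat R_{j^*} + \alpha/40 \leq (1 + 1/20)\alpha < 1.1\alpha$. The total number of samples is $km + n_{\text{val}} = O_{\xi,\xi'}(m + 1/\alpha^2)$, as required.

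There is essentially no hard obstacle; the only subtlety worth mentioning is that the validation step requires evaluating the indicator $\mathbf{1}[\exists \bz \in \bP_{\gamma'}(\bx),\ h_j(\bz)\ne y]$, which is a question about the hypothesis (not about samples). Since the observation is purely a sample-complexity statement, it is harmless to assume such query access to each returned hypothesis; for the proper halfspace learners used elsewhere in the paper this amounts to the standard margin check, and for general improper learners we simply take this evaluation as part of the definition of the robust-risk objective. The independence of $V$ from $h_1,\dots,h_k$ (ensured by the disjoint split of the input dataset) is what makes the Hoeffding step valid without any uniform-convergence machinery, which is why the additive $1/\alpha^2$ term does not involve any complexity measure of the hypothesis class.
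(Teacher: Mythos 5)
Your proposal is correct and follows essentially the same route as the paper's proof sketch: run the learner $O_{\xi,\xi'}(1)$ times on fresh samples and select, via a held-out validation set, the hypothesis with smallest empirical robust ($\gamma'$-margin) error, bounding the two failure events by independence and a Chernoff/Hoeffding plus union bound. The only cosmetic difference is that you use an additive Hoeffding bound needing $\Theta(1/\alpha^2)$ validation samples while the paper's sketch uses a multiplicative Chernoff bound with $O(\log T/\alpha)$ validation samples; both fit the claimed $O_{\xi,\xi'}(m+1/\alpha^2)$.
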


\begin{proof}[Proof Sketch]
Let $\A$ be the $(\gamma, \gamma')$-robust learner with failure probability $\xi$, accuracy $\alpha$ and sample complexity $m$. We define an algorithm $\bB$ as follows:
\begin{itemize}
\item Let $T := \lceil \frac{\log(0.5\xi')}{\log(1 - \xi)} \rceil$ and $M := \lceil \frac{10^6 \cdot \log T}{\alpha} \rceil$.
\item For $i \in [T]$, run $\A$ on $m$ samples to get a halfspace $\bw_i$.
\item Sample $M$ fresh new samples. Then, output $\bw_i$ that minimizes the $\gamma$-margin error of $\bw_i$ on the uniform distribution over these $M$ samples.
\end{itemize}
Clearly, the algorithm $\bB$ uses $m \cdot T + M = O_{\xi, \xi'}(m + 1/\alpha^2)$ samples as desired. For the accuracy, with probability $1 - (1 - \xi)^T \geq 1 - 0.5\xi'$ at least one of the $\bw_i$'s satisfies $\err^{\cD}_{\gamma'}(\bw_i) \leq \alpha$. Conditioned on this, the Chernoff bound ensures that w.p. $1 - 0.5\xi$ we output a $\bw_i$ s.t. $\err^{\cD}_{\gamma'}(\bw_i) \leq 1.1\alpha$. We can then conclude the proof via the union bound.
\end{proof}
\section{Lower Bound for Robust Learning of Halfspaces: Pure-DP Case}

In this section, we prove our lower bound for $\eps$-DP robust learning of halfspaces (Theorem~\ref{thm:lower-bound}), which is restated below.

\lowerbound*

We will use the following (well-known) fact; for completeness, we sketch its proof at the end of this section.

\begin{lemma} \label{lem:codes}
There exist $\bw^{(1)}, \dots, \bw^{(K)} \in \R^d$ where $K = 2^{\Omega(d)}$ such that $\|\bw^{(i)}\|_2 = 1$ for all $i \in [K]$ and $|\left<\bw^{(i)}, \bw^{(j)}\right>| < 0.01$ for all $i \ne j$.
\end{lemma}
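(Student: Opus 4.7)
The plan is to prove Lemma~\ref{lem:codes} by the probabilistic method: sample $K$ vectors independently and uniformly from the unit sphere $S^{d-1}$, and show that with strictly positive probability (in fact, with probability tending to $1$) every pair satisfies the near-orthogonality bound. This is the standard way one obtains exponentially many nearly orthogonal unit vectors.

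The key quantitative ingredient is a concentration estimate for the inner product of two independent uniformly random unit vectors $\bu,\bv \in S^{d-1}$. By rotational invariance we may condition on $\bv$ and assume without loss of generality that $\bv = \be_1$, so $\langle \bu,\bv\rangle = u_1$. A standard fact about the uniform distribution on the sphere (for instance, via comparing to a Gaussian: write $\bu = \bg/\|\bg\|$ for $\bg \sim \cN(0,I_d)$ and use concentration of $\|\bg\|$ around $\sqrt{d}$) yields
\[
\Pr_{\bu \sim S^{d-1}}\bigl[|u_1| \geq 0.01\bigr] \;\leq\; 2\,e^{-c d}
\]
for some absolute constant $c > 0$. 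I would cite (or briefly derive) this from the Gaussian tail $\Pr[|g_1| \geq t] \leq 2e^{-t^2/2}$ combined with $\Pr[\|\bg\| \leq \sqrt{d}/2] \leq e^{-\Omega(d)}$.

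Given this tail bound, I would pick $K = \lfloor 2^{cd/4}\rfloor$, sample $\bw^{(1)},\dots,\bw^{(K)}$ i.i.d. uniformly on $S^{d-1}$, and apply a union bound over the $\binom{K}{2} \leq K^2/2$ pairs:
\[
\Pr\!\left[\exists\, i \neq j:\, |\langle \bw^{(i)}, \bw^{(j)}\rangle| \geq 0.01\right]
\;\leq\; \tfrac{K^2}{2}\cdot 2 e^{-cd}
\;\leq\; 2^{cd/2} \cdot e^{-cd}
\;<\; 1
\]
for all sufficiently large $d$ (and for small $d$ the statement is trivial since one can just take $K=1$). Hence there exists a realization of the $\bw^{(i)}$'s with all pairwise inner products strictly less than $0.01$ in absolute value, and each $\bw^{(i)}$ has unit norm by construction. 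This yields $K = 2^{\Omega(d)}$ as required.

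The only delicate step is the sphere tail bound; that is really the ``content'' of the argument. I would either invoke it as a well-known fact about concentration on the sphere, or spend two lines deriving it from Gaussian concentration as sketched above. Everything else is a direct union bound, so no further obstacle is anticipated. (As an alternative construction one could instead take $\bw^{(i)} = c^{(i)}/\sqrt{d}$ where $c^{(1)},\dots,c^{(K)} \in \{\pm 1\}^d$ are codewords of a binary code with minimum Hamming distance $(1/2 - 0.005)d$, e.g., a random code or an explicit asymptotically good code; this gives the same conclusion deterministically, but the probabilistic route is cleaner.)
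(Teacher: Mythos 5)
Your proof is correct, but it takes a different route from the paper. You argue probabilistically: sample $K = 2^{\Theta(d)}$ i.i.d.\ uniform unit vectors, use the spherical-cap (or Gaussian-comparison) tail bound $\Pr[|\langle \bu,\bv\rangle| \geq 0.01] \leq 2e^{-cd}$, and union bound over pairs; the numerics check out since $2^{cd/2} e^{-cd} < 1$. The paper instead gives a construction from coding theory: it takes a binary linear code of constant rate whose nonzero codewords all have Hamming weight in $[0.4995d, 0.5005d]$, maps each codeword to a $\pm 1/\sqrt{d}$ vector, and notes that the inner product of two such vectors equals $1 - 2\|\bv^{(i)}-\bv^{(j)}\|_0/d$, which is at most $0.01$ in absolute value by linearity of the code. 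The trade-off is the usual one: your argument is self-contained modulo a standard concentration estimate and is arguably the cleanest way to get the existence statement, while the paper's reduction to codes yields a deterministic (and, with explicit codes, constructive) family and avoids any probabilistic calculation. One small caution about your parenthetical alternative: if you go the coding route you need the pairwise Hamming distances bounded \emph{above} as well as below (i.e., distances in $[(1/2-0.005)d,(1/2+0.005)d]$), since a distance much larger than $d/2$ would make the inner product close to $-1$; the paper handles this by requiring all nonzero codeword weights to lie in a narrow window around $d/2$, which linearity then transfers to pairwise differences. Your main probabilistic argument needs no such fix and stands as is.
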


\begin{proof}[Proof of Theorem~\ref{thm:lower-bound}]
We will prove the statement for any $\gamma \leq 0.99, \alpha \leq 0.49$ and $\xi \leq 0.9$.

Let $\bw^{(1)}, \dots, \bw^{(K)}$ be the vectors guaranteed by Lemma~\ref{lem:codes}. For each $i \in [K]$, we define $\cD^{(i)}$ to be the uniform distribution on two elements: $(1.01\gamma \cdot \bw^{(i)}, +1)$ and $(-1.01\gamma \cdot \bw^{(i)}, -1)$. Notice that $\mathcal{R}_{\gamma}(\bw^{(i)}, \cD^{(i)}) = 0$.

Now, let $G^{(i)} = \{h: \bB^d \to \{\pm 1\} \mid \mathcal{R}_{0.9\gamma}(h, \cD^{(i)}) \leq \alpha\}$ denote the set of hypotheses which incurs error no more than $\alpha$ on $\cD^{(i)}$. The main claim is the following:

\begin{claim} \label{claim:disjoint-answer}
For every $i \ne j$, $G^{(i)} \cap G^{(j)} = \emptyset$.
\end{claim}

\begin{proof}
Suppose for the sake of contradiction that there exists $h \in G^{(i)} \cap G^{(j)}$ for some $i \ne j$.

Since $\alpha \leq 0.49$ and $\cD^{(i)}$ is a uniform distribution over only two samples, $\mathcal{R}_{0.9\gamma}(h, \cD^{(i)}) \leq \alpha$ implies that $\mathcal{R}_{0.9\gamma}(h, \cD^{(i)}) = 0$. This implies that
\begin{align*}
h(z) = 1 & & \forall z \in \bP_{0.9\gamma}(1.01\gamma \cdot \bw^{(i)}).
\end{align*}

By an analogous argument, we have
\begin{align*}
h(z) = -1 & & \forall z \in \bP_{0.9\gamma}(-1.01\gamma \cdot \bw^{(j)}).
\end{align*}
This is a contradiction since $\bP_{0.9\gamma}(-1.01\gamma \cdot \bw^{(j)}) \cap \bP_{0.9\gamma}(1.01\gamma \cdot \bw^{(i)}) \ne \emptyset$; specifically, $|\left<\bw^{(i)}, \bw^{(j)}\right>| < 0.01$ implies that this intersection contains $0.505 \gamma \cdot \bw^{(i)} - 0.505 \gamma \cdot \bw^{(j)}$.
\end{proof}

To finish the proof, consider any $\eps$-DP $(\gamma, 0.9\gamma)$-robust learner $\A$ with $\alpha \leq 0.49$. Suppose that it takes $n$ samples. Notice that, when we feed it $n$ random samples from $\cD^{(i)}$, the accuracy guarantee ensures that
\begin{align*}
\Pr_{(\bx_1, y_1), \dots, (\bx_n, y_n) \sim \cD^{(i)}}[\A((\bx_1, y_1), \dots, (\bx_n, y_n)) \in G^{(i)}] \geq 1 - \xi.
\end{align*}
As a result, since $\A$ is $\eps$-DP, we have
\begin{align}
\Pr[\A(\emptyset) \in G^{(i)}] \geq (1 - \xi) \cdot e^{-\eps \cdot n} \geq 0.1 \cdot e^{-\eps \cdot n}. \label{eq:packing-single-bound}
\end{align}
From Claim~\ref{claim:disjoint-answer}, $G^{(1)}, \dots, G^{(K)}$ are disjoint, which implies
\begin{align*}
1 \geq \sum_{i \in [K]} \Pr[\A(\emptyset) \in G^{(i)}] \overset{\eqref{eq:packing-single-bound}}{\geq} K \cdot 0.1 \cdot e^{-\eps \cdot n}.
\end{align*}
Thus, we have $n \geq \Omega\left(\frac{\log K}{\eps}\right) = \Omega(d/\eps)$ as desired.
\end{proof}

Finally, we briefly sketch the proof of Lemma~\ref{lem:codes}.

\begin{proof}[Proof of Lemma~\ref{lem:codes}]
It is well-known that there exist linear error correcting codes over $\mathbb{F}_2$ with constant rate and distance 0.4995. (See e.g.~\cite[Section 7]{AlonGHP90} for an explanation.) Equivalently, this means that there exists a linear space $V \subseteq \mathbb{F}_2^d$ of dimension $\Omega(d)$ such that $\|\bv\|_0 \in [0.4995d, 0.5005d]$ for all non-zero $\bv \in V$ where $\|\cdot\|_0$ denote the Hamming norm (i.e. number of non-zero coordinates).

Let $\bv^{(1)}, \dots, \bv^{(K)}$ denote the elements of $V$ notice that $K = 2^{\dim(V)} = 2^{\Omega(d)}$. Define $\bw^{(1)}, \dots, \bw^{(K)} \in \R^d$ where 
\begin{align*}
\bw^{(i)}_\ell =
\begin{cases}
-1/\sqrt{d} & \text{ if  } \bv^{(i)}_\ell = 0 \\
+1/\sqrt{d} & \text{ if  } \bv^{(i)}_\ell = 1.
\end{cases} 
\end{align*}
For $i \ne j$, we have
\begin{align*}
\left|\left<\bw^{(i)}, \bw^{(j)}\right>\right| = |1 - 2 \cdot \|\bv^{(i)} - \bv^{(j)}\|_0 / d| \leq 0.01d,
\end{align*}
where the latter follows from linearity of $V$. This concludes our proof.
\end{proof}
\section{Lower Bound for Robust Learning of Halfspaces: Approximate-DP Case}

For our lower bound for approximate-DP proper learners (Theorem~\ref{thm:lower-bound-apx-proper}), we will reduce from a lower bound of \cite{SteinkeU17}. To state their results, we will need some additional notation. 
Let $\cU_{[0, 1]}$ denote the uniform distribution on $[0, 1]$, and let $\cB_q$ denote the distribution that is $+1/\sqrt{d}$ with probability $q$ and is $-1/\sqrt{d}$ otherwise. For $\bq \in [0, 1]^d$, we use $\cB_\bq$ to denote $\cB_{q_1} \otimes \cdots \otimes \cB_{q_d}$. \cite{SteinkeU17} prove the following theorem\footnote{We remark that (1) the result of~\cite{SteinkeU17} is stated for the Beta distributions which contain the uniform distribution (i.e., $\cU([0, 1]) = \Beta(1, 1)$) (2) we scale down the output $\cM(\bX)$ by a factor of $1/\sqrt{k}$ (which has the same effect on the error), (3) we replace the Bernoulli distribution with $\cB_q$ which is valid since there is a one-to-one mapping between the two, (4) the theorem of~\cite{SteinkeU17} has another parameter $k$ which we simply set to $d$ and (5) the original theorem in~\cite{SteinkeU17} implicitly imposes a bound on $\|\cM(\bX)\|_{\infty}$ but the actual condition needed is on $\|\cM(\bX)\|_1$ which is already implied by our condition that $\|\cM(\bX)\|_2 \leq 1$.}\footnote{We also remark that a similar theorem can already be derived via the work of \cite{DworkSSUV15}; however, we choose to state this version since it is more compatible with our reduction and is readily available already in~\citep{SteinkeU17}.}:

\begin{theorem}[{\cite[Theorem 3]{SteinkeU17}}] \label{thm:fingerprinting}
Let $\zeta > 0$ and $n, d \in \N$ be such that $n < \zeta \sqrt{d}$. Let $\cM$ be any $(1, \zeta/n)$-DP algorithm whose output belongs to the $d$-dimensional unit Euclidean ball. Let $\bX = (\bx_1, \dots, \bx_n)$ be such that $\bx_i$ is i.i.d. drawn from $\cB_{\bq}$. Then,
\begin{align} \label{eq:fingerprinting-correlation-upper-bound}
\E_{\bq \sim \cU_{[0, 1]}^{\otimes d}, \bw \sim  \cM(\cB_{\bq}^{\otimes n})}\left[\sum_{j \in [d]} \bw_j \cdot \left(q_j - 0.5\right)\right] < \zeta \sqrt{d}.
\end{align}
\end{theorem}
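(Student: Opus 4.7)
My plan is to prove this correlation upper bound by a fingerprinting-style argument: first derive an exact algebraic relationship between $Z := \E\bigl[\sum_j w_j(q_j-1/2)\bigr]$ and the in-sample correlations $\E[\langle w, \bx_i\rangle]$, then invoke the $(1,\zeta/n)$-DP hypothesis through a tracing-adversary analysis whose failure probability is absorbed by the calibration $\delta=\zeta/n$, in the spirit of~\citep{BunUV18,DworkSSUV15,SteinkeU17}.

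\textbf{Fingerprinting identity.} Since $q_j\sim \cU([0,1])$ and $\bx_{i,j}\mid q_j$ is $\pm 1/\sqrt{d}$ with bias $q_j$, the Bayes posterior of $q_j$ given the $j$-th column $\bx_{\cdot,j}$ is $\Beta(s_j+1,\, n-s_j+1)$, where $s_j$ counts the $+1/\sqrt{d}$ entries; a direct calculation yields
\[
\E\bigl[q_j - \tfrac{1}{2} \mid \bx_{\cdot,j}\bigr] \;=\; \frac{\sqrt{d}}{2(n+2)}\sum_{i=1}^n \bx_{i,j}.
\]
Marginalizing $w_j$ onto $\bx_{\cdot,j}$ via the tower property and summing over $j$ gives $Z = \frac{\sqrt{d}}{2(n+2)}\sum_i \E[\langle w, \bx_i\rangle]$. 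For each $i$ let $\bX^{(i)}$ denote the neighboring dataset obtained by replacing $\bx_i$ with a fresh independent draw $\bx_i'\sim\cB_{\bq}$; then $\bx_i$ is independent of $w(\bX^{(i)})$ given $\bq$, and since $\E[\bx_i\mid\bq]=(2\bq-\bone)/\sqrt{d}$ one obtains $\E[\langle w(\bX^{(i)}),\bx_i\rangle]=2Z/\sqrt{d}$. Subtracting and summing over $i$, the excess correlation telescopes to
\[
\sum_{i=1}^n\Bigl(\E[\langle w(\bX), \bx_i\rangle]-\E[\langle w(\bX^{(i)}), \bx_i\rangle]\Bigr) \;=\; \frac{4Z}{\sqrt{d}}.
\]

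\textbf{Tracing-adversary bound.} The remaining step is to upper-bound this excess correlation by $O(\zeta)$ using $(1,\zeta/n)$-DP; combined with the identity this yields $Z<\zeta\sqrt{d}$. I would interpret the $i$-th summand as the bias of a tracing adversary that, given $w$ and the pair $(\bx_i,\bx_i')$, guesses which entry is in the training set by comparing their inner products with $w$. Correctness: if $Z\geq\zeta\sqrt{d}$, then the excess correlation is $\geq 4\zeta$, so averaged over $i$ the adversary has advantage $\Omega(\zeta/n)$. Soundness: DP implies that the adversary's bias per sample is controlled by the privacy-loss random variable, which under exchangeability of $(\bx_i,\bx_i')$ given $\bq$ has bounded scale; a union bound over the $n$ traced samples absorbs the additive $\delta$-error into $n\delta=\zeta$. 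Balancing these under the hypothesis $n<\zeta\sqrt{d}$ yields the contradiction.

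\textbf{Main obstacle.} The delicate part is the quantitative tracing bound. A direct one-sided approximate-DP inequality on $\phi_i(w)=\langle w,\bx_i\rangle\in[-1,1]$ yields only $\E[\phi_i(\cM(\bX))]-\E[\phi_i(\cM(\bX^{(i)}))] = O\bigl((e-1)+\zeta/n\bigr) = O(1)$ per sample, bounding $4Z/\sqrt{d}$ by $O(n)$ and hence $Z$ by $O(n\sqrt{d})$---off by a factor of $\sqrt{d}$ in the target regime. Closing this gap requires (a) exploiting the exchangeability of $(\bx_i,\bx_i')$ conditional on $\bq$ to reformulate the bias in terms of the privacy-loss random variable (for which approximate DP provides sub-Gaussian-type tails, rather than the worst-case $e^\eps$), and (b) the precise calibration $\delta=\zeta/n$ so that the union-bound failure probability over the $n$ traced samples collapses to $\zeta$. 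This exchangeability-plus-tight-$\delta$ analysis is the technical core of~\cite{SteinkeU17}, and explains why both scalings $n<\zeta\sqrt{d}$ and $\delta=\zeta/n$ appear as essential hypotheses.
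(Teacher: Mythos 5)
A framing point first: the paper does not prove this statement at all --- it is Theorem~3 of \cite{SteinkeU17}, imported with only the cosmetic rescalings listed in the accompanying footnote --- so there is no in-paper proof to compare against. Judged on its own terms, the first half of your proposal is correct and is exactly the standard fingerprinting setup: the $\Beta(s_j+1,\,n-s_j+1)$ posterior calculation, the identity $Z=\frac{\sqrt{d}}{2(n+2)}\sum_i\E[\langle\bw,\bx_i\rangle]$, the evaluation $\E[\langle\bw(\bX^{(i)}),\bx_i\rangle]=2Z/\sqrt{d}$ via conditional independence given $\bq$, and the telescoped excess correlation $4Z/\sqrt{d}$ all check out.

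The genuine gap is the second half, which you explicitly leave to the cited work; worse, your diagnosis of what is needed to close it points in the wrong direction. No sub-Gaussian control of the privacy-loss random variable and no union bound over the $n$ traced samples is required. The resolution of your ``main obstacle'' is to apply the one-sided approximate-DP inequality not to $\phi_i(\bw)=\langle\bw,\bx_i\rangle$ but to the positive and negative parts of the \emph{centered} statistic $Z_i':=\langle\bw(\bX^{(i)}),\,\bx_i-\bq'\rangle$ with $\bq'=(2\bq-\bone)/\sqrt{d}$, which (conditionally on $\bq$, $\bx_i$, and the rest of the data) is a fixed function of the output bounded by $2$ in absolute value. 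Since $\E[(Z_i')^+]=\E[(Z_i')^-]$, the DP inequality gives $\E[Z_i]-\E[Z_i']\leq O\bigl(\E[|Z_i'|]\bigr)+O(\delta)$, and the decisive observation is that $\E[|Z_i'|]\leq\sqrt{\E[(Z_i')^2]}\leq 1/\sqrt{d}$: conditionally on $\bq$ the vector $\bx_i$ is independent of $\bw(\bX^{(i)})$ with coordinatewise variance at most $1/d$, and $\|\bw\|_2\leq 1$. Summing over $i$ and using $\E[Z_i']=0$ yields $4Z/\sqrt{d}\leq O(n/\sqrt{d})+O(n\delta)=O(\zeta)$ under the hypotheses $n<\zeta\sqrt{d}$ and $\delta=\zeta/n$, i.e., $Z=O(\zeta\sqrt{d})$. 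In short, the mechanism is a first-moment bound on the independent-sample test statistic, not tail bounds on the privacy loss; without this observation your argument does not close (you correctly note it stalls at $O(n\sqrt{d})$), and with it the argument becomes essentially elementary --- only the exact constant $1$ in $<\zeta\sqrt{d}$, as opposed to $O(\zeta\sqrt{d})$, requires the more careful bookkeeping of \cite{SteinkeU17}.
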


In the next subsection, we first show a lower bound of $\Omega(\sqrt{d})$ for any sufficiently small constant $\gamma$ (Lemma~\ref{lem:distributional-constant-gamma-hardness}). Then, in Subsection~\ref{subsec:apx-dp-lb-small-gamma}, we use this to prove a lower bound of $\Omega(\min\{\sqrt{d}/\gamma, d\})$.

\subsection{Lower Bound for $\gamma = \Omega(1)$}

We cannot use the distribution $\cB_\bq$ directly since it is not realizable with a large margin. To overcome this, we define $\cP_\bq$ as the distribution of $\bx \sim \cB_{\bq}$ conditioned on $\left<\bq', \bx\right> \geq 0.01$ where we write $\bq'$ as a shorthand for $\frac{1}{\sqrt{d}}\left(2\bq - \bone\right)$. We will require the following bound:

\begin{lemma} \label{lem:tv-margin}
$\E_{\bq \sim \cU_{[0, 1]}^{\otimes d}}\left[d_{TV}(\cB_{\bq}, \cP_{\bq})\right] \leq o(1/d)$.
\end{lemma}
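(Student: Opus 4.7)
The plan is to reduce the total variation to a tail probability and then apply a standard concentration argument. First, I would invoke the elementary fact that for any probability measure $\mu$ and event $E$ with $\mu(E) > 0$, $d_{TV}(\mu, \mu(\cdot \mid E)) = \mu(E^c)$ (the unique change in mass is that $\mu(E^c)$ moves into $E$). Applying this with $\mu = \cB_\bq$ and $E = \{\bx : \langle \bq', \bx\rangle \geq 0.01\}$ gives
\[
d_{TV}(\cB_\bq, \cP_\bq) \;=\; \Pr_{\bx \sim \cB_\bq}\!\left[\langle \bq', \bx\rangle < 0.01\right],
\]
so it suffices to show that this tail probability, averaged over $\bq \sim \cU_{[0,1]}^{\otimes d}$, is $o(1/d)$.

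Next, I would compute the mean of $\langle \bq', \bx\rangle$. Writing $s_j := \sqrt{d}\, \bx_j \in \{\pm 1\}$, conditional on $\bq$ we have $\E[s_j \mid q_j] = 2q_j - 1$, and
\[
\langle \bq', \bx\rangle \;=\; \frac{1}{d}\sum_{j \in [d]} (2q_j - 1)\, s_j.
\]
Taking expectation over both $\bq$ and $\bx$ yields $\E_{\bq, \bx}[\langle \bq', \bx\rangle] = \E_{q \sim \cU_{[0,1]}}[(2q-1)^2] = 1/3$, comfortably above the threshold $0.01$, so one expects deviations below $0.01$ to be exponentially rare.

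Finally, I would formalize this via two applications of Hoeffding's inequality. Conditional on $\bq$, each summand $(2q_j - 1) s_j$ lies in $[-1, 1]$ and the $s_j$'s are independent, so Hoeffding yields
\[
\Pr_{\bx \sim \cB_\bq}\!\left[\frac{1}{d}\sum_j (2q_j-1)\, s_j \;<\; \frac{1}{d}\sum_j (2q_j-1)^2 - 0.1 \,\Big|\, \bq\right] \leq e^{-\Omega(d)}.
\]
Since $(2q_j - 1)^2$ are i.i.d.\ in $[0, 1]$ with mean $1/3$, a second application of Hoeffding (now over $\bq$) gives
\[
\Pr_\bq\!\left[\frac{1}{d}\sum_j (2q_j-1)^2 \;<\; \tfrac{1}{3} - 0.1\right] \leq e^{-\Omega(d)}.
\]
A union bound then shows $\Pr_{\bq, \bx}[\langle \bq', \bx\rangle < 0.01] \leq 2 e^{-\Omega(d)} = o(1/d)$, as desired.

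The argument is essentially routine; the only minor care needed is to trivially bound $d_{TV} \leq 1$ on the exponentially small set of $\bq$ where the conditional distribution $\cP_\bq$ fails to be well-defined, which contributes at most $e^{-\Omega(d)}$ to the expectation and is absorbed into the $o(1/d)$ bound. I do not anticipate a real obstacle: the statement is just a two-level Chernoff/Hoeffding computation wrapped around the standard ``TV equals complementary mass'' identity for conditioning.
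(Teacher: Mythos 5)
Your proof is correct and takes essentially the same route as the paper: reduce $d_{TV}(\cB_{\bq},\cP_{\bq})$ to the removed mass $\Pr_{\bx \sim \cB_{\bq}}[\langle \bq', \bx\rangle < 0.01]$, then apply one concentration bound over $\bq$ (showing $\|\bq'\|^2 = \tfrac{1}{d}\sum_j (2q_j-1)^2$ is bounded away from $0$) and one over $\bx$ conditionally on $\bq$, finishing with a union bound. Your explicit use of the ``TV equals complementary mass'' identity, the more generous threshold on $\|\bq'\|^2$, and the remark about the exponentially rare $\bq$ for which $\cP_{\bq}$ is ill-defined are minor refinements of the paper's sketch rather than a different argument.
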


\begin{proof}
The Chernoff bound implies that $\Pr_{\bq \sim \cU_{[0, 1]}^{\otimes d}}[\|\bq'\| \geq 0.1] \geq o(1/d)$. For a fixed $\bq$ such that $\|\bq'\| \geq 0.1$, the Chernoff bound again yields that $\Pr_{\bx \sim \cB_{\bq}}[\left<\bq', \bx\right> \geq 0.01] \leq o(1/d)$, which implies that $d_{TV}(\cB_{\bq}, \cP_{\bq}) \leq o(1/d)$. Combining these, we have $\E_{\bq \sim \cU_{[0, 1]}^{\otimes d}}\left[d_{TV}(\cB_{\bq}, \cP_{\bq})\right] \leq o(1/d)$ as desired.
\end{proof}

Let $\tcP_{\bq}$ denote the distribution of $(\bx, +1)$ where $\bx \sim \cP_{\bq}$. Similarly, let $\tcB_{\bq}$ denote the distribution of $(\bx, +1)$ where $\bx \sim \cB_{\bq}$.
We can now prove our $\Omega(\sqrt{d})$ lower bound for any sufficiently small constant $\gamma > 0$, which follows almost immediately from the following lemma.

\begin{lemma} \label{lem:distributional-constant-gamma-hardness}
For any constant $\gamma, \beta \in (0, 1)$ such that $\gamma > 2\beta$, the following holds.
Let $\A$ be any $(1, o(1/n))$-DP algorithm with sample complexity $n$ and whose output belongs to the $d$-dimensional unit Euclidean ball. If
\begin{align*}
\E_{\bq \sim \cU_{[0, 1]}^{\otimes d}, \bw \sim \A(\tcP_\bq^{\otimes n})}[\err_{\gamma}^{\tcP_\bq}(\bw)] \leq \beta,
\end{align*}
then we must have $n \geq \Omega(\sqrt{d})$.
\end{lemma}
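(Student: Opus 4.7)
I will reduce to Theorem~\ref{thm:fingerprinting}. Define the algorithm $\cM$ that, on input $n$ samples $\bx_1,\ldots,\bx_n\sim\cB_\bq$, attaches the label $+1$ to each sample and runs $\A$ on the resulting labeled dataset, returning its output. Attaching labels is data-independent post-processing, so $\cM$ inherits the $(1,o(1/n))$-DP guarantee of $\A$, and by construction $\cM(\cB_\bq^{\otimes n})=\A(\tcB_\bq^{\otimes n})$ as distributions. It therefore suffices to show that, under the hypothesis of the lemma, $\cM$ produces an output whose expected correlation with $\bq-\tfrac{1}{2}\bone$ is $\Omega(\sqrt{d})$; contrapositively, Theorem~\ref{thm:fingerprinting} will then force $n=\Omega(\sqrt{d})$.

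The first step is to convert the low-margin-error hypothesis into a correlation bound on $\cP_\bq$. Fix any $\bq$ and any $\bw$ in the unit ball. Every point in the support of $\cP_\bq$ has $\|\bx\|=1$, and the event $\{\langle\bw,\bx\rangle<\gamma\}$ has probability exactly $\err_\gamma^{\tcP_\bq}(\bw)$, so splitting gives
\begin{align*}
\E_{\bx\sim\cP_\bq}[\langle\bw,\bx\rangle] \;\ge\; \gamma\bigl(1-\err_\gamma^{\tcP_\bq}(\bw)\bigr)-\err_\gamma^{\tcP_\bq}(\bw) \;=\; \gamma-(1+\gamma)\,\err_\gamma^{\tcP_\bq}(\bw).
\end{align*}
Averaging over $\bq\sim\cU_{[0,1]}^{\otimes d}$ and $\bw\sim\A(\tcP_\bq^{\otimes n})$ and applying the hypothesis produces a lower bound of $c:=\gamma-(1+\gamma)\beta$. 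Since $\gamma>2\beta$ and $\gamma<1$, a short calculation shows $c>0$, i.e., $c$ is a positive absolute constant.

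The second step transfers this bound from $\cP_\bq$ to $\cB_\bq$ via Lemma~\ref{lem:tv-margin}. Post-processing plus tensorization yields $d_{TV}\bigl(\A(\tcP_\bq^{\otimes n}),\A(\tcB_\bq^{\otimes n})\bigr)\le n\cdot d_{TV}(\cP_\bq,\cB_\bq)$, and since $|\langle\bw,\bx\rangle|\le 1$, switching both the training samples and the test point contributes at most $(2n+2)\cdot\E_\bq[d_{TV}(\cP_\bq,\cB_\bq)]=o(n/d)$. Using $\E_{\bx\sim\cB_\bq}[\bx_j]=(2q_j-1)/\sqrt{d}$ to rewrite the inner product in attribute-mean form gives
\[
\E_{\bq,\,\bw\sim\cM(\cB_\bq^{\otimes n})}\Bigl[\textstyle\sum_{j\in[d]}\bw_j(q_j-\tfrac{1}{2})\Bigr] \;\ge\; \tfrac{\sqrt{d}}{2}\bigl(c-o(n/d)\bigr).
\]
Set $\zeta:=c/4$. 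Since $\cM$ is $(1,o(1/n))$-DP, it is $(1,\zeta/n)$-DP for all sufficiently large $n$. Suppose for contradiction that $n<\zeta\sqrt{d}$; then $n/d=O(1/\sqrt{d})$, so the error term above is $o(1)$, and for large $d$ the left-hand side exceeds $(c/3)\sqrt{d}>\zeta\sqrt{d}$, contradicting the conclusion of Theorem~\ref{thm:fingerprinting}. Hence $n\ge\zeta\sqrt{d}=\Omega(\sqrt{d})$.

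\textbf{Main obstacle.} The delicate part is the TV bookkeeping in the transfer step. The linear functional $\sum_j\bw_j(q_j-\tfrac{1}{2})$ has magnitude up to $\sqrt{d}/2$, so any loss $\eta$ in total variation inflates to $\Theta(\eta\sqrt{d})$. Lemma~\ref{lem:tv-margin}'s $o(1/d)$ bound, multiplied by $n$ (for tensorization) and by $\sqrt{d}$ (for the functional norm), gives an error of $o(n/\sqrt{d})$, which is $o(1)$ precisely in the critical regime $n=O(\sqrt{d})$. This is exactly tight enough to preserve the $\Omega(\sqrt{d})$ correlation required to invoke Theorem~\ref{thm:fingerprinting}; without the $o(1/d)$ strength of Lemma~\ref{lem:tv-margin}, the reduction would not close.
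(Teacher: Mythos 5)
Your proof is correct and follows essentially the same route as the paper: you define the same reduction $\cM$ (attach $+1$ labels and run $\A$), convert the margin-error hypothesis into an $\Omega(1)$ lower bound on $\E_{\bx}[\langle\bw,\bx\rangle]$, transfer between $\cP_\bq$ and $\cB_\bq$ using Lemma~\ref{lem:tv-margin} at a total cost of $o(1)$ in the $\sqrt{d}$-scaled correlation, and contradict Theorem~\ref{thm:fingerprinting}. The only differences are organizational (you bundle the two TV switches together and are slightly more explicit about the choice of $\zeta$), so this matches the paper's argument.
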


\begin{proof}
Suppose for the sake of contradiction that there exists a $(1, o(1/n))$-DP algorithm $\A$ with sample complexity $n = o(\sqrt{d})$ whose output is a $d$-dimensional vector of Euclidean norm at most one that satisfies $\E_{\bq \sim \cU_{[0, 1]}^{\otimes d}, \bw \sim \A(\cP_\bq^{\otimes n})}[\err_{\gamma}^{\cP_\bq}(\bw)] \leq \beta$.

On input $\bx_1, \dots, \bx_n \in \{\pm 1/\sqrt{d}\}^d$, $\cM$ simply works as follows: Run $\A$ on $(\bx_1, +1), \dots, (\bx_n, +1)$ to obtain a halfspace $\bw$ and output $\bw$. Now, we have that
\begin{align}
&\E_{\bq \sim \cU_{[0, 1]}^{\otimes d}, \bw \sim  \cM(\cB_{\bq}^{\otimes n})}\left[\sum_{j \in [d]} \bw_j \cdot \left(q_j - 0.5\right)\right] \nonumber \\
&= \E_{\bq \sim \cU_{[0, 1]}^{\otimes d}, \bw \sim \A(\tcB_{\bq}^{\otimes n})}\left[\sum_{j \in [d]} \bw_j \cdot \left(q_j - 0.5\right)\right] \nonumber \\
&\geq \E_{\bq \sim \cU_{[0, 1]}^{\otimes d}, \bw \sim \A(\tcP_{\bq}^{\otimes n})}\left[\sum_{j \in [d]} \bw_j \cdot \left(q_j - 0.5\right)\right] - \E_{\bq \sim \cU_{[0, 1]}^{\otimes d}}\left[d_{TV}(\tcP_{\bq}^{\otimes n}, \tcB_{\bq}^{\otimes n}) \cdot (0.5\sqrt{d})\right] \nonumber \\
&\geq \E_{\bq \sim \cU_{[0, 1]}^{\otimes d}, \bw \sim \A(\tcP_{\bq}^{\otimes n})}\left[\sum_{j \in [d]} \bw_j \cdot \left(q_j - 0.5\right)\right] - (0.5 n\sqrt{d}) \cdot \E_{\bq \sim \cU_{[0, 1]}^{\otimes d}}\left[d_{TV}(\cP_{\bq}, \cB_{\bq})\right] \nonumber \\
&\overset{\text{Lemma}~\ref{lem:tv-margin}}{\geq} \E_{\bq \sim \cU_{[0, 1]}^{\otimes d}, \bw \sim \A(\tcP_{\bq}^{\otimes n})}\left[\sum_{j \in [d]} \bw_j \cdot \left(q_j - 0.5\right)\right] - o(n/\sqrt{d}) \nonumber \\
&= \E_{\bq \sim \cU_{[0, 1]}^{\otimes d}, \bw \sim \A(\tcP_{\bq}^{\otimes n})}\left[\sum_{j \in [d]} \bw_j \cdot \left(q_j - 0.5\right)\right] - o(1), \label{eq:conditioned-to-halfspace}
\end{align}
where in the first inequality we use the fact that $\|\bw\|_2 \leq 1$, which implies that $\|\bw\|_1 \leq \sqrt{d}$.

Notice that we may rearrange the term inside the expectation in~\eqref{eq:conditioned-to-halfspace} as follows:
\begin{align}
\sum_{j \in [d]} \bw_j \cdot (q_j - 0.5)
&= \frac{\sqrt{d}}{2} \sum_{j \in [d]} \bw_j \cdot \frac{2q_j - 1}{\sqrt{d}} \nonumber \\
&= \frac{\sqrt{d}}{2} \left<\bw, \bq'\right> \nonumber \\
&= \frac{\sqrt{d}}{2} \left<\bw, \E_{\bx \sim \cB_\bq}[\bx]\right> \nonumber \\
&= \frac{\sqrt{d}}{2} \cdot \E_{\bx \sim \cB_\bq}[\left<\bw, \bx\right>] \nonumber \\
&\overset{\text{Lemma}~\ref{lem:tv-margin}}{\geq} \frac{\sqrt{d}}{2} \cdot \E_{\bx \sim \cP_\bq}[\left<\bw, \bx\right>] - o(1) \nonumber \\
&\geq \frac{\sqrt{d}}{2} \left(\gamma \cdot \Pr_{\bx \sim \cP_\bq}[\left<\bw, \bx\right> \geq \gamma] - 1 \cdot \Pr[\left<\bw, \bx\right> < \gamma]\right) - o(1) \nonumber \\
&= \frac{\sqrt{d}}{2} \left(\gamma \cdot (1 - \err^{\tcP_{\bq}}_\gamma(\bw)) - \err^{\tcP_{\bq}}_\gamma(\bw)\right) - o(1) \nonumber \\
&\geq \frac{\sqrt{d}}{2} \left(\gamma - 2 \err^{\tcP_{\bq}}_\gamma(\bw)) \right) - o(1) \nonumber 
\end{align}

Plugging this back into~\eqref{eq:conditioned-to-halfspace}, we have that
\begin{align*}
&\E_{\bq \sim \cU_{[0, 1]}^{\otimes d}, \bw \sim  \cM(\cB_{\bq}^{\otimes n})}\left[\sum_{j \in [d]} \bw_j \cdot \left(q_j - 0.5\right)\right] \\ &\geq \frac{\sqrt{d}}{2} \cdot \left(\gamma - 2 \cdot \E_{\bq \sim \cU_{[0, 1]}^{\otimes d}, \bw \sim \A(\tcP_{\bq}^{\otimes n})}\left[\err^{\tcP_{\bq}}_\gamma(\bw))\right]\right) - o(1) \\
&\geq \frac{\sqrt{d}}{2}\left(\gamma - 2 \beta\right) - o(1) \\
&= \Omega(\sqrt{d}),
\end{align*}
which contradicts Theorem~\ref{thm:fingerprinting}. This concludes our proof.
\end{proof}

\subsection{Lower Bound for Smaller $\gamma$}
\label{subsec:apx-dp-lb-small-gamma}

We will now reduce from the case $\gamma = \Omega(1)$ to get a larger lower bound for smaller $\gamma$. To do this, it will be convenient to have an ``expected version'' of Lemma~\ref{lem:distributional-constant-gamma-hardness}, which is stated and proved below.

\begin{lemma} \label{lem:distributional-constant-gamma-hardness-convenient}
For any constants $\gamma_0, \beta_0 \in (0, 1)$ such that $\gamma_0 > 4 \sqrt{2\beta_0}$, the following holds.
Let $\bB$ be any $(1, o(1/n))$-DP algorithm that has access to an oracle $\cO$ that can sample from $\tcP_{\bq}$ where $\bq$ is unknown to $\bB$. All of the following cannot hold simultaneously:
\begin{enumerate}
\item The expected number of samples $\bB$ draws from $\cO$ is $o(\sqrt{d})$.
\item $\E_{\bq \sim \cU_{[0, 1]}^{\otimes d}, \bw \sim \bB_{\tcP_{\bq}}}[\|\bw\|^2] \leq 1$.
\item $\E_{\bq \sim \cU_{[0, 1]}^{\otimes d}, \bw \sim \bB_{\tcP_{\bq}}}[\err^{\tcP_\bq}_{\gamma_0}(\bw)] \leq \beta_0$.
\end{enumerate}
\end{lemma}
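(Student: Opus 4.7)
My plan is to reduce the ``in-expectation'' version to the fixed-sample version already established in Lemma~\ref{lem:distributional-constant-gamma-hardness}, by using Markov's inequality to truncate both $\bB$'s sample budget and the norm of $\bB$'s output. Suppose for contradiction such a $\bB$ exists, and let $N := \E_{\bq \sim \cU_{[0,1]}^{\otimes d}, \bB}[\#\text{samples of }\bB] = o(\sqrt d)$. Fix constants $\theta, \mu \in (0,1)$ (to be pinned down at the end) and set $n := \lceil N/\theta \rceil = o(\sqrt d)$ and $M := 1/\sqrt \mu$.

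I define a fixed-sample algorithm $\A$ as follows: on input $(\bx_1, \dots, \bx_n)$, simulate $\bB$, feeding $\bx_i$ on its $i$-th oracle query; if $\bB$ ever demands an $(n{+}1)$-st sample, abort and output $\bzero$; otherwise, let $\bw$ be $\bB$'s output and return $\bw/M$ if $\|\bw\|\le M$ and $\bzero$ if $\|\bw\|>M$. Both the early-stop rule and the output clipping are post-processings of $\bB$ (the early stop truncates $\bB$'s transcript length, which is also a function of the full $n$-sample input), so $\A$ inherits $(1,o(1/n))$-DP from $\bB$ (using that $n$ and $N$ differ by a constant factor), and $\A$'s output lies in the unit Euclidean ball by construction.

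For the error accounting, Markov's inequality gives $\Pr[\bB\text{ exceeds budget}] \le N/n \le \theta$ and $\Pr[\|\bw\|>M] \le \E[\|\bw\|^2]/M^2 \le \mu$. Conditioned on neither bad event occurring, the scaling identity
$$\err_{\gamma/M}^{\cD}(\bw/M) \;=\; \Pr_{(\bx,+1)\sim\cD}\!\left[\sgn\!\left(\tfrac{1}{M}\langle \bw,\bx\rangle - \tfrac{\gamma}{M}\right)\neq 1\right] \;=\; \err_{\gamma}^{\cD}(\bw)$$
shows that scaling by $1/M$ preserves the $\gamma_0$-margin error while shrinking the target margin to $\gamma_0/M$. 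Bounding the error by $1$ on the two bad events gives
$$\E\!\left[\err^{\tcP_\bq}_{\gamma_0/M}(\A\text{'s output})\right] \;\le\; \theta + \mu + \beta_0.$$

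To close the argument, I apply Lemma~\ref{lem:distributional-constant-gamma-hardness} to $\A$ with $\gamma := \gamma_0/M$ and $\beta := \theta + \mu + \beta_0$. The hypothesis $\gamma > 2\beta$ becomes $\gamma_0 > 2M\theta + 2/M + 2M\beta_0$. Taking $M := 1/\sqrt{\beta_0}$ (so $\mu = \beta_0$) reduces this to $\gamma_0 > 4\sqrt{\beta_0} + 2\theta/\sqrt{\beta_0}$, and the assumed $\gamma_0 > 4\sqrt{2\beta_0}$ leaves slack $4(\sqrt 2 - 1)\sqrt{\beta_0} > 0$ beyond $4\sqrt{\beta_0}$, so any sufficiently small constant $\theta$ suffices. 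Lemma~\ref{lem:distributional-constant-gamma-hardness} then forces the sample complexity of $\A$ to be $\ge \Omega(\sqrt d)$, contradicting $n = o(\sqrt d)$. The only delicate step is this balancing of constants---the strict inequality $\gamma_0 > 4\sqrt{2\beta_0}$ is exactly what provides room for the Markov slack $\theta + \mu$; the DP book-keeping is routine since truncation and clipping are post-processing.
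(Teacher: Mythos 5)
Your proposal is correct and follows essentially the same route as the paper's proof: a reduction to Lemma~\ref{lem:distributional-constant-gamma-hardness} by Markov-truncating the sample budget and the output norm, then rescaling so the margin hypothesis $\gamma > 2\beta$ is met using the slack from $\gamma_0 > 4\sqrt{2\beta_0}$. The only cosmetic difference is that you rescale by the fixed threshold $M$ (using the exact scaling identity), whereas the paper normalizes the output to unit norm and uses monotonicity of the margin error; the bookkeeping is equivalent.
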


\begin{proof}
Suppose for the sake of contradiction that there exists a $(1, o(1/n))$ algorithm $\bB$ that draws $o(\sqrt{d})$ samples from $\cO$ in expectation, and satisfies $\E_{\bq \sim \cU_{[0, 1]}^{\otimes d}, \bw \sim \bB_{\tcP_{\bq}}}[\|\bw\|^2] \leq 1$ and $\E_{\bq \sim \cU_{[0, 1]}^{\otimes d}, \bw \sim \bB_{\tcP_{\bq}}}[\err^{\tcP_\bq}_{\gamma_0}(\bw)] \leq \beta_0$. We use $\bB$ to construct an algorithm $\A$ that will contradict  Lemma~\ref{lem:distributional-constant-gamma-hardness} as follows:
\begin{itemize}
\item Run $\bB$.
\item If $\bB$ attempts to take more than $2n / \beta_0$ sample, simply output $\bzero$.
\item Otherwise, let $\bw$ be the output of $\bB$, and output $\bw' = \frac{\bw}{\|\bw\|}$. 
\end{itemize}
Notice that $\A$ is $(1, o(1/n))$-DP and the number of samples used is $2n / \beta_0 = o(\sqrt{d})$.

Let $\beta = 2\beta_0$ and $\gamma = \gamma_0 \sqrt{\beta_0/2}$.
We will next argue that $\E_{\bq \sim \cU_{[0, 1]}^{\otimes d}, \bw' \sim \A(\tcP_\bq^{\otimes n})}[\err_{\gamma}^{\tcP_\bq}(\bw')] \leq \beta$. First, since the expected number of samples of $\bB$ is $n$, by Markov's inequality, the probability that $\bB$ takes more than $2n / \beta_0$ samples is at most $\beta_0 / 2$. As a result, we have that
\begin{align*}
\E_{\bq \sim \cU_{[0, 1]}^{\otimes d}, \bw' \sim \A(\tcP_\bq^{\otimes n})}[\err_{\gamma}^{\tcP_\bq}(\bw')] \leq \E_{\bq \sim \cU_{[0, 1]}^{\otimes d}, \bw \sim \B_{\tcP_\bq^{\otimes n}}}[\err_{\gamma}^{\tcP_\bq}(\bw / \|\bw\|)] + \beta_0 / 2.
\end{align*}
Recall also that $\E_{\bq \sim \cU_{[0, 1]}^{\otimes d}, \bB}[\|\bw\|^2] \leq 1$; Markov's inequality once again implies that $\Pr_{\bq \sim \cU_{[0, 1]}^{\otimes d}, \bB}[\|\bw\|^2 > 2/\beta_0] \leq \beta_0/2$. Plugging this into the above inequality, we get that
\begin{align*}
\E_{\bq \sim \cU_{[0, 1]}^{\otimes d}, \bw' \sim \A(\tcP_\bq^{\otimes n})}[\err_{\gamma}^{\tcP_\bq}(\bw')]
&\leq \E_{\bq \sim \cU_{[0, 1]}^{\otimes d}, \bw \sim \B_{\tcP_\bq^{\otimes n}}}[\err_{\gamma}^{\tcP_\bq}(\bw / \|\bw\|) \cdot \1[\|\bw\| \leq \sqrt{2/\beta_0}]] + \beta_0. \\
&\leq \E_{\bq \sim \cU_{[0, 1]}^{\otimes d}, \bw \sim \B_{\tcP_\bq^{\otimes n}}}[\err_{\gamma}^{\tcP_\bq}(\bw / \sqrt{2 / \beta_0})] + \beta_0 \\
&= \E_{\bq \sim \cU_{[0, 1]}^{\otimes d}, \bw \sim \B_{\tcP_\bq^{\otimes n}}}[\err_{\gamma_0}^{\tcP_\bq}(\bw)] + \beta_0 \\
(\text{From our third assumption on } \bB) &\leq 2\beta_0 = \beta,
\end{align*}
which is a contradiction to Lemma~\ref{lem:distributional-constant-gamma-hardness} since $\gamma > 2\beta$.
\end{proof}

We can now prove our $\Omega(\min\{\sqrt{d}/\gamma, d\})$ lower bound (Theorem~\ref{thm:lower-bound-apx-proper}). Roughly speaking, when $d \geq 1/\gamma^2$, we ``embed'' $\Theta(1/\gamma^2)$ hard distributions from Lemma~\ref{lem:distributional-constant-gamma-hardness-convenient} into $\Theta(\gamma^2 d)$ dimensions, which results in the $\Omega(\sqrt{\gamma^2 d} \cdot 1/\gamma^2) = \Omega\left(\sqrt{d}/\gamma\right)$ lower bound as desired.

\lowerboundapx*

\begin{proof}
We will prove this lower bound for $\gamma \leq 0.01, \alpha, \xi \leq 10^{-6}$.

First, notice that, when $\gamma \leq 1/\sqrt{d}$, a $(\gamma, 0.9\gamma)$-robust proper learner is also an $(1/\sqrt{d}, 0)$-robust proper learner. Hence, by Theorem~\ref{thm:lower-bound-non-robust}, we have $n = \Omega(d)$ as desired. Thus, we can subsequenly only focus on the case $\gamma \geq 1/\sqrt{d}$, for which we will show that $n = \Omega(\sqrt{d}/\gamma)$.

Suppose for the sake of contradiction that there is a $(1, o(1/n))$-DP $(\gamma, 0.9\gamma)$-robust proper learner $\A$ with $\alpha, \xi \leq 10^{-6}$ that has sample complexity $n = o(\sqrt{d} / \gamma)$. Let $T = \lfloor 0.01 / \gamma\rfloor$, and $d' = \lfloor d / T^2 \rfloor$. We will construct an algorithm $\bB$ that contradicts with Lemma~\ref{lem:distributional-constant-gamma-hardness-convenient} in $d'$ dimensions.

We will henceforth assume w.l.o.g. that $d = d' \cdot T^2$. This is without loss of generality since the proof below extends to the case $d > d' \cdot T^2$ by padding $d - d' \cdot T^2$ zeros to each of the samples.

In the following, we view the $d$-dimensional space $\R^d$ as the tensor $\R^{T^2} \otimes \R^{d'}$. Furthermore, we write $\be_i$ as a shorthand for the $i$-th vector in the standard basis of $\R^{d'}$.

The algorithm $\bB$ with an oracle $\cO$ to sample from $\tcP_{\bq}$ where $\bq$ is unknown to $\bB$ works as follows:
\begin{itemize}
\item Randomly draw $\bq_1, \dots, \bq_{T^2}$ i.i.d. from $\cU_{[0, 1]}^{\otimes d}$, and randomly sample $i^* \in [T^2]$.
\item Draw $n$ samples $(\bx_1, y_1), \dots, (\bx_n, y_n)$ independently as follows:
\begin{itemize}
\item Draw $i \sim [T^2]$.
\item If $i \ne i^*$, then draw $(\bx, y) \sim \tcP_{\bq_i}$ and let the sample be $(\bx \otimes \be_i, y)$.
\item If $i = i^*$, the draw $(\bx, y) \sim \tcP_{\bq}$ using $\cO$ and let the sample be $(\bx \otimes \be_i, y)$.
\end{itemize}
\item Run $\A$ on $(\bx_1, y_1), \dots, (\bx_n, y_n)$. Suppose that the output halfspace is $\bw$.
\item Write $\bw$ as $\sum_{i \in [T^2]} \bw^i \otimes \be_i$ for $\bw_1, \dots, \bw_{T^2} \in \R^{d'}$. Then, output $T \cdot \bw^{i^*}$.
\end{itemize}
Clearly, $\bB$ is $(1, o(1/n))$-DP and it takes $n / T = o(\sqrt{d'})$ samples in expectation from $\cP_{\bq}$.

For the ease of presentation, we will write $\cQ$ as a shorthand for the mixture of distribution where we draw $i \sim [T]$, and return $(\bx \otimes \be_i, y)$ where $(\bx, y) \sim \tcP_{\bq_i}$. Moreover, we write $\tcQ$ as a similar distribution but when $\bq_{i^*}$ is replaced by $\bq$. Under this notation, we have that
\begin{align*}
\E_{\bq \sim \cU_{[0, 1]}^{\otimes d}, \bw \sim \bB_{\tcP_{\bq}}}[\|\bw\|^2]
&= \E_{\bq_1, \dots, \bq_{T^2}, \bq \sim \cU_{[0, 1]}^{\otimes d}, i^* \sim [T], \bw \sim \A(\tcQ^n)}\left[\|T \cdot\bw^{i^*}\|^2\right] \\
&= \E_{\bq_1, \dots, \bq_{T^2} \sim \cU_{[0, 1]}^{\otimes d}, i^* \sim [T], \bw \sim \A(\cQ^n)}\left[\|T \cdot\bw^{i^*}\|^2\right] \\
&= \E_{\bq_1, \dots, \bq_{T^2} \sim \cU_{[0, 1]}^{\otimes d}, \bw \sim \A(\cQ^n)}\left[\frac{1}{T^2} \cdot \sum_{i^* \in [T^2]} \|T \cdot \bw^{i^*}\|^2\right] \\
&= \E_{\bq_1, \dots, \bq_{T^2} \sim \cU_{[0, 1]}^{\otimes d}, \bw \sim \A(\cQ^n)}\left[\|\bw\|^2\right] \\
&\leq 1.
\end{align*}

Finally, we will argue the accuracy of $\bB$ where $\gamma_0 = 0.01, \beta_0 = 2 \cdot 10^{-6}$. Once again we rewrite it as
\begin{align*}
\E_{\bq \sim \cU_{[0, 1]}^{\otimes d}, \bw \sim \bB_{\tcP_{\bq}}}[\err^{\tcP_\bq}_{\gamma_0}(\bw)]
&=  \E_{\bq_1, \dots, \bq_{T^2}, \bq, i^*, \bw \sim \A(\tcQ^n)}\left[\err^{\tcP_{\bq}}_{\gamma_0}(T \cdot \bw^{i^*})\right] \\
&= \E_{\bq_1, \dots, \bq_{T^2}, i^*, \bw \sim \A(\cQ^n)}\left[\err^{\tcP_{\bq}}_{\gamma_0 / T}(\bw^{i^*})\right] \\
(\text{Since } \gamma_0 / T \leq 0.1\gamma) &\leq \E_{\bq_1, \dots, \bq_{T^2}, i^*, \bw \sim \A(\cQ^n)}\left[\err^{\cP_{\bq}}_{0.1\gamma}(\bw^{i^*})\right] \\
&= \E_{\bq_1, \dots, \bq_{T^2}, \bw \sim \A(\cQ^n)}\left[\frac{1}{T^2} \sum_{i^* \in [T^2]} \err^{\tcP_{\bq}}_{0.1\gamma}(\bw^{i^*})\right] \\
&= \E_{\bq_1, \dots, \bq_{T^2}, \bw \sim \A(\cQ^n)}\left[\err^{\cQ}_{0.1\gamma}(\bw)\right].
\end{align*}

Now, notice that the halfspace $\bw^* = \frac{1}{T} \sum_{i \in [T^2]} \bq'_i \otimes \be_i$ (whose Euclidean norm is at most one) correctly classifies each point in $\supp(\cQ)$ with margin $\frac{0.01}{T} \geq \gamma$. As a result, the accuracy guarantee of $\A$ ensures that $\E_{\bw \sim \A(\cQ^n)}[\err^{\cQ}_{0.1\gamma}(\bw)] \leq \alpha(1 - \xi) + \xi \leq \beta_0$. Plugging into the above, we have
\begin{align*}
\E_{\bq \sim \cU_{[0, 1]}^{\otimes d}, \bw \sim \bB_{\tcP_{\bq}}}[\err^{\tcP_\bq}_{\gamma_0}(\bw)] \leq \beta_0,
\end{align*}
which contradicts with Lemma~\ref{lem:distributional-constant-gamma-hardness-convenient}.
\end{proof}
\section{Lower Bound for Non-Robust Learning of Halfspaces}

In this section, we provide a lower bound of $\Omega\left(\frac{1}{\eps\gamma^2}\right)$ on the sample complexity of \emph{non-robust} learners (Theorem~\ref{thm:lower-bound-non-robust}). While quantitatively similar, our lower bound significantly strengthens that of~\cite{NguyenUZ20} in two aspects: (1) our lower bounds hold against even \emph{improper} learners whereas the lower bound in~\citep{NguyenUZ20} is only valid against \emph{proper} learners and (2) our lower bound holds even against $(\eps, \delta)$-DP algorithms whereas that of~\cite{NguyenUZ20} is only valid when $\delta = 0$.

\lbnr*

To prove the above, we will require the following simple lemma, which states that the task of outputting an input bit requires $\Omega(1/\eps)$ equal samples in order to gain any non-trivial advantage over random guessing. The proof follows a straightforward packing argument.

\begin{lemma} \label{lem:lower-bound-non-robust-one-dim}
For $s \in \{\pm 1\}$, let $O_s$ denote the distribution which is $s$ with probability 1.
For any $\eps > 0$, there exists $\delta = \Omega(1/\eps)$ such that the following holds: There is no $(\eps, \delta)$-DP algorithm that can take at most $10^{-5} / \eps$ samples in expectation from $O_s$ for a random $s \in \{\pm 1\}$ and output $s$ correctly with probability 0.51.
\end{lemma}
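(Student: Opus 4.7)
The plan is a standard packing/group-privacy argument, with a preliminary truncation step that converts the bound on the \emph{expected} sample complexity into a worst-case bound.

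\emph{Reduction to a fixed sample count.} Suppose, towards contradiction, that there is an $(\eps,\delta)$-DP algorithm $\A$ that draws at most $m := 10^{-5}/\eps$ samples from $O_s$ in expectation (over $s\sim\mathrm{Unif}\{\pm 1\}$ and its internal randomness) and outputs $s$ correctly with probability at least $0.51$. Set $n := \lceil 10^{-2}/\eps \rceil$; by Markov's inequality, the probability that $\A$ tries to draw more than $n$ samples is at most $m/n \le 10^{-3}$. Modify $\A$ to abort and emit an arbitrary default output as soon as it requests an $(n{+}1)$-st sample, and then pad with unused ``dummy'' queries so that it always consumes exactly $n$ samples. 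The resulting algorithm $\A'$ is still $(\eps,\delta)$-DP, deterministically takes $n$ samples from $O_s$, and still outputs $s$ correctly with probability at least $0.51 - 10^{-3} \ge 0.509$.

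\emph{Group privacy.} Consider the two fixed length-$n$ datasets $\bX_+ := (+1,\dots,+1)$ and $\bX_- := (-1,\dots,-1)$; by construction, $\A'$ run on $O_s^{\otimes n}$ corresponds exactly to $\A'$ run on $\bX_s$. The two datasets lie at Hamming distance $n$, so iterating the DP guarantee $n$ times (group privacy) gives, for any output event $E$,
\begin{align*}
\Pr[\A'(\bX_+) \in E] \;\le\; e^{n\eps}\,\Pr[\A'(\bX_-) \in E] \;+\; n\,e^{n\eps}\,\delta.
\end{align*}
Taking $E = \{+1\}$ and using $\tfrac{1}{2}\Pr[\A'(\bX_+)=+1] + \tfrac{1}{2}\Pr[\A'(\bX_-)=-1] \ge 0.509$ yields $\Pr[\A'(\bX_+)=+1] - \Pr[\A'(\bX_-)=+1] \ge 0.018$. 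Combined with $e^{n\eps} - 1 \le 2n\eps \le 0.02$, this forces
\begin{align*}
0.018 \;\le\; (e^{n\eps}-1) + n\,e^{n\eps}\,\delta \;\le\; 0.02 + 1.02\,n\,\delta,
\end{align*}
which is violated as soon as $\delta$ is smaller than an appropriate absolute constant times $1/n = \Theta(\eps)$. Choosing such a $\delta$ (which is the $\delta$ promised by the lemma) produces the desired contradiction.

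\emph{Main obstacle.} The only non-routine step is the truncation: the lemma allows $\A$ to draw a random number of samples, whereas group privacy needs a fixed dataset size. The Markov + pad-with-dummies reduction above handles this at the cost of a tiny drop in success probability, and the numerical constants ($10^{-5}$ in the sample bound, $0.51$ in the success bound) are calibrated so that, after truncation, the remaining advantage over random guessing still exceeds the $e^{n\eps}-1$ slack, leaving the $\delta$ term as the only way to close the gap. Everything else is a direct calculation.
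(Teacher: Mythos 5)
Your overall strategy is exactly the paper's: truncate via Markov to a fixed sample count $n=\Theta(1/\eps)$ at a $10^{-3}$ loss in success probability, then run a packing/group-privacy argument over the $n$ identical samples. However, as written the argument does not actually close. First, your concluding chain is not a contradiction: $0.018 \le 0.02 + 1.02\,n\delta$ holds for \emph{every} $\delta \ge 0$, because you weakened $e^{n\eps}-1$ to $0.02$, which already exceeds the $0.018$ advantage you extracted from correctness. The slack from the $e^{n\eps}$ factor must be bounded \emph{strictly below} the advantage before the $\delta$ term can be blamed; with your $n=\lceil 10^{-2}/\eps\rceil$ this requires $n\eps$ genuinely small, which in turn requires either restricting to small $\eps$ (noting, as the paper implicitly does, that for larger $\eps$ the lemma is trivial because $10^{-5}/\eps$ expected samples cannot yield a $0.51$ success probability at all) or using a floor so that $n\eps\le 0.01$; you assert $2n\eps\le 0.02$, which is false for $\eps$ that is not tiny.

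Second, and more substantively, the paper defines neighboring datasets by \emph{add/remove}, not substitution. Under that definition $\bX_+$ and $\bX_-$ are at group-privacy distance $2n$, not $n$, so the multiplicative slack is $e^{2n\eps}-1\approx 0.0202$ even in the best case $n\eps=0.01$, which again exceeds your $0.018$ advantage; with your choice of $n$ the contradiction cannot be rescued by fixing the first issue alone. You would need to either truncate more aggressively (e.g.\ $n\approx 0.005/\eps$, accepting a slightly smaller but still sufficient advantage), or do what the paper does: compare $\bB(s^n)$ with $\bB(\emptyset)$, which costs only $n$ removals, and recover the averaged-over-$s$ correctness contradiction by using both branches via $\Pr[\bB(\emptyset)\ne 1]+\Pr[\bB(\emptyset)\ne -1]\ge 1$. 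The ideas are right and the repairs are routine, but the constants you chose are calibrated so tightly that these two oversights make the stated proof fail.
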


\begin{proof}
We may assume that $\eps < 1$ as it is clear that the algorithm needs at least one sample to output $s$ correctly with probability 0.51. Furthermore, let $\delta = \frac{0.001}{1 - e^{-\eps}}$.

Suppose for the sake of contradiction that there is an algorithm $\A$ that takes in at most $10^{-5} / \eps$ samples in expectation and output $s$ correctly with probability 0.51. By Markov inequality, with probability 0.999, $\A$ takes at most $n := \lfloor 0.01 / \eps \rfloor$ samples. Let $\bB$ be the modification of $\A$ where $\bB$ draws $n$ samples and runs $\A$ on them but fails whenever $\A$ attempts to draw more than $n$ samples. We have that $\bB$ outputs $s$ correctly with probability $0.509$. In other words, we have
\begin{align} \label{eq:correctness-one-dim}
\Pr[\bB(s^n) = s] \geq 0.509,
\end{align}
where $s^n$ denote $n$ inputs all equal to $s$.

Since $\A$ is $(\eps, \delta)$-DP, $\bB$ is also $(\eps, \delta)$-DP. Suppose without loss of generality that $\Pr[\bB(\emptyset) \ne 1] \geq \Pr[\bB(\emptyset) \ne -1]$. This implies that $\Pr[\bB(\emptyset) \ne 1] \geq 0.5$. From $(\eps, \delta)$-DP of $\bB$, we have
\begin{align*}
\Pr[\bB(1^n) \ne 1] &\geq e^{-\eps} \Pr[\bB(1^{n-1}) \ne 1] - \delta \\
&\qquad \vdots \\
&\geq e^{-n \eps} \Pr[\bB(\emptyset) \ne 1] - \delta(1 + e^{-\eps} + \cdots + e^{-n\eps}) \\
&\geq e^{-0.01} \cdot 0.5 - 0.001 \\
&> 0.491
\end{align*}
which contradicts~\eqref{eq:correctness-one-dim}. This concludes our proof.
\end{proof}

We can now prove Theorem~\ref{thm:lower-bound-non-robust}. Roughly speaking, we ``embed'' the hard problem in Lemma~\ref{lem:lower-bound-non-robust-one-dim} into each of the $d = 1/\gamma^2$ dimensions, which results in the $d \cdot \Omega(1/\eps) = \Omega\left(\frac{1}{\eps\gamma^2}\right)$ lower bound.

\begin{proof}[Proof of Theorem~\ref{thm:lower-bound-non-robust}]
We prove this statement for any $\gamma < 1, \alpha \leq 0.4$ and $\xi \leq 0.0001$.

Let $\delta$ be the same as in Lemma~\ref{lem:lower-bound-non-robust-one-dim}, and let $d = \lfloor 1/\gamma^2 \rfloor$. Suppose for the sake of contradiction that there exists an $(\eps, \delta)$-DP $(\gamma, 0)$-robust learner $\A$ that takes in at most $n := \lfloor 10^{-5} d / \eps \rfloor$ samples and outputs a hypothesis with error at most $\alpha \leq 0.4$ with probability $1 - \xi \geq 0.9999$. We will use $\A$ to construct an algorithm $\bB$ that can solve the problem in Lemma~\ref{lem:lower-bound-non-robust-one-dim}.

 For every $i \in [d]$ and $s \in \{\pm 1\}$, we use $\cD_{i, s}$ to denote the uniform distribution on $(\be_i, s)$ and $(-\be_i, -s)$. Furthermore, for $\bs \in \{\pm 1\}^d$, we use $\cD_{\bs}$ to denote the mixture $\frac{1}{d} \sum_{i \in [d]} \cD_{i, s_i}$. Our algorithm $\B$ works as follows:
 \begin{itemize}
 \item Randomly sample $\bs \in \{\pm 1\}^d$ and randomly sample $i^* \in [d]$.
 \item Draw $n$ samples $(\bx_1, y_1), \dots, (\bx_n, y_n)$ independently as follows:
 \begin{itemize}
 \item Randomly pick $i \in [d]$.
 \item If $i \ne i^*$, then return a sample drawn from $\cD_{i, s_i}$.
 \item Otherwise, if $i = i^*$, sample $a \sim O_s$. Then return the sample $(\be_i, a)$ with probability 0.5; otherwise, return the sample $(-\be_i, -a)$.
 \end{itemize}
 \item Run $\A$ on $(\bx_1, y_1), \dots, (\bx_n, y_n)$ to get a hypothesis $h$.
 \item With probability 0.5, return $h(\be_{i^*})$. Otherwise, return $-h(-\be_{i^*})$.
 \end{itemize}
 It is obvious to see that $\bB$ is $(\eps, \delta)$-DP and that the expected number of samples $\bB$ draws from $O_s$ is $n / d \leq 10^{-5} / \eps$. Hence, we only need to show that $\bB$ outputs a correct answer with probability 0.51 to get a contradiction with Lemma~\ref{lem:lower-bound-non-robust-one-dim}.

 Since $s$ is uniformly draw from $\{\pm 1\}$, the probability that $\bB$ outputs the \emph{incorrect} answer is equal to
 \begin{align*}
 &\E_{\bs \sim \{\pm 1\}^d, i \in [d], h \sim \A(\cD_{\bs}^{\otimes n})}\left[\frac{1}{2}\1\left[h(\be_i) \ne s_i\right] + \frac{1}{2}\1\left[h(-\be_i) \ne -s_i\right]\right] \\
 &= \E_{\bs \sim \{\pm 1\}^d, h \sim \A(\cD_{\bs}^{\otimes n})}\left[\frac{1}{d} \sum_{i \in [d]} \left(\frac{1}{2}\1\left[h(\be_i) \ne s_i\right] + \frac{1}{2}\1\left[h(-\be_i) \ne -s_i\right]\right)\right] \\
 &= \E_{\bs \sim \{\pm 1\}^d, h \sim \A(\cD_{\bs}^{\otimes n})}\left[\err^{\cD_{\bs}}_0(h)\right].
 \end{align*}
 Now, notice that any $(\bx, y) \in \supp(\cD_{\bs})$ is correctly classified by the halfspace $\bz := \frac{1}{\sqrt{d}} \sum_{i \in [d]} \be_i$ with margin $1/\sqrt{d} \geq \gamma$. As a result, the accuracy guarantee of $\A$ ensures that $\E_{h \sim \A(\cD_{\bs}^{\otimes n})}[\err^{\cD_{\bs}}_0(h))] \leq 1 \cdot 0.0001 + 0.4 \cdot 0.9999 < 0.41$. Thus, we can conclude that $\bB$ outputs the \emph{correct} answer with probability at least $1 - 0.41 > 0.59$. This contradicts Lemma~\ref{lem:lower-bound-non-robust-one-dim}.
\end{proof}

\section{Pure DP Robust Learner}

In this section, we give a pure-DP algorithm for robust learning of halfspaces:

\expmech*


To prove this result, we will also need the following generalization bound due to~\cite{bartlett2002rademacher}:

\newcommand{\hgamma}{\hat{\gamma}}
\newcommand{\hxi}{\hat{\xi}}

\begin{lemma}[Generalization Bound for Large Margin Halfspaces~\citep{bartlett2002rademacher}] \label{lem:generalization}
Suppose $\hgamma, \hxi \in [0, 1]$ and let $\cD$ be any distribution on $\bB^d \times \{\pm 1\}$. If we let $\bX$ be drawn from $\cD^{\otimes n}$, then the following holds with probability $1 - \hxi$:
\begin{align*}
\forall \bw \in \bB^d, &\err^{\cD}_{0.95\hgamma}(\bw) \leq \err^{\bX}_{\hgamma}(\bw) + 400\sqrt{\frac{\ln(4/\hxi)}{n\hgamma^2}}.
\end{align*}
\end{lemma}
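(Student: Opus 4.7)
The plan is to prove this via the standard Rademacher-complexity framework using a Lipschitz \emph{ramp loss} as a surrogate between the two margin indicators. Define the ramp $\phi_{\hgamma}: \R \to [0,1]$ to equal $1$ on $(-\infty, 0.95\hgamma]$, equal $0$ on $[\hgamma, \infty)$, and linearly interpolate on $[0.95\hgamma, \hgamma]$; this function is $L$-Lipschitz with $L = 1/(0.05\hgamma) = 20/\hgamma$, and by construction
\begin{align*}
\1[t \leq 0.95\hgamma] \;\leq\; \phi_{\hgamma}(t) \;\leq\; \1[t \leq \hgamma].
\end{align*}
Applied to $t = y\langle \bw, \bx\rangle$ and taking expectations under $\cD$ and under the empirical distribution of $\bX$, this sandwiches
\begin{align*}
\err^{\cD}_{0.95\hgamma}(\bw) \;\leq\; \E_{\cD}[\phi_{\hgamma}(y\langle \bw,\bx\rangle)] \text{ and } \frac{1}{n}\sum_{j} \phi_{\hgamma}(y_j\langle \bw,\bx_j\rangle) \;\leq\; \err^{\bX}_{\hgamma}(\bw).
\end{align*}

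Next I would apply the textbook uniform-convergence theorem for $[0,1]$-valued loss classes via Rademacher complexity (McDiarmid + symmetrization): with probability at least $1 - \hxi$ over $\bX \sim \cD^{\otimes n}$, every $\bw \in \bB^d$ satisfies
\begin{align*}
\E_{\cD}[\phi_{\hgamma}(y\langle \bw,\bx\rangle)] \leq \frac{1}{n}\sum_j \phi_{\hgamma}(y_j\langle \bw,\bx_j\rangle) + 2\,\cR_n(\phi_{\hgamma}\circ\cH) + 3\sqrt{\tfrac{\ln(4/\hxi)}{2n}},
\end{align*}
where $\cH = \{(\bx,y)\mapsto y\langle \bw,\bx\rangle : \bw \in \bB^d\}$. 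I would then invoke Talagrand's contraction principle for Rademacher averages to get $\cR_n(\phi_{\hgamma}\circ\cH) \leq L\cdot \cR_n(\cH) = (20/\hgamma)\cR_n(\cH)$. Finally, the Rademacher complexity of bounded linear functionals on $\bB^d$ is bounded by a Cauchy--Schwarz / Jensen calculation: $\cR_n(\cH) \leq \tfrac{1}{n}\E_{\sigma}\|\sum_j \sigma_j y_j\bx_j\| \leq \tfrac{1}{\sqrt{n}}$, using $\|\bx_j\|\leq 1$ and $y_j^2 = 1$.

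Chaining these three inequalities yields, with probability $1 - \hxi$, for every $\bw \in \bB^d$,
\begin{align*}
\err^{\cD}_{0.95\hgamma}(\bw) \;\leq\; \err^{\bX}_{\hgamma}(\bw) + \frac{40}{\hgamma\sqrt{n}} + 3\sqrt{\tfrac{\ln(4/\hxi)}{2n}}.
\end{align*}
Since $\hgamma \leq 1$ and $\ln(4/\hxi)\geq \ln 4 > 1$, both error terms are comfortably absorbed into the claimed $400\sqrt{\ln(4/\hxi)/(n\hgamma^2)}$, giving the lemma.

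No step is truly an obstacle—this is folklore—but the only place that requires a little care is the contraction step, since Talagrand's lemma is usually stated for Lipschitz functions that vanish at $0$ or for comparison inequalities that give an extra factor of $2$. I would either cite the version in Ledoux--Talagrand / Mohri--Rostamizadeh--Talwalkar directly, or equivalently replace $\phi_{\hgamma}$ by $\phi_{\hgamma} - \phi_{\hgamma}(0)$ (a constant shift which does not affect the Rademacher average), so that the $20/\hgamma$-Lipschitz constant transfers cleanly and the constant $400$ in the lemma absorbs any factor-of-$2$ slack.
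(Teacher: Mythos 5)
The paper does not prove this lemma at all — it is imported verbatim as a known result from Bartlett and Mendelson (2002) — so there is no in-paper argument to compare against. Your reconstruction (ramp-loss sandwich between the two margin thresholds, uniform convergence via Rademacher complexity, Talagrand contraction with Lipschitz constant $20/\hgamma$, and the $1/\sqrt{n}$ bound for linear functionals on $\bB^d$) is exactly the standard proof underlying the cited theorem, the constants check out against the slack in the $400$, and your caveat about the contraction lemma's $\phi(0)=0$ normalization is handled correctly.
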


\begin{proof}[Proof of Theorem~\ref{thm:exp-mech}]
We will prove this for $\xi = 0.9$. Let $\Lambda = 10^6 \cdot \sqrt{\log(1/\alpha)} \cdot \max\{\sqrt{d}, 1/\gamma\}$, and $n = \frac{10^4 \Lambda^2}{\eps \alpha} + \frac{10^{10}}{\alpha^2 \gamma^2} = O\left(\frac{\log(1/\alpha)}{\alpha \epsilon} \cdot \max\{d, 1 / \gamma^2\} + \frac{1}{\alpha^2 \gamma^2}\right)$. 

Our algorithm samples $(\bx_1, y_1), \dots, (\bx_n, y_n)$ from $\cD$, and then employs the exponential mechanism of~\cite{McSherryT07}. Specifically, let $\mu$ be the density of the uniform measure over the unit sphere in $\R^d$. Then, on the input dataset $\bX = ((\bx_1, y_1), \dots, (\bx_n, y_n))$, we define the scoring function $q$ by
\begin{align*}
q(\bX, \bw) = -n \cdot \err^{\bX}_{0.95\gamma}(\bw).
\end{align*}
Then, we output $\bhw$ drawn from the distribution with density $\mu'(\bw) \propto \mu(\bw) \cdot \exp\left(\frac{\eps}{2} \cdot q(\bX, \bw)\right)$.

We will next argue the accuracy guarantee of the algorithm. Similar to~\citep{McSherryT07}, let $S_t := \{\bw \mid q(\bX, \bw) \geq - t\}$. We start by showing that, with probability 0.99, we have $\bhw \in S_{0.5 \alpha n}$. To prove this, we will use the following result from~\citep{McSherryT07}:
\begin{lemma} \label{lem:exp-mech-accuracy}
For any $t \geq 0$, $\Pr[\bhw \notin S_{2t}] \leq \exp(-\epsilon t / 2) / \mu(S_t)$.
\end{lemma}

In light of Lemma~\ref{lem:exp-mech-accuracy}, it suffices for us to provide a lower bound for $\mu(S_{0.25\alpha n})$. Recall from the realizable assumption that, there exists a unit-norm $\bw^*$ such that $\err^{\bX}_{\gamma}(\bw^*) = 0$. Since $\mu(S_{0.25\alpha n})$ is rotational-invariant, we may assume for notational convenience that $\bw^* = \be_d$, the $d$-th vector in the standard basis. In this notation, a sample $\bw \sim \mu$ may be obtained by:
\begin{itemize}
\item Sample $w_d \sim \cN(0, 1)$,
\item Sample $\bw_{\perp} \sim \cN(0, I_{(d-1) \times (d-1)})$,
\item Let $\bw = \frac{1}{T} \left(\bw_{\perp} \circ w_d\right)$ where $T = \sqrt{\|\bw_{\perp}\|^2 + w_d^2}$.
\end{itemize}

Fix $i \in [n]$. We will now bound the probability $\Pr[y_i \left<\bw, \bx_i\right> \leq 0.95\gamma \mid w_d \geq \Lambda]$. Let us write $y_i\bx_i$ as $\bx_{\perp} \circ x_d$. 
 $\left<\bw_{\perp}, \bx_{\perp}\right>$ is distributed as $\cN(0, \|\bx_{\perp}\|)$. Since $\|\bx_{\perp}\| \leq 1$, we may apply standard tail bound of Gaussian which gives 
\begin{align}
\Pr[\left<\bw_{\perp}, \bx_{\perp}\right> < -0.01\Lambda / \gamma] \leq \Pr[\left<\bw_{\perp}, \bx_{\perp}\right> < 10^4\sqrt{\log(1/\alpha)}] \leq 0.1\alpha.
\end{align}
Observe also that $\|\bw_{\perp}\|^2$ is simply distributed as $\chi^2_{d - 1}$. Hence, via standard tail bound (e.g.,~\citep{laurent2000adaptive}), we have
\begin{align}
\Pr[\|\bw_{\perp}\| > 0.01 \Lambda] \leq \Pr[\|\bw_{\perp}\| > 10^4 \sqrt{d \log(1/\alpha)}] \leq 0.1\alpha.
\end{align}
Furthermore, notice that when $w_d \geq \Lambda, \left<\bw_{\perp}, \bx_{\perp}\right> \geq -0.01\Lambda / \gamma$ and $\|\bw_{\perp}\| \leq 0.01 \Lambda$, we have $y_i \left<\bw, \bx_i\right> > 0.95\gamma$. As a result, a union bound and the independence of $w_d$ and $\bw_{\perp}$ implies that
\begin{align}
\Pr[y_i \left<\bw, \bx_i\right> \leq 0.95\gamma \mid w_d \geq \Lambda] &\leq \Pr[\left<\bw_{\perp}, \bx_{\perp}\right> < -0.01\Lambda / \gamma] + \Pr[\|\bw_{\perp}\| > 0.01 \Lambda] \nonumber \\
&\leq 0.2 \alpha. \label{eq:cond-each-sample}
\end{align}

From~\eqref{eq:cond-each-sample} and from the linearity of the expectation, we have that
\begin{align*}
\E[|\{i \in [n] \mid y_i \left<\bw, \bx_i\right> \leq 0.95\gamma\}| \mid w_d \geq \Lambda] \leq 0.2\alpha n.
\end{align*}
By Markov's inequality, we may conclude that
\begin{align*}
\Pr[\bw \in S_{0.25\alpha n} \mid w_d \geq \Lambda] \geq 0.1.
\end{align*}
Finally, recall that $w_d$ is distributed as $\cN(0, 1)$, which implies that $\Pr[w_d \geq \Lambda] \geq 2^{-10\Lambda^2}$. This gives
\begin{align} \label{eq:mu-bound}
\mu(S_{0.25\alpha n}) = \Pr[\bw \in S_{0.25\alpha n} \mid w_d \geq \Lambda] \Pr[w_d \geq \Lambda] \geq 0.1 \cdot 2^{-10\Lambda^2} \geq 2^{-20\Lambda^2}.
\end{align}

Hence, applying Lemma~\ref{lem:exp-mech-accuracy}, we get that
\begin{align*}
\Pr[\bhw \notin S_{0.5\alpha n}] &\leq \frac{\exp(-\eps t / 2)}{\mu(S_t)} \\
&\overset{\eqref{eq:mu-bound}}{\leq} \frac{\exp(-0.125 \eps \alpha n)}{2^{-20\Lambda^2}} \\
(\text{From our choice of } n) &\leq 0.99.
\end{align*}
In other words, with probability 0.99, we have $\err^{\bX}_{0.95\gamma}(\bhw) \leq 0.5\alpha$. Finally, via the generalization bound (Lemma~\ref{lem:generalization} with $\hgamma = 0.95\gamma$), we also have $\err_{0.9\gamma}^{\cD}(\bhw) \leq \alpha$ with probability 0.9 as desired.
\end{proof}
\section{Approximate-DP Robust Learner}

In this section, we describe our approximate-DP learner and prove its guarantee, restated below:

\perceptron*

As alluded to earlier, this algorithm is a noised and batch version of the margin perceptron algorithm~\citep{DudaH73,CollobertB04}. The algorithm is presented in Algorithm~\ref{alg:dp-batch-perceptron}.

The rest of this section is organized as follows. In the next subsection, we provide the utility analysis of the algorithm. Then, in Subsection~\ref{subsec:perceptron-privacy}, we analyze its privacy guarantee. Finally, we set the parameters and prove Theorem~\ref{thm:perceptron-apx-DP-simplified} in Section~\ref{subsec:perceptron-finishing}.

\subsection{Utility Analysis}
\label{subsec:perceptron-utility}

\newcommand{\boundparallel}{\left(0.01 \alpha \gammagap B\right)}

Suppose that there exists $\bw^* \in \bB^d$ with $\err_{\gamma}(\bw^*) = 0$. Furthermore, let $\gamma' = 0.95\gamma$, $\gammagap := \gamma - \gamma'$ and $B := pn$.
Throughout the analysis, we will assume that the following ``good'' events occur:
\begin{itemize}
\item $E_{\text{batch-size}}$: For all $i \in [T]$, $|S_i| \leq 1.5B$.
\item $E_{\text{noise-norm}}$: For all $i \in [T]$, $\|\bg_i\| \leq B\sqrt{\alpha}$.
\item $E_{\text{parallel}}$: For all $i \in [T]$, $\left<\bw_{i - 1} + \bu_i, \bg_i\right> \leq 0.01 \alpha \gammagap B \cdot \|\bw_{i - 1} + \bu_i\|$.
\item $E_{\text{opt-noise}}$: For all $i \in [T]$, $\left<\bw^*, \bg_i\right> \geq -0.01 \alpha \gammagap B$.
\item $E_{\text{mistake-noise}}$: For all $i \in [T]$, $\nu_i \in [-0.1\alpha B, 0.1\alpha B]$.
\item $E_{\text{sampled-mistake}}$: For all $i \in [T]$ such that\footnote{Similar to before, we use $\bX$ to denote $(\bx_1, y_1), \dots, (\bx_n, y_n)$.} $\err^{\bX}_{\gamma'}\left(\frac{\bw_{i - 1}}{\|\bw_{i - 1}\|}\right) > 0.5\alpha$, we have $|M_i| \geq 0.4\alpha B$.
\end{itemize}
Later on, we will select the parameters $p, n, T, b, \sigma$ so that these events happen with high probability.

\begin{lemma} \label{lem:perceptron-util}
Let $T = \lceil \frac{1500}{\alpha \gammagap^2} \rceil$. If the events $E_{\text{batch-size}}, E_{\text{noise-norm}}, E_{\text{parallel}}, E_{\text{opt-noise}}, E_{\text{mistake-noise}}$ and $E_{\text{sampled-mistake}}$ all occur, then the algorithm outputs $\bw$ such that $\err^{\bX}_{\gamma'}(\bw) \leq 0.5\alpha$.
\end{lemma}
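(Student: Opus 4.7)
My plan is to prove the lemma in two parts. First, I would show that whenever the early-stop line fires at some iteration $i\le T$, the returned vector $\bw_{i-1}/\|\bw_{i-1}\|$ satisfies the claimed accuracy bound. Second, I would show that under the six good events the early-stop line must fire within $T$ iterations, so the algorithm cannot return FAIL.

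The first part is short. If the algorithm returns at iteration $i$, then by Line~\ref{step:check-early-stop} we have $|M_i| + \nu_i < 0.3\alpha B$, and $E_{\text{mistake-noise}}$ gives $\nu_i \geq -0.1\alpha B$, so $|M_i| < 0.4\alpha B$. By the contrapositive of $E_{\text{sampled-mistake}}$ this forces $\err^{\bX}_{\gamma'}(\bw_{i-1}/\|\bw_{i-1}\|) \le 0.5\alpha$, as required.

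For the second part, suppose for contradiction that the early-stop condition fails in every iteration $i\in[T]$. Then $|M_i|+\nu_i \ge 0.3\alpha B$ together with $\nu_i\le 0.1\alpha B$ gives $|M_i|\ge 0.2\alpha B$ in every iteration. I will follow the classical perceptron convergence template, tracking $P_i := \langle \bw^*, \bw_i\rangle$ and $N_i := \|\bw_i\|$, and derive a contradiction with Cauchy--Schwarz ($P_i \le N_i$, since $\|\bw^*\|=1$). For the lower bound on $P_i$, realizability with margin $\gamma$ yields $\langle \bw^*, y\bx\rangle \ge \gamma$ for every sample in $\bX$, hence $\langle \bw^*, \bu_i\rangle \ge \gamma|M_i|$; combined with $E_{\text{opt-noise}}$ and with $|M_i|\ge 0.2\alpha B$ used to absorb the additive noise $0.01\alpha\gammagap B$ into a small multiple of $\gammagap|M_i|$, this gives $P_i - P_{i-1} \ge (\gamma - 0.05\gammagap)|M_i|$. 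For the upper bound on $N_i$, I expand
\[
N_i^2 = N_{i-1}^2 + 2\langle \bw_{i-1},\bu_i\rangle + \|\bu_i\|^2 + 2\langle \bw_{i-1}+\bu_i,\bg_i\rangle + \|\bg_i\|^2,
\]
then use the margin check in Line~\ref{step:margin-check} ($\langle \bw_{i-1},\bu_i\rangle \le \gamma'|M_i|N_{i-1}$), $E_{\text{batch-size}}$ ($\|\bu_i\|^2\le|M_i|^2\le 1.5B|M_i|$), $E_{\text{parallel}}$, and $E_{\text{noise-norm}}$ ($\|\bg_i\|^2\le \alpha B^2$). Grouping the resulting terms and once again exploiting $|M_i| \ge 0.2\alpha B$ to convert $\alpha B$ factors into $|M_i|$ factors yields a bound of the shape $N_i^2 \le N_{i-1}^2 + 2(\gamma'+0.05\gammagap)|M_i|N_{i-1} + O(B|M_i|)$. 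I would then prove by induction on $i$ that $N_i \le (\gamma'+0.1\gammagap)\sum_{j\le i}|M_j| + K\cdot B/\gammagap$ for a large enough absolute constant $K$. Combining this with $P_i \le N_i$, the lower bound on $P_T$, and $\sum_{j\le T}|M_j|\ge 0.2\alpha B T$ gives, after simplification, $\Omega(\alpha\gammagap B T) \le O(B/\gammagap)$, i.e., $T \le O(1/(\alpha\gammagap^2))$, contradicting the hypothesis $T = \lceil 1500/(\alpha\gammagap^2)\rceil$ once constants are tuned.

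The main obstacle is the inductive upper bound on $N_i$. The additive terms $|M_i|^2$ and $\|\bg_i\|^2\le \alpha B^2$ in the expansion of $N_i^2$ are not small compared to $2\gamma'|M_i|N_{i-1}$ when $N_{i-1}$ itself is small, which is precisely why the inductive hypothesis carries a baseline offset $K\cdot B/\gammagap$. Verifying the induction reduces to checking that once $N_{i-1}\ge K\cdot B/\gammagap$, the slack built into the inflated coefficient $(\gamma'+0.1\gammagap)$ multiplied by $|M_i|N_{i-1}$ dominates the stray $O(B|M_i|)$ term; this determines $K$ as an absolute constant and is the step where I would be most careful with the arithmetic.
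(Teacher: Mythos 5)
Your proposal is correct and follows essentially the same argument as the paper: the early-stop step via the contrapositive of $E_{\text{sampled-mistake}}$ is identical, and the contradiction argument tracks $\left<\bw^*, \bw_i\right>$ against $\|\bw_i\|$ using the good events in exactly the same places, yielding the same $m_T \gtrsim \alpha B T$ versus $m_T \lesssim B/\gammagap^2$ clash. The only difference is bookkeeping in the norm bound --- you run an induction carrying an additive offset $K \cdot B/\gammagap$, while the paper divides by $\|\bw_{i-1}\|$ and case-splits on $\|\bw_{i-1}\| \geq 100B/\gammagap$ --- and your constants do work out (taking $K$ a modest multiple of the constant in the $O(B|M_i|)$ term suffices).
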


\begin{proof}
We will show that we always execute Line~\ref{step:return}. Once this is the case, $E_{\text{mistake-noise}}$ and $E_{\text{sampled-mistake}}$ imply that the output $\bw_i / \|\bw_i\|$ satisfies $\err^{\bX}_{\gamma'}(\bw_i / \|\bw_i\|) \leq 0.5\alpha$ as desired. 

To prove that we execute Line~\ref{step:return}, let us assume for the sake of contradiction that this is not the case, i.e., that the algorithm continues until reaching the end of the $T$-th iteration.

From our assumption that $E_{\text{mistake-noise}}$ occurs and from the fact that Line~\ref{step:return} was not executed, we have $|M_i| \geq 0.2 \alpha B$ for all $i \in [T]$. Let $m_i := \sum_{j \in [i]} |M_i|$ denote the number of $\gamma'$-margin mistakes seen up until the end of the $i$-th iteration; from the previous bound on $M_i$, we have
\begin{align} \label{eq:mistake-lb}
m_i \geq 0.2\alpha B i.
\end{align}

Now, notice that
\begin{align}
\left<\bw^*, \bw_T\right> 
&= \left<\bw^*, \left(\sum_{i \in [T]} \sum_{(\bx, y) \in M_i} y \cdot \bx \right) + \sum_{i \in [T]} \bg_i\right> \nonumber \\
&= \left(\sum_{i \in [T]} \sum_{(\bx, y) \in M_i} y \cdot \left<\bw^*, \bx\right> \right) + \sum_{i \in [T]} \left<\bw^*, \bg_i\right> \nonumber \\
(\text{From } \err^{\bX}_{\gamma'}(\bw^*) = 0 \text{ and } E_{\text{opt-noise}}) &\geq \left(\sum_{i \in [T]} \sum_{(\bx, y) \in M_i} \gamma \right) + \sum_{i \in [T]} -0.01\alpha \gammagap B \nonumber \\
&= m_T \gamma -0.01\alpha \gammagap B T \nonumber \\
&\overset{\eqref{eq:mistake-lb}}{\geq} m_T (\gamma - 0.05 \gammagap). \label{eq:dot-prod-bound}
\end{align}

Furthermore, for every $i \in [T]$, we have that
\begin{align}
\|\bw_i\|^2 &= \|\bw_{i - 1} + \bu_i + \bg_i\|^2 \nonumber \\
&= \|\bw_{i - 1} + \bu_i\|^2 + 2\left<\bw_{i - 1} + \bu_i, \bg_i\right> + \|\bg_i\|^2 \nonumber \\
(\text{From } E_{\text{parallel}}) &\leq \|\bw_{i - 1} + \bu_i\|^2 + 0.02 \alpha \gammagap B \cdot \|\bw_{i - 1} + \bu_i\| + \|\bg_i\|^2 \nonumber \\
(\text{From } E_{\text{noise-norm}}) &\leq \|\bw_{i - 1} + \bu_i\|^2 + 0.02 \alpha \gammagap B \cdot \|\bw_{i - 1} + \bu_i\| + \alpha B^2 \nonumber \\
&\leq \|\bw_{i - 1}\|^2 + 2\left<\bw_{i - 1}, \bu_i\right> + \|\bu_i\|^2 + 0.02 \alpha \gammagap B \cdot \left(\|\bw_{i - 1}\| + \|\bu_i\|\right) + \alpha B^2 \label{eq:norm-intermediate}
\end{align}

We can bound $\left<\bw_{i - 1}, \bu_i\right>$ as follows:
\begin{align*}
\left<\bw_{i - 1}, \bu_i\right> = \sum_{(\bx, y) \in M_i} y \cdot \left<\bw_{i - 1}, \bx\right> \leq |M_i| \cdot \gamma' \|\bw_{i - 1}\|,
\end{align*}
where the inequality follows from the condition on Line~\ref{step:margin-check}.

Furthermore, we also have that
\begin{align*}
\|\bu_i\| = \left\|\sum_{(\bx, y) \in M_i} y \cdot \bx\right\| \leq \sum_{(\bx, y) \in M_i} \|\bx\| \leq |M_i|.
\end{align*}

Plugging the above two inequalities into~\eqref{eq:norm-intermediate}, we get
\begin{align*}
\|\bw_i\|^2 &\leq \|\bw_{i - 1}\|^2 + \left(2|M_i| \gamma' + 0.02 \alpha \gammagap B\right) \cdot \|\bw_{i - 1}\| + (|M_i|^2 + 0.02 \alpha \gammagap B |M_i| + \alpha B^2) \\
&\leq \|\bw_{i - 1}\|^2 + \left(2|M_i| \gamma' + 0.02 \alpha \gammagap B\right) \cdot \|\bw_{i - 1}\| + (|M_i|^2 + 0.02B \cdot |M_i| + \alpha B^2) \\
&\leq \|\bw_{i - 1}\|^2 + \left(2|M_i| \gamma' + 0.02 \alpha \gammagap B\right) \cdot \|\bw_{i - 1}\| + 2B|M_i| + \alpha B^2,
\end{align*}
where in the last inequality we use the fact that $|M_i| \leq 1.5B$ which follows from $E_{\text{batch-size}}$.

The above inequality implies that
\begin{align*}
\|\bw_i\| \leq \|\bw_{i - 1}\| + \left(|M_i| \gamma' + 0.01 \alpha \gammagap B\right) + \frac{B|M_i| + 0.5 \alpha B^2}{\|\bw_{i - 1}\|}.
\end{align*}
Notice that when $\|\bw_{i - 1}\| \geq \frac{100 B}{\gammagap}$, we have that
\begin{align*}
\|\bw_i\| &\leq \|\bw_{i - 1}\| + \left(|M_i| \gamma' + 0.01 \alpha \gammagap B\right) + 0.01 |M_i| \gammagap + 0.01 \alpha \gammagap B \\
&= \|\bw_{i - 1}\| + 0.02 \alpha \gammagap B + \left(\gamma' + 0.01 \gammagap\right) \cdot |M_i|.
\end{align*}
As a result, we get\footnote{Note that the first term $\frac{200 B}{\gammagap}$ comes from an observation that if $i_0$ is the smallest index for which $\|\bw_{i_0}\| \geq \frac{100 B}{\gammagap}$, then~\eqref{eq:norm-intermediate} implies that $\|\bw_{i_0}\|_0 \leq \frac{200B}{\gammagap}$.}
\begin{align}
\|\bw_T\| 
&\leq \frac{200 B}{\gammagap} + 0.02 \alpha \gammagap B T + \left(\gamma' + 0.01 \gammagap\right) \cdot \left(\sum_{i \in [T]} |M_i|\right) \nonumber \\
&= \frac{200 B}{\gammagap} + 0.02 \alpha \gammagap B T + \left(\gamma' + 0.01 \gammagap\right) \cdot m_T \nonumber \\
&\overset{\eqref{eq:mistake-lb}}{\leq} \frac{200 B}{\gammagap} + \left(\gamma' + 0.11 \gammagap\right) \cdot m_T. \label{eq:norm-final}
\end{align}

From~\eqref{eq:dot-prod-bound} and~\eqref{eq:norm-final}, we have
\begin{align*}
m_T (\gamma - 0.05 \gammagap) \leq \frac{200 B}{\gammagap} + m_T\left(\gamma' + 0.11 \gammagap\right),
\end{align*}
which implies that
\begin{align*}
m_T &\leq \frac{200 B}{\gammagap\left(\gamma - \gamma' - 0.16\gammagap\right)} \\
&< \frac{200 B}{0.8 \gammagap^2} \\
&= 250 B / \gammagap^2,
\end{align*}
which contradicts~\eqref{eq:mistake-lb} and our choice of $T = \lceil \frac{1500}{\alpha \gammagap^2} \rceil$.
\end{proof}

\subsection{Privacy Analysis}
\label{subsec:perceptron-privacy}

\begin{lemma} \label{lem:perceptron-dp}
For any $\eps, \delta \in (0, 1)$ and any $T \in \N$, let $p = \frac{1}{\sqrt{T}}, \sigma = \frac{100\ln(T/\delta)}{\eps}$ and $b = \frac{100\sqrt{\ln(T/\delta)}}{\eps}$. Then, Algorithm~\ref{alg:dp-batch-perceptron} is $(\eps, \delta)$-DP.
\end{lemma}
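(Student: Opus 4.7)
The plan is a standard ``noise calibration plus Poisson subsampling plus advanced composition'' argument in the DP-SGD blueprint of~\cite{BassilyST14, abadi2016deep}. The key sensitivity observation is that, conditioning on the prior iterate $\bw_{i-1}$ and on the realization of the subsample $S_i$, adding or removing a single example from the dataset changes $|M_i|$ by at most $1$ and changes $\bu_i$ by at most $1$ in $L_2$-norm (since each $\|\bx\| \leq 1$ and $|y| = 1$); the update $\bw_i - \bw_{i-1} = \bu_i + \bg_i$ therefore has $L_2$-sensitivity at most $1$. Consequently, on its own the Laplace-perturbed count $|M_i| + \nu_i$ is $(1/b)$-DP and the Gaussian-perturbed increment is $(\eps_G, \delta_G)$-DP whenever $\sigma \geq \sqrt{2 \ln(1.25/\delta_G)}/\eps_G$.

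First I would set $\delta_G := \delta/(4T)$ and combine the two releases within one iteration by basic composition, obtaining a single $(\eps_0, \delta_G)$-DP guarantee with
\[
\eps_0 \;=\; \frac{1}{b} + O\!\left(\frac{\sqrt{\ln(T/\delta_G)}}{\sigma}\right) \;=\; O\!\left(\frac{\eps}{\sqrt{\ln(T/\delta)}}\right),
\]
under the prescribed $\sigma = 100\ln(T/\delta)/\eps$ and $b = 100\sqrt{\ln(T/\delta)}/\eps$. By post-processing, this bound also covers the halting decision on Line~\ref{step:check-early-stop} (which is a deterministic function of the noisy count), and hence it governs the entire per-iteration transcript.

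Next, because each example is independently included in $S_i$ with probability $p$, Poisson-subsampling privacy amplification (e.g.,~\cite{BalleBG18}) upgrades this to a per-iteration guarantee of $(O(p \eps_0), p \delta_G)$-DP, which for $p = 1/\sqrt{T}$ equals $(O(\eps / \sqrt{T \ln(T/\delta)}), \delta/(4 T^{3/2}))$-DP. Finally, applying the advanced composition theorem~\cite{DworkRV10} across the $T$ adaptively composed iterations with slack parameter $\delta/2$ gives total privacy loss bounded by
\[
\sqrt{2 T \ln(2/\delta)} \cdot O\!\left(\tfrac{\eps}{\sqrt{T \ln(T/\delta)}}\right) + T \cdot O\!\left(\tfrac{\eps^2}{T \ln(T/\delta)}\right) \;=\; O(\eps),
\]
with total failure probability at most $\delta/2 + T \cdot \delta/(4 T^{3/2}) \leq \delta$. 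Choosing the constant $100$ in $\sigma$ and $b$ large enough forces the leading $O(\eps)$ term to be at most $\eps$, yielding the claimed $(\eps, \delta)$-DP guarantee.

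The main care required is in handling adaptivity: $\bw_{i-1}$ is a function of the outputs of all earlier iterations (including the noisy counts that determined whether the algorithm halted), and $S_i$ is freshly resampled each round. This is precisely why the sensitivity claims above are stated as worst-case bounds conditional on the entire prior transcript and on the realization of $S_i$---the regime in which adaptive advanced composition and Poisson-subsampling amplification apply without modification. No additional structural argument is needed beyond invoking these standard tools.
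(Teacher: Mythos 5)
Your proposal is correct and follows essentially the same route as the paper's proof: bound the per-iteration $L_2$/$L_1$ sensitivity by $1$, treat the Gaussian update and Laplace count as a basic composition within an iteration, amplify by Poisson subsampling with $p = 1/\sqrt{T}$ via~\citep{BalleBG18}, and finish with advanced composition~\citep{DworkRV10} over the $T$ adaptive iterations. The only differences are cosmetic (your $\delta$-budget split and $O(\cdot)$ bookkeeping versus the paper's explicit choice $\eps_0 = \eps/(20\sqrt{\ln(T/\delta)})$, $\delta_0 = \delta/(2\sqrt{T})$), so no further changes are needed.
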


To prove this, we require the following the results on amplification by subsampling\footnote{Amplification by subsampling results are often stated with the new $\eps$ being $\ln(1 + p(e^{\eps} - 1))$ which is no more than $2p\eps$ (from Bernoulli's inequality and from $1 + x \leq e^x$ for all $x \in \R$).} and advanced composition.

\begin{lemma}[Amplification by Subsampling~\citep{BalleBG18}] \label{lem:subsampling-dp}
Let $\A$ be any $(\eps_0, \delta_0)$-DP algorithm such that $\eps_0, \delta_0 \in (0, 1)$. Let $\bB$ be an algorithm that independently selects each input sample w.p. $p$ and runs $\A$ on this subsampled input dataset. Then, $\bB$ is $(2p\eps_0, p\delta_0)$-DP.
\end{lemma}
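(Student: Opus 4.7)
The plan is to reduce to a standard coupling-based argument on the subsampled-dataset distribution, then convert the tight ``privacy amplification'' bound $\ln(1 + p(e^{\eps_0}-1))$ into the looser but cleaner $2p\eps_0$ claimed in the lemma. By the add/remove definition of neighbors, I may assume without loss of generality that $\bX' = \bX \cup \{x\}$ for some record $x$ (the removal case is symmetric). Let $q_\bX$ denote the distribution over the subsampled multisets obtained by including each element of $\bX$ independently with probability $p$, and let $q_{\bX,+x}$ denote the distribution of the same subsample of $\bX$ with $\{x\}$ appended. Since the $x$-coin is independent of the coins for the other points, the distribution of the subsample of $\bX'$ is the mixture
\[
q_{\bX'} = p \cdot q_{\bX,+x} + (1-p) \cdot q_\bX.
\]
Crucially, any subsample drawn from $q_{\bX,+x}$ is a neighbor (in the add/remove sense) of the corresponding subsample drawn from $q_\bX$ under the natural coupling that shares the coins on the common records.

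For one direction of DP, fix any measurable output set $S$. Conditioning on the subsample of $\bX$ and applying $(\eps_0,\delta_0)$-DP of $\A$ to the ``add $x$'' neighbor pair, then taking expectation, gives
\[
\Pr[\A(q_{\bX,+x}) \in S] \leq e^{\eps_0}\Pr[\A(q_\bX) \in S] + \delta_0.
\]
Substituting this into the mixture decomposition of $q_{\bX'}$ yields
\[
\Pr[\bB(\bX') \in S] \leq \bigl(1 + p(e^{\eps_0}-1)\bigr) \Pr[\bB(\bX) \in S] + p\delta_0.
\]
A short calculation shows $e^{\eps_0} - 1 \leq 2\eps_0$ for $\eps_0 \in (0,1)$ (the function $f(t) = 2t - (e^t - 1)$ is concave, vanishes at $0$, and is positive at $1$), hence $1 + p(e^{\eps_0}-1) \leq e^{2p\eps_0}$, giving the desired bound in this direction.

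The reverse direction is the main technical step. Apply $(\eps_0,\delta_0)$-DP in the opposite direction to obtain $\Pr[\A(q_{\bX,+x}) \in S] \geq e^{-\eps_0}\bigl(\Pr[\A(q_\bX) \in S] - \delta_0\bigr)$, plug into the mixture identity, and rearrange to get
\[
\Pr[\bB(\bX) \in S] \leq \frac{\Pr[\bB(\bX') \in S] + p e^{-\eps_0}\delta_0}{1 - p(1 - e^{-\eps_0})}.
\]
The $\delta$-coefficient simplifies to at most $p$ because $pe^{-\eps_0} \leq p(1 - p(1-e^{-\eps_0}))$ iff $e^{-\eps_0}\leq 1$, which is trivial. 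For the multiplicative factor, since $1 - e^{-\eps_0} \leq \eps_0$, we have $1 - p(1-e^{-\eps_0}) \geq 1 - p\eps_0$, and the elementary inequality $1/(1 - u) \leq e^{2u}$ for $u \in [0, 1/2]$ upgrades this to $e^{2p\eps_0}$ (valid because $p\eps_0 \leq 1 \cdot 1 = 1$, and one handles the edge case by noting $e^{2p\eps_0}\geq e^{0}=1$ trivially bounds the trivial inequality when $p\eps_0$ is near $1$).

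The main obstacle is exactly this backward direction together with the numerical conversions, since the mixture decomposition is naturally asymmetric: the clean inequality falls out in the ``add $x$'' direction, but reversing it requires both careful algebra and the two elementary bounds $e^{\eps_0}-1 \leq 2\eps_0$ and $1/(1-u)\leq e^{2u}$, whose hypotheses $\eps_0 \in (0,1)$ and $\delta_0 \in (0,1)$ are used in an essential way. Everything else (measurability, the coupling, and the mixture identity) is routine once the add/remove neighbor convention is fixed.
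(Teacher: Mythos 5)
Your decomposition $q_{\bX'} = p\,q_{\bX,+x} + (1-p)\,q_{\bX}$, the coupling, and the entire forward direction are correct, and your write-up is genuinely more than the paper provides: the paper simply cites \citep{BalleBG18} for the tight bound $\eps' = \ln(1+p(e^{\eps_0}-1))$ and relegates the conversion $\ln(1+p(e^{\eps_0}-1)) \le p(e^{\eps_0}-1) \le 2p\eps_0$ to a footnote. The gap is in your backward direction, at the final numerical step. After the (correct) rearrangement you must show $\frac{1}{1-p(1-e^{-\eps_0})} \le e^{2p\eps_0}$, and your route is $1-e^{-\eps_0}\le\eps_0$ followed by $\frac{1}{1-u}\le e^{2u}$ with $u=p\eps_0$. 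That last inequality is false once $u\gtrsim 0.797$ (at $u=0.9$ the left side is $10$ while the right side is about $6.05$), and the lemma places no restriction on $p$ beyond its being a probability, so the regime where $p\eps_0$ is close to $1$ is not excluded. Your parenthetical about ``the edge case'' does not repair this: there is nothing trivial being bounded there, and $e^{2p\eps_0}\ge 1$ says nothing about $\frac{1}{1-p\eps_0}$, which blows up as $p\eps_0 \to 1$.

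The fix is to avoid the lossy intermediate bound $1-e^{-\eps_0}\le \eps_0$ and instead note that
\[
\bigl(1-p+pe^{-\eps_0}\bigr)\bigl(1-p+pe^{\eps_0}\bigr) = (1-p)^2 + p(1-p)\bigl(e^{\eps_0}+e^{-\eps_0}\bigr) + p^2 \ \ge\ \bigl((1-p)+p\bigr)^2 = 1,
\]
since $e^{\eps_0}+e^{-\eps_0}\ge 2$. Because $1-p(1-e^{-\eps_0}) = 1-p+pe^{-\eps_0}$, this gives $\frac{1}{1-p(1-e^{-\eps_0})} \le 1+p(e^{\eps_0}-1)$, which is exactly the multiplicative factor you already controlled in the forward direction; the same two elementary facts ($e^{\eps_0}-1\le 2\eps_0$ for $\eps_0\in(0,1)$ and $1+y\le e^y$) then finish the job uniformly over $p\in[0,1]$. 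With that substitution your argument is complete: the mixture identity, the conditioning/coupling step, the bound on the $\delta$-coefficient, and the forward direction all check out.
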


\begin{lemma}[Advanced Composition~\citep{DworkRV10}] \label{lem:advanced-composition}
Suppose that $\bB$ is an algorithm resulting from running an $(\eps_0, \delta_0)$-DP algorithm $T$ times (possibly adaptively), where $\eps_0, \delta_0 \in (0, 1)$. Then, $\bB$ is $(\eps', (T + 1)\delta_0)$-DP where
\begin{align*}
\eps' = \sqrt{2T \ln(1/\delta_0)} \cdot \eps_0 + 2 T \eps_0^2.
\end{align*}
\end{lemma}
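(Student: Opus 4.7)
The plan is to prove advanced composition by tracking the privacy loss random variable across the $T$ rounds and applying Azuma--Hoeffding on its martingale structure. Fix neighboring datasets $\bX \sim \bX'$ and write the output transcript as $\bY = (Y_1, \dots, Y_T)$, where $Y_i$ is produced by running the $i$-th (adaptively chosen) $(\eps_0,\delta_0)$-DP mechanism $M_i$ on $\bX$ (or $\bX'$) with access to $Y_{<i}$. Define the privacy loss in round $i$ by
\[
L_i(y_{\le i}) \;=\; \ln \frac{\Pr[M_i(\bX; y_{<i}) = y_i]}{\Pr[M_i(\bX'; y_{<i}) = y_i]},
\]
and let $L = \sum_{i=1}^T L_i$. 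The standard conversion (Kasiviswanathan--Smith) says that to prove $(\eps',\delta)$-DP of $\bB$, it suffices to show $\Pr_{\bY \sim \bB(\bX)}[L(\bY) > \eps'] \le \delta$, where $\delta = (T+1)\delta_0$ here.

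The first step I would carry out is the \emph{truncation lemma}: every $(\eps_0,\delta_0)$-DP mechanism $M$ satisfies that there is an event $E \subseteq \mathrm{range}(M)$ with $\Pr[M(\bX) \notin E],\Pr[M(\bX') \notin E] \le \delta_0$ such that the conditional distributions have pointwise log-ratio bounded by $\eps_0$. Applying this per round and taking a union bound over the $T$ rounds, the "bad" event that some round falls outside its good set has probability at most $T\delta_0$ under $\bB(\bX)$. Outside this bad event we may treat each $L_i$ as if it came from a pure $\eps_0$-DP mechanism, i.e.\ $|L_i| \le \eps_0$ almost surely (after conditioning on $y_{<i}$).

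Next I would bound the expected per-round loss. For pure $\eps_0$-DP with $\eps_0 \in (0,1)$, a short calculation using the inequality $p \ln(p/q) + (1-p)\ln((1-p)/(1-q)) \le \eps_0 (e^{\eps_0}-1)$ on the two adjacent output distributions gives
\[
\E\bigl[L_i \bigm| Y_{<i}\bigr] \;\le\; \eps_0(e^{\eps_0}-1) \;\le\; 2\eps_0^{2},
\]
where expectation is over $Y_i \sim M_i(\bX; Y_{<i})$. Summing over $i$ yields $\E[L] \le 2T\eps_0^{2}$, exactly the "second-order" term of $\eps'$. Now the sequence $Z_i := L_i - \E[L_i \mid Y_{<i}]$ forms a martingale difference sequence bounded in $[-2\eps_0, 2\eps_0]$ (after the truncation), so Azuma--Hoeffding gives
\[
\Pr\!\left[\sum_{i=1}^{T} Z_i \;>\; \sqrt{2T\ln(1/\delta_0)}\cdot\eps_0\right] \;\le\; \delta_0 .
\]
Combining with the expectation bound yields $L \le 2T\eps_0^{2} + \sqrt{2T\ln(1/\delta_0)}\,\eps_0 = \eps'$ except on an event of probability $\le \delta_0 + T\delta_0 = (T+1)\delta_0$, which is exactly the desired guarantee.

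The main obstacle, as always in advanced composition, is handling approximate DP cleanly: the $\delta_0$ slack prevents us from pointwise bounding the privacy loss, so we need the truncation step to reduce to a pure-DP analysis, and we must verify that the union bound over the $T$ good events composes with the Azuma failure probability to give exactly the claimed $(T+1)\delta_0$. A secondary care point is that adaptivity means each $M_i$ depends on $Y_{<i}$, so the truncation event in round $i$ is itself a random set depending on history; this is handled by applying the lemma pointwise for each realization of $Y_{<i}$ and then integrating, which is precisely what the martingale framework accommodates.
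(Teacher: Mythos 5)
The paper offers no proof of this lemma: it is quoted from Dwork--Rothblum--Vadhan (with $\delta'$ set to $\delta_0$ and $e^{\eps_0}-1\le 2\eps_0$ used to simplify the second-order term), so there is no in-paper argument to compare against. Your sketch reconstructs the standard DRV proof --- privacy-loss random variable, reduction from approximate to pure DP, a bound of $\eps_0(e^{\eps_0}-1)\le 2\eps_0^2$ on the conditional expectation of each round's loss, and a martingale concentration step --- and the bookkeeping $T\delta_0+\delta_0=(T+1)\delta_0$ is exactly right.

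Two steps would fail as literally written. First, the truncation lemma you invoke is not true in the stated form: for an $(\eps_0,\delta_0)$-DP pair $(P,Q)$ there need not exist an event $E$ with $\Pr_P[E^c],\Pr_Q[E^c]\le\delta_0$ on which the pointwise log-ratio is bounded by $\eps_0$; the definition only gives $P(A)-e^{\eps_0}Q(A)\le\delta_0$ for $A=\{P>e^{\eps_0}Q\}$, and $A$ itself can have large mass if the ratio exceeds $e^{\eps_0}$ only slightly. The correct tool is DRV's approximation lemma: there is a distribution $\tilde P$ with $d_{TV}(P,\tilde P)\le\delta_0$ such that $\tilde P$ and $Q$ are pointwise $e^{\eps_0}$-close, and one runs the entire argument on the modified mechanisms, paying $T\delta_0$ in total variation at the end. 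Second, the Azuma step as stated loses a factor of $2$: if the differences $Z_i$ are only known to satisfy $|Z_i|\le 2\eps_0$, Azuma--Hoeffding yields $\Pr[\sum_i Z_i>t]\le\exp(-t^2/(8T\eps_0^2))$, so failure probability $\delta_0$ requires $t=2\eps_0\sqrt{2T\ln(1/\delta_0)}$, not $\eps_0\sqrt{2T\ln(1/\delta_0)}$. To reach the claimed constant you must observe that, conditionally on $Y_{<i}$, $L_i$ lies in an interval of \emph{length} $2\eps_0$ (namely $[-\eps_0,\eps_0]$), hence so does $Z_i$ after subtracting the nonnegative conditional mean, and then apply the range form of Azuma--Hoeffding, $\exp(-2t^2/\sum_i (b_i-a_i)^2)=\exp(-t^2/(2T\eps_0^2))$. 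Both repairs are standard, but without them the proof does not deliver the constants in the lemma.
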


\begin{proof}[Proof of Lemma~\ref{lem:perceptron-dp}]
Let $\eps_0 = \frac{\eps}{20\sqrt{\ln(T/\delta)}}$ and $\delta_0 = \frac{\delta}{2\sqrt{T}}$. The Gaussian mechanism with noise standard deviation $\sigma$ is $(0.5\eps_0, \delta_0)$-DP~\cite[Appendix A]{dwork2014algorithmic} whereas the Laplace mechanism with parameter $b$ is $0.5\eps_0$-DP~\citep{DworkMNS06}\footnote{In both cases, the $\ell_2$ sensitivity and the $\ell_1$ sensitivity respectively are bounded by one. For the former, this is because each sample effects $\bw$ only by $y \cdot \bx$ and $\|y \cdot \bx\|_2 = \|\bx\| \leq 1$.}. As a result, without subsampling, each iteration is $(\eps_0, \delta_0)$-DP. With the subsampling, Lemma~\ref{lem:subsampling-dp} implies that each iteration is $(2p\eps_0, p\delta_0)$-DP. Finally, we may apply Lemma~\ref{lem:advanced-composition} ensures that the final algorithm is $(\eps', \delta')$-DP for
\begin{align*}
\eps' = \sqrt{2 T \ln(1 / (p\delta))} \cdot (2p\eps_0) + 2T(2p \eps_0)^2 \leq \eps,
\end{align*}
and
\begin{align*}
\delta' = (T + 1)p\delta_0 \leq 2\sqrt{T} \delta_0 = \delta,
\end{align*}
which concludes our proof.
\end{proof}

\subsection{Putting Things Together}
\label{subsec:perceptron-finishing}

\begin{proof}[Proof of Theorem~\ref{thm:perceptron-apx-DP-simplified}]
Let $T = \lceil \frac{1500}{\alpha \gammagap^2} \rceil$ be as in Lemma~\ref{lem:perceptron-util}, and let $p = \frac{1}{\sqrt{T}}, \sigma = \frac{100\ln(T/\delta)}{\eps}$ and $b = \frac{100\sqrt{\ln(T/\delta)}}{\eps}$ be as in Lemma~\ref{lem:perceptron-dp}. Finally, let $n = \lceil \frac{100 \sqrt{d} \sigma \log T}{p \sqrt{\alpha}} + \frac{1000\sigma\sqrt{\log T}}{p \alpha \gamma} + \frac{100 \log T}{\alpha} + \frac{10^{10}}{\alpha^2 \gamma^2} \rceil$. Notice that $n = O_{\alpha}\left(\frac{1}{\eps\gamma}\left(\sqrt{d} + \frac{1}{\gamma}\right) \cdot (\log T)^2\right)$ as claimed.

From Lemma~\ref{lem:perceptron-dp}, our algorithm with the above parameters is $(\eps, \delta)$-DP. Furthermore, the expected running time of the algorithm is $pnT = O(n\sqrt{T}) = O\left(\frac{n}{\gamma\sqrt{\alpha}}\right)$. Moreover, it can be verified via standard concentration inequalities that all of the events required in Lemma~\ref{lem:perceptron-util} happens w.p. 0.99, which means that we output a halfspace $\bw$ with $\err_{\gamma'}^{\bX}(\bw) \leq 0.5\alpha$. Finally, the generalization bound (Lemma~\ref{lem:generalization} with $\hgamma = \gamma'$) implies that $\err_{0.9\gamma}^{\cD}(\bw) \leq \alpha$ as desired.
\end{proof}
\section{Additional Experiments}\label{app:experiment_details}
\newcommand{\hsigma}{\hat{\sigma}}
\newcommand{\hphi}{\hat{\phi}}
\newcommand{\hd}{\hat{d}}
\newcommand{\bp}{\mathbf{p}}

\subsection{Adversarial Robustness Evaluation on USPS Dataset}

\begin{figure*}[ht]
\minipage{0.33\textwidth}
\includegraphics[width=\linewidth]{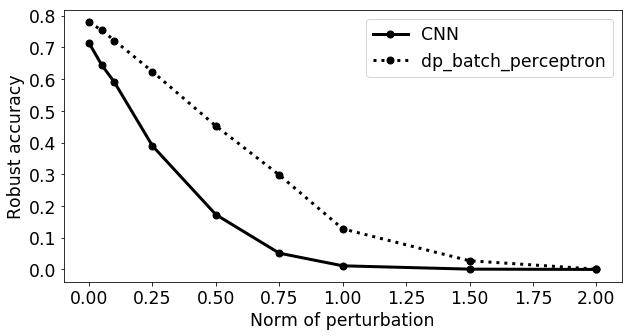}
\subcaption{$\eps = 0.5$}
\endminipage\hspace{-10pt}\hfill
\minipage{0.30\textwidth}
\includegraphics[width=1.1\linewidth]{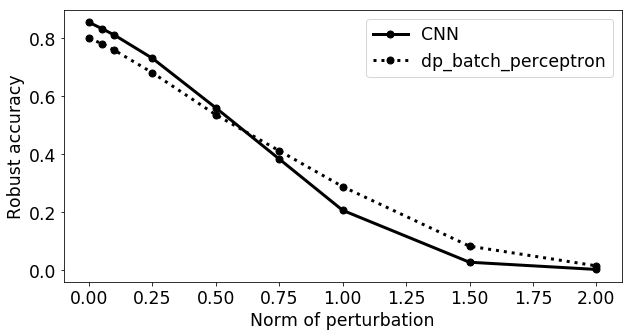}
\subcaption{$\eps = 1$}
\endminipage\hspace{+5pt}\hfill
\minipage{0.33\textwidth}
\includegraphics[width=\linewidth]{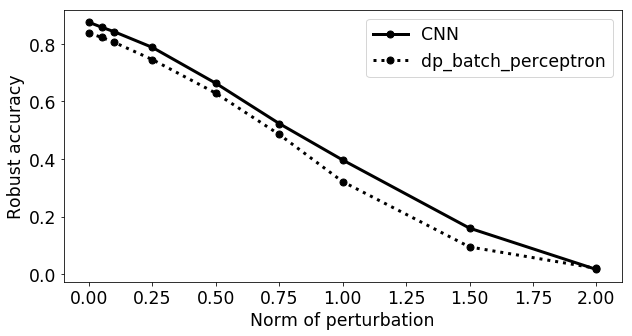}
\subcaption{$\eps = 2$}
\endminipage
\caption{Robustness accuracy comparison between DP-SGD-trained Convolutional neural networks and DP Batch Perceptron halfspace classifiers on USPS dataset for a fixed privacy budget. In all three plots, $\delta=10^{-5}$ but $\eps$ varies from 0.5, 1, and 2.}
\label{fig:robustness_usps}
\end{figure*}

We compare the robust accuracy of DP Batch Perceptron classifiers and DP-SGD-trained neural networks in Figure~\ref{fig:robustness_usps} for $\delta = 10^{-4}$ and $\eps = 0.5, 1, 2$.
The architecture and parameters follow the same setup described in Section~\ref{sec:exp}. In the case of $\eps = 0.5$, while both classifiers have similar test accuracies (without any perturbation, $\gamma=0$), as $\gamma$ increases, the robust accuracy rapidly degrades for the DP-SGD-trained neural network compared to that of the DP Batch Perceptron model. This overall trend persists for $\eps = 1$; the CNN starts off with larger test accuracy when $\gamma = 0$ but is eventually surpassed by the halfspace classifier as $\gamma$ increases. On the other hand, when $\eps = 2$, the CNN maintains slightly higher robust accuracy for most perturbation norms in consideration. 

\subsection{Experiments with Gaussian Kernel}

It is well-known that accuracy of linear classifiers for digit classifications can be significantly improved via kernel methods (see, e.g.,~\citep{Lecun98gradient-basedlearning,ScholkopfSBGNPV97}). Here we would like to privately train linear classifiers with Gaussian kernels. Recall that the Gaussian kernel is that of the form
\begin{align*}
k(\bx, \bx') = \exp\left(-\frac{\|\bx - \bx'\|^2}{2\hsigma^2}\right),
\end{align*}
where $\hsigma$ is the so-called \emph{width} parameter.

Unlike the standard (non-kernel) setting, it is unclear in the Gaussian kernel setting how the noise should be added to obtain DP; the kernel space themselves is not of finite dimension, and the classifier is typically only implicitly represented. To handle this, we follow the approach of \cite{RahimiR07} (also used in DP-SVM~\citep{rubinstein2009learning}). Specifically,~\cite{RahimiR07} shows that the following approximate embedding $\hphi: \R^d \to \bB^{2 \hd}$ has a property that $\left<\hphi(\bx), \hphi(\bx')\right>$ is close to $k(\bx, \bx')$:
\begin{align*}
\hphi(\bx) := \frac{1}{\sqrt{\hd}}\left(\cos(\left<\rho_1, \bx\right>, \dots, \cos(\left<\rho_{\hd}, \bx\right>, \sin(\left<\rho_1, \bx\right>), \dots, \sin(\left<\rho_{\hd}, \bx\right>)\right),
\end{align*}
where $\rho_1, \dots, \rho_{\hd}$ are i.i.d. sampled from $\cN(\bzero, \frac{1}{\hsigma^2} \cdot I_{d \times d})$. Below we write $\sigma^*$ to denote $1/\hsigma$.

To summarize, this approach allows us to train with (approximate) Gaussian kernel as follows (where $\sigma^*, \hd$ are hyperparameters):
\begin{enumerate}
\item Randomly sample $\rho_1, \dots, \rho_{\hd}$ i.i.d. from $\cN(\bzero, (\sigma^*)^2 \cdot I_{d \times d})$.
\item For each class $y$, use DP-Batch-Perceptron on $(\hphi(\bx_1), y_1), \dots, (\hphi(\bx_n), y_n)$ to train a halfspace $\bw^{(y)} \in \R^{2\hd}$ for the $y$-vs-rest classifier.
\item When we would like to predict $\bx \in \R^d$, compute $\argmax_{y \in \{1, \dots, 10\}} \left<\bw^{(y)}, \hphi(\bx)\right>$.
\end{enumerate}
Notice here that the DP guarantee (in the second step) is exactly the same as the DP-Batch-Perceptron guarantee for the non-kernel setting. Similar to Figure \ref{fig:mnist_usps}, we report the (non-robust) test accuracy of DP Batch Perceptron algorithm with Gaussian kernel included in Figure \ref{fig:mnist_usps_kernel}, across different $\eps$ values (first column) and different $\delta$ values (middle column). We find that kernel learning helps to boost performance overall, the gain in accuracy is particularly significant in the case of MNIST dataset (top row). 

\begin{figure}[ht]
\minipage{0.38\textwidth}
\includegraphics[width=\linewidth]{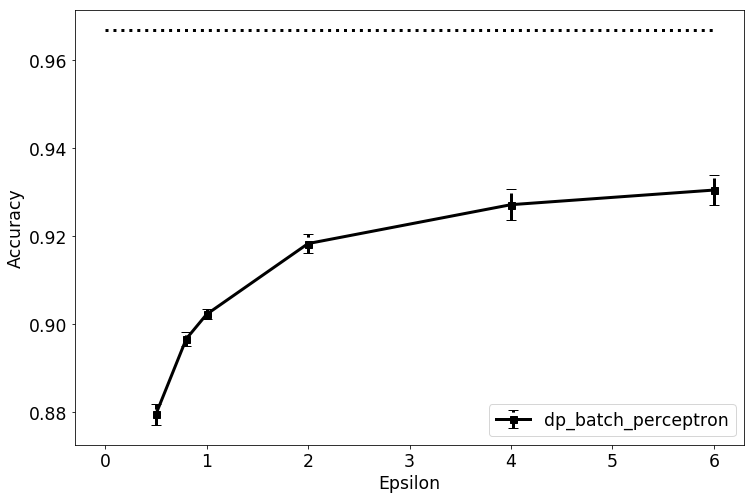}
\endminipage\hspace{-30pt}\hfill
\minipage{0.25\textwidth}
\includegraphics[width=1.1\linewidth]{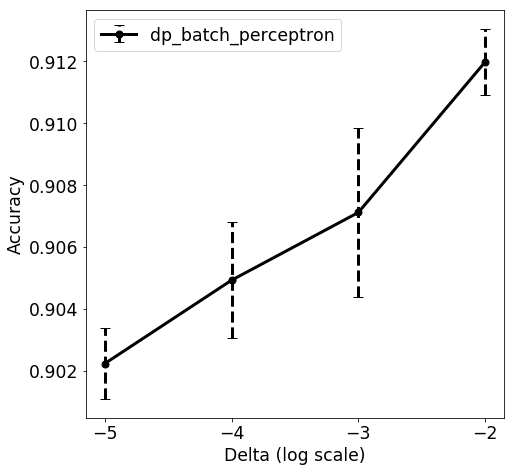}
\endminipage\hspace{-15pt}\hfill
\minipage{0.33\textwidth}
\includegraphics[width=\linewidth]{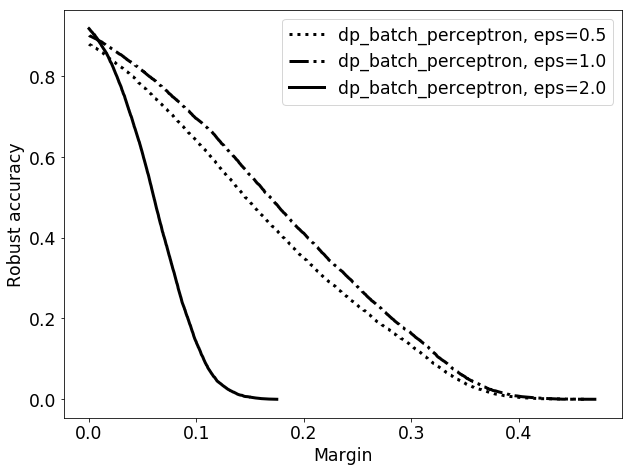}
\endminipage

\minipage{0.38\textwidth}
\includegraphics[width=\linewidth]{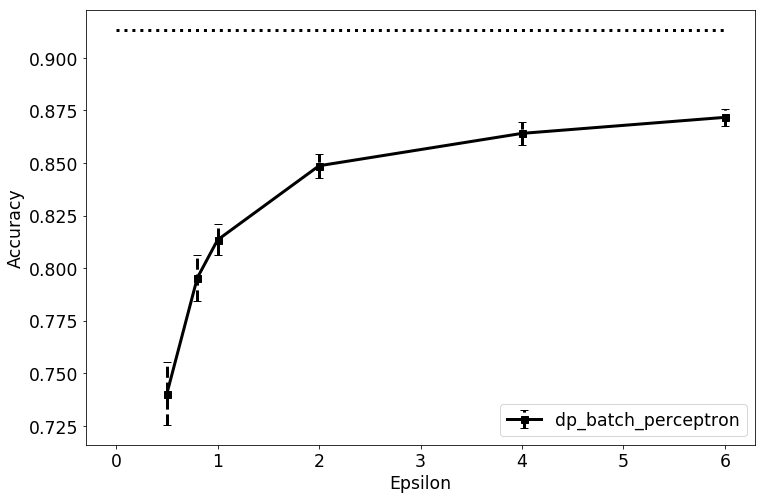}
\subcaption{Accuracy as $\eps$ varies}
\endminipage\hspace{-30pt}\hfill
\minipage{0.25\textwidth}
\includegraphics[width=1.1\linewidth]{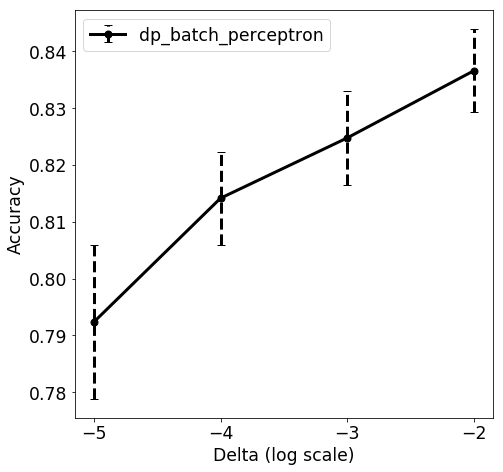}
\subcaption{Accuracy as $\delta$ varies}
\endminipage\hspace{-15pt}\hfill
\minipage{0.33\textwidth}
\includegraphics[width=\linewidth]{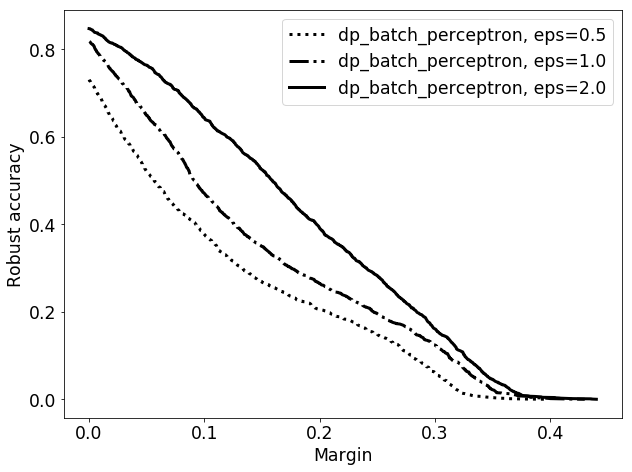}
\subcaption{Robust accuracy as $\eps$ varies}
\endminipage
\caption{Performance on the MNIST (top row) and USPS (bottom row) datasets with Gaussian kernel. The horizontal dotted line indicates performance when $\eps=\infty$ (no noise). The width of the kernel for both datasets is tuned as a hyperparameter with values 2, 3.5, 5, 7.5, 10.}
\label{fig:mnist_usps_kernel}
\end{figure}

\begin{figure*}[ht]
\minipage{0.33\textwidth}
\includegraphics[width=\linewidth]{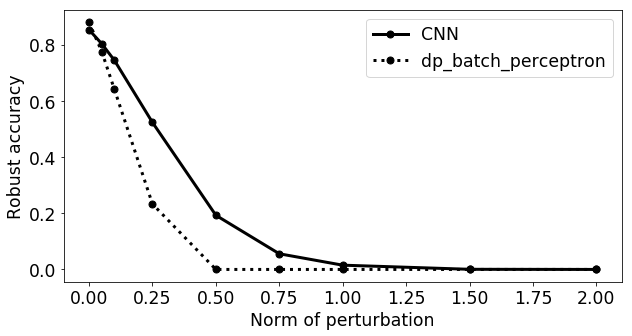}
\subcaption{$\eps = 0.5$}
\endminipage\hspace{-10pt}\hfill
\minipage{0.30\textwidth}
\includegraphics[width=1.1\linewidth]{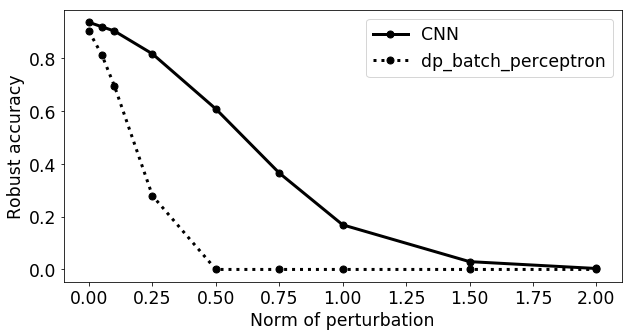}
\subcaption{$\eps = 1$}
\endminipage\hspace{+5pt}\hfill
\minipage{0.33\textwidth}
\includegraphics[width=\linewidth]{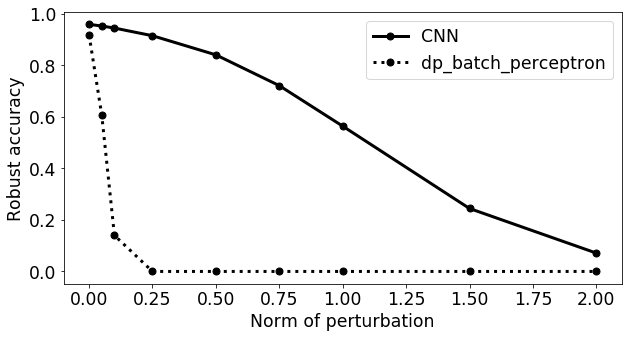}
\subcaption{$\eps = 2$}
\endminipage

\minipage{0.33\textwidth}
\includegraphics[width=\linewidth]{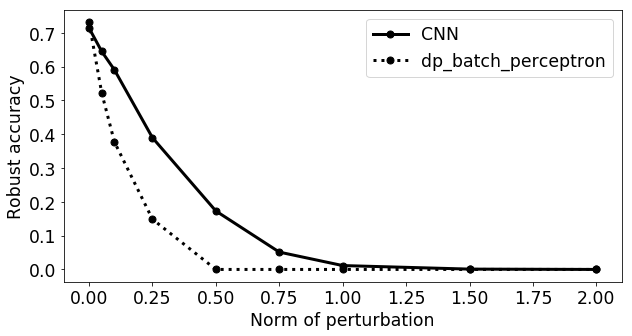}
\subcaption{$\eps = 0.5$}
\endminipage\hspace{-10pt}\hfill
\minipage{0.30\textwidth}
\includegraphics[width=1.1\linewidth]{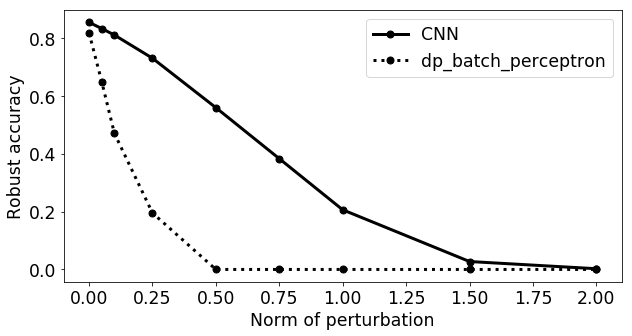}
\subcaption{$\eps = 1$}
\endminipage\hspace{+5pt}\hfill
\minipage{0.33\textwidth}
\includegraphics[width=\linewidth]{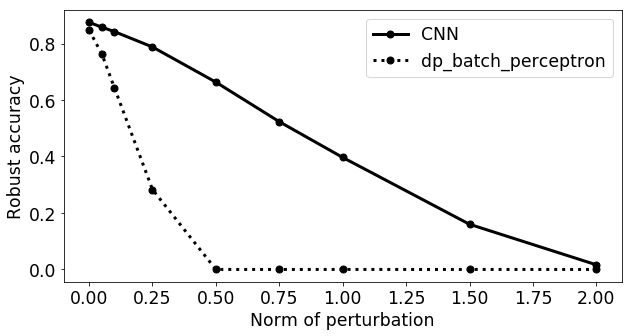}
\subcaption{$\eps = 2$}
\endminipage
\caption{Robustness accuracy comparison between DP-SGD-trained Convolutional neural networks and DP Batch Perceptron halfspace classifiers with Gaussian kernel on MNIST (top row) and USPS (bottom row) datasets for a fixed privacy budget. In all three plots, $\delta=10^{-5}$ but $\eps$ varies from 0.5, 1, and 2.}
\label{fig:robustness_mnist_usps_kernel} 
\end{figure*}

As for noise addition, the robustness guarantee for such kernel classifiers is also more complicated than the non-kernel linear classifiers. In Section~\ref{subsec:kernel-margin} below, we provide a \emph{provable} robustness guarantee of the kernel classifiers. Using this provable guarantee, the empirical robustness accuracy is shown in the last column of Figure \ref{fig:mnist_usps_kernel} and its comparison to DP-SGD-trained CNNs is shown in Figure~\ref{fig:robustness_mnist_usps_kernel}. Even though the kernel classifiers start off with similar accuracy (at $\gamma = 0$), it quickly drops and becomes worse than CNNs. We remark here that, in addition to the nature of the kernel, this may also be exacerbated by the fact that the provable robust guarantee for the kernel classifiers is \emph{not} tight (unlike the non-kernel case).

\subsubsection{Robustness Guarantee for Multi-Class Perceptron with Kernel}
\label{subsec:kernel-margin}

To compute the robust error for the kernel classifiers, we will use the following result, which is a slightly simplified version of Theorem 2.1 from~\citep{HeinA17}.

\begin{lemma} \label{lem:margin-general}
Let $M$ be a classifier which, for each class $y \in \{1, \dots, k\}$, computes some function $f^y: \R^d \to \R$ and predicts the class $y^*$ that minimizes $f^{y^*}$. Then, for every example $(\bx, y)$ and any $\Delta \in \R^d$ such that
\begin{align*}
\|\Delta\| \leq \min_{y' \ne y} \frac{f^y(\bx) - f^{y'}(\bx)}{\sup_{\bx' \in \R^d} \|\nabla f^y(\bx') - \nabla f^{y'}(\bx')\|},
\end{align*}
the classifier $M$ predicts $y$ on $\bx + \Delta$.
\end{lemma}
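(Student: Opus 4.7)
The plan is to reduce the multi-class claim to a collection of pairwise claims, one per alternative class $y' \ne y$, and then apply the fundamental theorem of calculus along the segment from $\bx$ to $\bx + \Delta$. Concretely, assuming the $f^y$'s are continuously differentiable (which holds for the kernelized halfspace classifiers of interest, since they are smooth combinations of sines and cosines of linear forms), I would define
\[
g_{y'}(t) := f^y(\bx + t\Delta) - f^{y'}(\bx + t\Delta), \qquad t \in [0,1],
\]
and write
\[
g_{y'}(1) \;=\; g_{y'}(0) \;+\; \int_0^1 \bigl\langle \nabla f^y(\bx + t\Delta) - \nabla f^{y'}(\bx + t\Delta),\, \Delta \bigr\rangle \, dt.
\]

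Next, by Cauchy--Schwarz the integrand is bounded in absolute value by $\|\Delta\|\cdot\|\nabla f^y(\bx + t\Delta) - \nabla f^{y'}(\bx + t\Delta)\|$, so setting $L_{y,y'} := \sup_{\bx' \in \R^d}\|\nabla f^y(\bx') - \nabla f^{y'}(\bx')\|$ gives $|g_{y'}(1) - g_{y'}(0)| \leq \|\Delta\|\cdot L_{y,y'}$. The hypothesis on $\|\Delta\|$ bounds this by the worst-case pairwise margin $|g_{y'}(0)|$, which means that $g_{y'}(1)$ has the same sign as $g_{y'}(0)$ for every $y' \ne y$. Taking this simultaneously over all $y' \ne y$, the class $y$ is still the extremizer of $f^{(\cdot)}(\bx + \Delta)$, so the classifier $M$ predicts $y$ on $\bx + \Delta$, as desired.

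The only real subtlety to be careful about is the sign convention: as stated the lemma says $M$ predicts the $y^*$ that \emph{minimizes} $f^{y^*}$, but for $f^y(\bx) - f^{y'}(\bx)$ to be a positive margin one actually wants $y$ to be the $\argmax$ (matching the prediction rule $\argmax_{y}\langle \bw^{(y)}, \bx\rangle$ we use for the halfspace classifiers in Section~\ref{sec:exp}). The mean-value argument above is identical in either convention — one just consistently flips inequalities — so this is purely a bookkeeping matter and not a real obstacle. Beyond this, the proof is essentially a one-line application of the FTC plus Cauchy--Schwarz, so I would not expect any genuine technical difficulty; the main value of the lemma is conceptual, in that it converts the provable robust radius into a quantity (the pairwise functional margin divided by the gradient-difference Lipschitz constant) that is tractable to evaluate for the Random-Fourier-Features kernel embedding $\hphi$ used in the experiments.
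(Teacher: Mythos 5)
Your proof is correct, and it is the canonical argument. Note that the paper does not prove this lemma at all: it is quoted as a slightly simplified version of Theorem~2.1 of Hein and Andriushchenko (2017), and your route --- write $g_{y'}(t) = f^y(\bx + t\Delta) - f^{y'}(\bx + t\Delta)$, apply the fundamental theorem of calculus along the segment, and bound the integrand via Cauchy--Schwarz --- is exactly the standard proof of that theorem, with the global supremum $\sup_{\bx'} \|\nabla f^y(\bx') - \nabla f^{y'}(\bx')\|$ playing the role of their (local, over a ball) Lipschitz constant. You are also right that ``minimizes'' in the statement is a sign slip for ``maximizes'': the lemma is applied with $f^y(\bx') = \left<\bw^{(y)}, \hat{\phi}(\bx')\right>$ and the $\argmax$ prediction rule, and under that convention your inequalities go through verbatim. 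The one point worth stating explicitly is the boundary case: from the non-strict hypothesis $\|\Delta\| \le \min_{y' \ne y}(\cdots)$ your argument only yields $f^y(\bx+\Delta) \ge f^{y'}(\bx+\Delta)$ for all $y' \ne y$, i.e., $y$ is \emph{a} maximizer and ties are possible; guaranteeing that $M$ outputs $y$ requires strict inequality, which is how the bound is actually used in Section~\ref{sec:exp} (the condition $\gamma < \min_{y' \ne y} \cdots$).
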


Note that this lemma is tight for the non-kernel case, leading to the margin formula $\gamma < \min_{y' \ne y} \frac{\left<\bw^{(y)}, \bx\right> - \left<\bw^{(y')}, \bx\right>}{\|\bw^{(y)} - \bw^{(y')}\|}$ that we used earlier.

Our kernel classifier is of the form in Lemma~\ref{lem:margin-general} with $f^y(\bx') := \left<\bw^{(y)}, \phi_{\rho_1, \dots, \rho_{\hd}}(\bx')\right>$. To apply the lemma, we first compute $\nabla f$:
\begin{align*}
\nabla f^y(\bx') = \frac{1}{\sqrt{\hd}} \cdot \sum_{i=1}^{\hd} \left(-w^{(y)}_i \cdot \sin(\left<\rho_i, \bx'\right>) + w^{(y)}_{\hd + i} \cdot \cos(\left<\rho_i, \bx'\right>)\right) \cdot \rho_i.
\end{align*}
As a result, for two classes $y, y'$, we have
\begin{align*}
\nabla f^y(\bx') - \nabla f^{y'}(\bx') = \frac{1}{\sqrt{\hd}} \cdot \sum_{i=1}^{\hd} \left((w^{(y')}_i - w^{(y)}_i) \cdot \sin(\left<\rho_i, \bx'\right>) + (w^{(y)}_{\hd + i} - w^{(y')}_{\hd + i}) \cdot \cos(\left<\rho_i, \bx'\right>)\right) \cdot \rho_i.
\end{align*}

In the following, we will give an upper bound on $\|\nabla f^y - \nabla f^{y'}\|$. Let $\Pi \in \R^{d \times \hd}$ resulting from concatenating $\rho_1, \dots, \rho_{\hd}$, and let $\bp \in \R^{\hd}$ denote the vector for which $p_i = \frac{1}{\sqrt{\hd}}(w^{y'}_i - w^y_i) \cdot \sin(\left<\rho_i, \bx'\right>) + (w^y_{\hd + i} - w^{y'}_{\hd + i}) \cdot \cos(\left<\rho_i, \bx'\right>)$. First, notice that
\begin{align*}
\nabla f^y - \nabla f^{y'} = \Pi\bp.
\end{align*}
Now, we may bound $\|\bp\|$ by
\begin{align*}
\|\bp\| &= \frac{1}{\sqrt{\hd}} \cdot \sqrt{\sum_{i=1}^{\hd} \left((w^{(y')}_i - w^{(y)}_i) \cdot \sin(\left<\rho_i, \bx'\right>) + (w^{(y)}_{\hd + i} - w^{(y')}_{\hd + i}) \cdot \cos(\left<\rho_i, \bx'\right>)\right)^2} \\
(\text{Cauchy–Schwarz inequality}) &\leq \frac{1}{\sqrt{\hd}} \cdot \sqrt{\sum_{i=1}^{\hd} \left((w^{(y')}_i - w^{(y)}_i)^2 + (w^{(y)}_{\hd + i} - w^{(y')}_{\hd + i})^2\right)\left(\sin(\left<\rho_i, \bx'\right>)^2 + \cos(\left<\rho_i, \bx'\right>)^2\right)} \\
&= \frac{1}{\sqrt{\hd}} \cdot \sqrt{\sum_{i=1}^{\hd} \left((w^{(y')}_i - w^{(y)}_i)^2 + (w^{(y)}_{\hd + i} - w^{(y')}_{\hd + i})^2\right)} \\
&= \frac{1}{\sqrt{\hd}} \cdot \|\bw^{(y)} - \bw^{(y')}\|
\end{align*}
As a result, we have
\begin{align*}
\|\nabla f^y(\bx') - \nabla f^{y'}(\bx')\| = \|\Pi\bp\| \leq \sigma_{\max}(\Pi) \cdot \frac{1}{\sqrt{\hd}} \cdot \|\bw^{(y)} - \bw^{(y')}\|,
\end{align*}
where $\sigma_{\max}(\Pi)$ denote the largest singular value of $\Pi$ (i.e. the operator norm of $\Pi$ with respect to $L_2$ norm).

Plugging this back into Lemma~\ref{lem:margin-general}, we can conclude that each example $(\bx, y)$ remaining correctly classifies up to perturbation norm of
\begin{align*}
\frac{\sqrt{\hd}}{\sigma_{\max}(\Pi)} \cdot \min_{y' \ne y} \frac{f^y(\bx) - f^{y'}(\bx)}{\|\bw^{(y)} - \bw^{(y')}\|}.
\end{align*}

\subsection{Comparison with Support Vector Machines (SVM)}

Previous work has introduced different approaches to preserving DP for SVM \citep{rubinstein2009learning}, or convex optimization algorithms in general \citep{feldman2020private, bassily2019private}. Our implementation of DP SVM uses DP SGD~\citep{abadi2016deep} with the standard hinge loss and $L_2$ weight regularization. The regularization strength is chosen from 1, 0.1, 0.01, 0.001, 0.0001, 0.00001, the learning rate from 1, 0.1, 0.01, 0.001, 0.0001. After summing gradients from each batch of data, we add appropriately calibrated Gaussian noise (again, based on Renyi DP) to the weights update.

Figures \ref{fig:mnist_usps_with_svm} and \ref{fig:mnist_usps_kernel_with_svm} compare performance of DP Batch Perceptron and DP SVM with and without kernel respectively. For experiments with varying $\eps$ (first column in both figures), we observe that DP Batch Perceptron outperforms DP SVM in most instances and achieves competitive accuracy on both datasets. For different $\delta$ values (second column) while keeping $\eps$ fixed at 1.0, the trend still holds to a large extent and both algorithms yield very similar test accuracy. The last column compares the robust accuracy of models trained via DP Batch Perceptron and DP SVM at $\eps = 1, 2$. In the case where no kernel is involved, the former yields better results on MNIST dataset but performs worse on USPS dataset. The opposite trend is observed when Gaussian kernel is included. 


\begin{figure}[ht]
\minipage{0.38\textwidth}
\includegraphics[width=\linewidth]{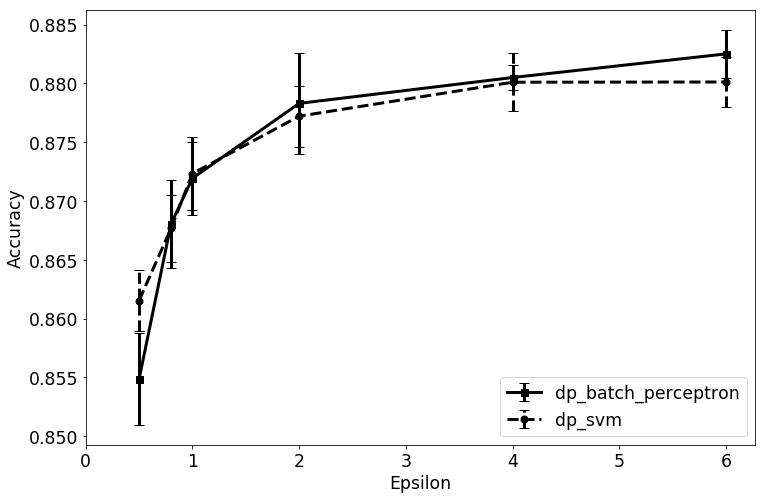}
\endminipage\hspace{-30pt}\hfill
\minipage{0.25\textwidth}
\includegraphics[width=1.1\linewidth]{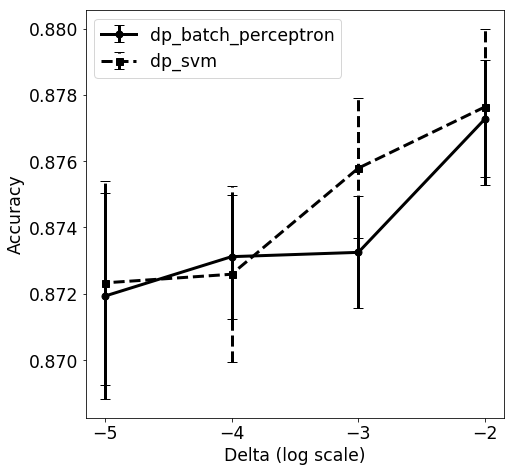}
\endminipage\hspace{-15pt}\hfill
\minipage{0.33\textwidth}
\includegraphics[width=\linewidth]{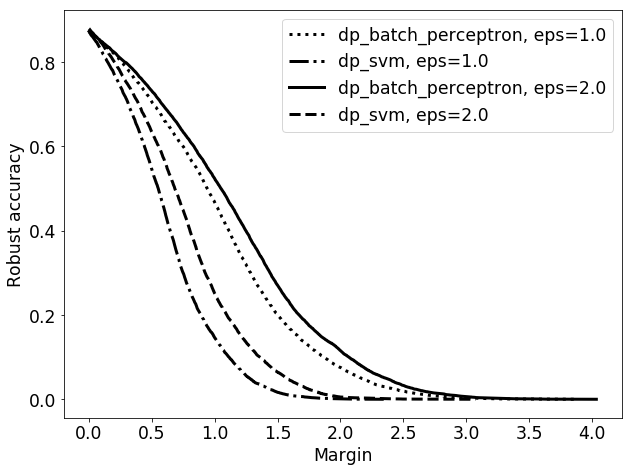}
\endminipage

\minipage{0.38\textwidth}
\includegraphics[width=\linewidth]{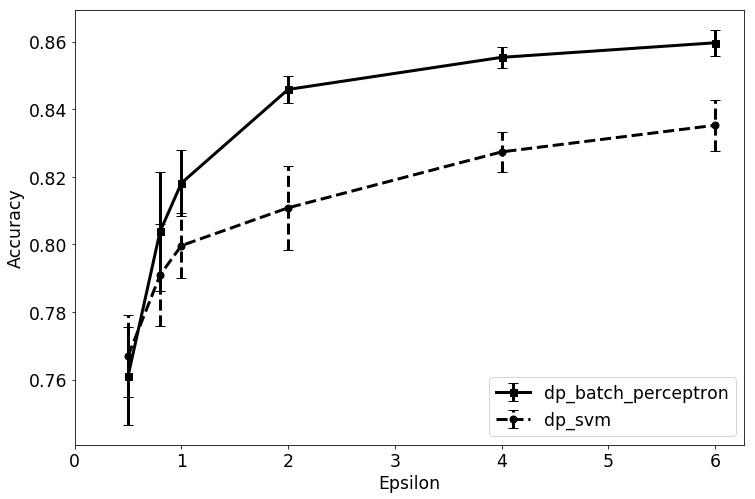}
\subcaption{Accuracy as $\eps$ varies}
\endminipage\hspace{-30pt}\hfill
\minipage{0.25\textwidth}
\includegraphics[width=1.1\linewidth]{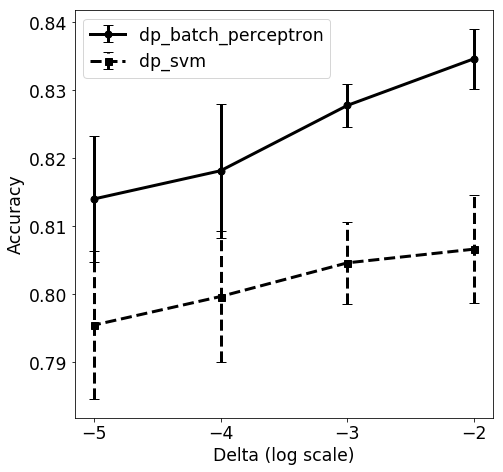}
\subcaption{Accuracy as $\delta$ varies}
\endminipage\hspace{-15pt}\hfill
\minipage{0.33\textwidth}
\includegraphics[width=\linewidth]{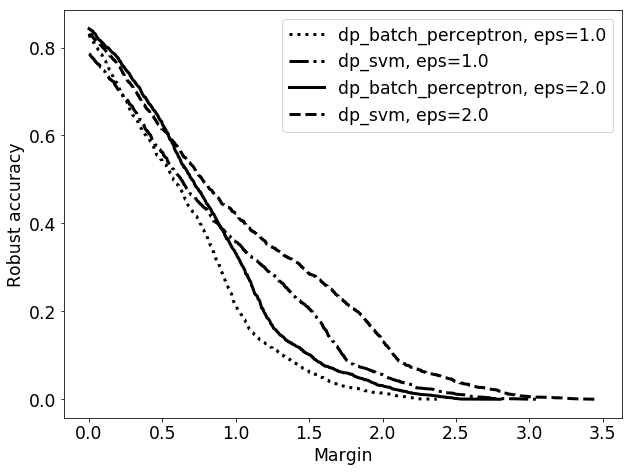}
\subcaption{Robust accuracy as $\eps$ varies}
\endminipage
\caption{Comparison of performance of DP Batch Perceptron vs DP SVM halfspace classifiers on the MNIST (top row) and USPS (bottom row) datasets, when no kernel is involved in the learning process.}
\label{fig:mnist_usps_with_svm}
\end{figure}

\begin{figure}[ht]
\minipage{0.38\textwidth}
\includegraphics[width=\linewidth]{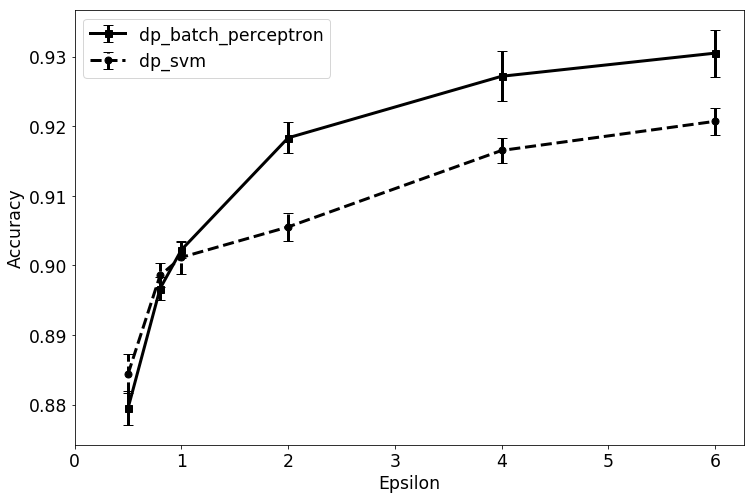}
\endminipage\hspace{-30pt}\hfill
\minipage{0.25\textwidth}
\includegraphics[width=1.1\linewidth]{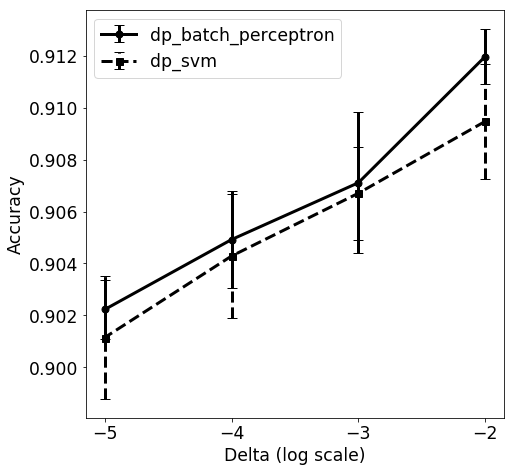}
\endminipage\hspace{-15pt}\hfill
\minipage{0.33\textwidth}
\includegraphics[width=\linewidth]{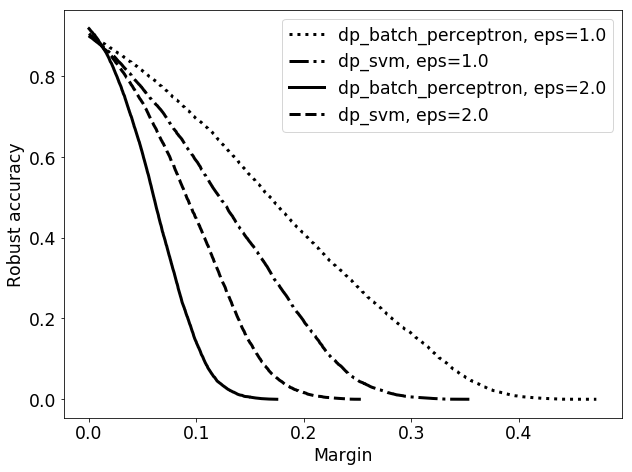}
\endminipage

\minipage{0.38\textwidth}
\includegraphics[width=\linewidth]{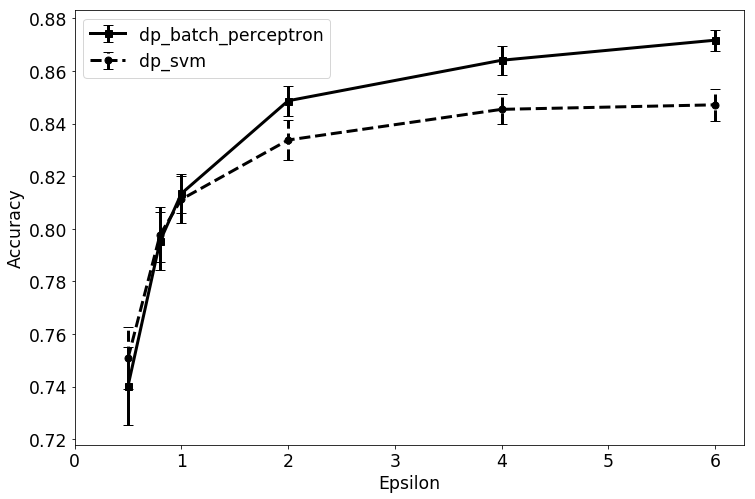}
\subcaption{Accuracy as $\eps$ varies}
\endminipage\hspace{-30pt}\hfill
\minipage{0.25\textwidth}
\includegraphics[width=1.1\linewidth]{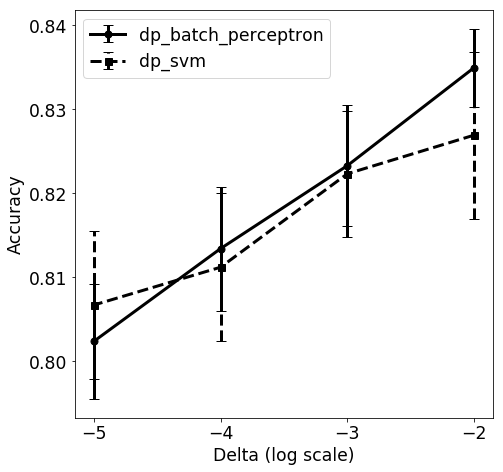}
\subcaption{Accuracy as $\delta$ varies}
\endminipage\hspace{-15pt}\hfill
\minipage{0.33\textwidth}
\includegraphics[width=\linewidth]{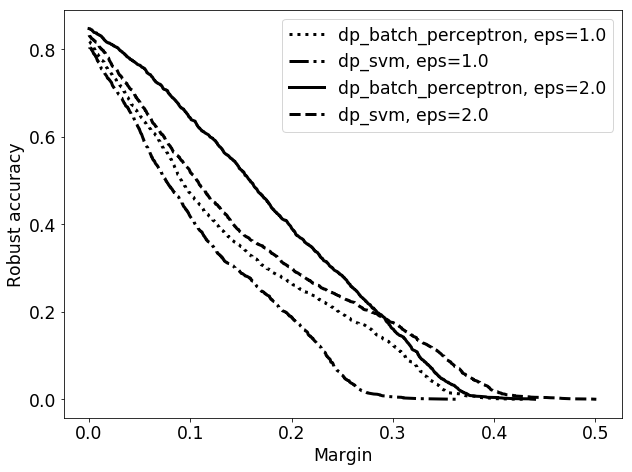}
\subcaption{Robust accuracy as $\eps$ varies}
\endminipage
\caption{Comparison of performance of DP Batch Perceptron vs DP SVM halfspace classifiers on the MNIST (top row) and USPS (bottom row) datasets, with Gaussian kernel.}
\label{fig:mnist_usps_kernel_with_svm}
\end{figure}

\end{document}


%

%

\onecolumn
\aistatstitle{Instructions for Paper Submissions to AISTATS 2021: \\
Supplementary Materials}

\section{FORMATTING INSTRUCTIONS}

To prepare a supplementary pdf file, we ask the authors to use \texttt{aistats2021.sty} as a style file and to follow the same formatting instructions as in the main paper.
The only difference is that the supplementary material must be in a \emph{single-column} format.
You can use \texttt{supplement.tex} in our starter pack as a starting point, or append the supplementary content to the main paper and split the final PDF into two separate files.

Note that reviewers are under no obligation to examine your supplementary material.

\section{MISSING PROOFS}

The supplementary materials may contain detailed proofs of the results that are missing in the main paper.

\subsection{Proof of Lemma 3}

\textit{In this section, we present the detailed proof of Lemma 3 and then [ ... ]}

\section{ADDITIONAL EXPERIMENTS}

If you have additional experimental results, you may include them in the supplementary materials.

\subsection{The Effect of Regularization Parameter}

\textit{Our algorithm depends on the regularization parameter $\lambda$. Figure 1 below illustrates the effect of this parameter on the performance of our algorithm. As we can see, [ ... ]}

\vfill